\documentclass{article}
\usepackage{microtype}
\usepackage{graphicx}
\usepackage{subfigure}
\usepackage{booktabs} 
\usepackage{hyperref}

\usepackage[accepted]{icml2024}
\usepackage{amsmath}
\usepackage{amssymb}
\usepackage{mathtools}
\usepackage{amsthm}
\usepackage[capitalize,noabbrev]{cleveref}
\pdfoutput=1
\theoremstyle{plain}
\newtheorem{theorem}{Theorem}[section]
\newtheorem{proposition}[theorem]{Proposition}
\newtheorem{lemma}[theorem]{Lemma}

\theoremstyle{definition}
\newtheorem{definition}[theorem]{Definition}
\newtheorem{assumption}[theorem]{Assumption}
\theoremstyle{remark}

\usepackage[textsize=tiny]{todonotes}

\usepackage{amsmath,amsfonts,bm}









\def\eqref#1{equation~\ref{#1}}









\def\1{\bm{1}}








\def\vk{{\bm{k}}}

\def\vm{{\bm{m}}}

\def\vq{{\bm{q}}}

\def\vu{{\bm{u}}}
\def\vv{{\bm{v}}}



\DeclareMathAlphabet{\mathsfit}{\encodingdefault}{\sfdefault}{m}{sl}
\SetMathAlphabet{\mathsfit}{bold}{\encodingdefault}{\sfdefault}{bx}{n}













\DeclareMathOperator*{\argmin}{arg\,min}

\usepackage{url}
\usepackage[utf8]{inputenc}
\usepackage{amsmath,dsfont}
\usepackage{cite}
\usepackage{amsfonts} 
\usepackage{mathtools , amssymb , amsthm } 
\usepackage{amsmath}
\usepackage{graphicx}
\usepackage{booktabs}
\usepackage{caption}
\usepackage{array}
\usepackage{pifont}
\usepackage{algorithm}
\usepackage{algorithmic}
\usepackage{custom}
\usepackage{multirow}
\usepackage{bbm}
\usepackage{bm}
\newcommand{\bP}{\mathbb{P}}

\renewcommand{\1}{\mathbbm{1}}

\renewcommand{\algorithmiccomment}[1]{\hfill\textcolor{black}{\\ $\triangleright$ \textit{#1}}}

\begin{document}

\twocolumn[
\icmltitle{Distribution-Free Fair Federated Learning with Small Samples}

\icmlsetsymbol{equal}{*}
\begin{icmlauthorlist}
\icmlauthor{Qichuan Yin}{equal,rm1,rm3}
\icmlauthor{Zexian Wang}{rm4}
\icmlauthor{Junzhou Huang}{rm2}
\icmlauthor{Huaxiu Yao}{rm1}
\icmlauthor{Linjun Zhang}{rm3}
\end{icmlauthorlist}
\icmlaffiliation{rm1}{UNC-Chapel Hill}
\icmlaffiliation{rm2}{University of Texas at Arlington}
\icmlaffiliation{rm3}{Rutgers University}
\icmlaffiliation{rm4}{University of Michigan, Ann Arbor}

\icmlcorrespondingauthor{Linjun Zhang}{linjun.zhang@rutgers.edu}
\icmlkeywords{Machine Learning, ICML}

\vskip 0.3in
]

\printAffiliationsAndNotice{\icmlEqualContribution} 

\begin{abstract}
As federated learning gains increasing importance in real-world applications due to its capacity for decentralized data training, addressing fairness concerns across demographic groups becomes critically important. However, most existing machine learning algorithms for ensuring fairness are designed for centralized data environments and generally require large-sample and distributional assumptions, underscoring the urgent need for fairness techniques adapted for decentralized and heterogeneous systems with finite-sample and distribution-free guarantees. To address this issue, this paper introduces FedFaiREE, a post-processing algorithm developed specifically for distribution-free fair learning in decentralized settings with small samples. Our approach accounts for unique challenges in decentralized environments, such as client heterogeneity, communication costs, and small sample sizes. We provide rigorous theoretical guarantees for both fairness and accuracy, and our experimental results further provide robust empirical validation for our proposed method.
\end{abstract}

\section{Introduction}
Federated learning (FL) is a machine learning technique that harnesses data from multiple clients to enhance performance. Notably, it accomplishes this without the need to centralize all the data on a single server \citep{Fedavg}. With the growing integration of FL in practical applications, \textit{fairness} is gaining prominence, especially in domains like healthcare \citep{healthcare,healthcare2} and smartphone technology \citep{smartphone,smartphone2}. 
However, applying existing fairness methods directly can be challenging, primarily because many of these methods were originally designed within a centralized framework. This can lead to poor performance or high communication costs when implementing them in real-world scenarios. 

To tackle the fairness challenges in the context of federated learning, recent research has introduced several techniques, including FairFed \citep{FairFed}, FedFB \citep{FedFB}, FCFL \citep{FCFL}, and AgnosticFair \citep{AgnosticFair}. These methods aim to enhance fairness by implementing debiasing at the local client level and fine-tuning aggregation weights on the server. However, despite their promise, these approaches face certain challenges. Firstly, as highlighted by \citet{tradeoff}, achieving global fairness by solely ensuring local fairness can prove elusive. In other words, ensuring fairness for all clients individually may not necessarily result in overall fairness across the federated system. Secondly, many existing methods assume an ideal scenario of infinite samples or struggle to guarantee fairness constraints in a \textit{distribution-free} manner, that is, without making any distributional assumptions. These drawbacks limit the wide use of the existing methods in real-world applications.
For example, when developing decision models across multiple hospitals or medical institutions, stringent privacy regulations and data access limitations often mean that only limited data can be utilized.

To address these concerns, this paper introduces FedFaiREE, a post-processing algorithm to achieve finite-sample and distribution-free fairness in federated learning. FedFaiREE provides a flexible framework that accounts for unique challenges presented by decentralized settings, including communication costs, client heterogeneity, client correlation, and small sample sizes. The core concept behind FedFaiREE involves the distributed utilization of order statistics to conform to fairness constraints and the selection of the classifier with the best accuracy among classifiers that meet the fairness constraints. 

\begin{figure}
    \centering
    \includegraphics[width=0.4\textwidth]{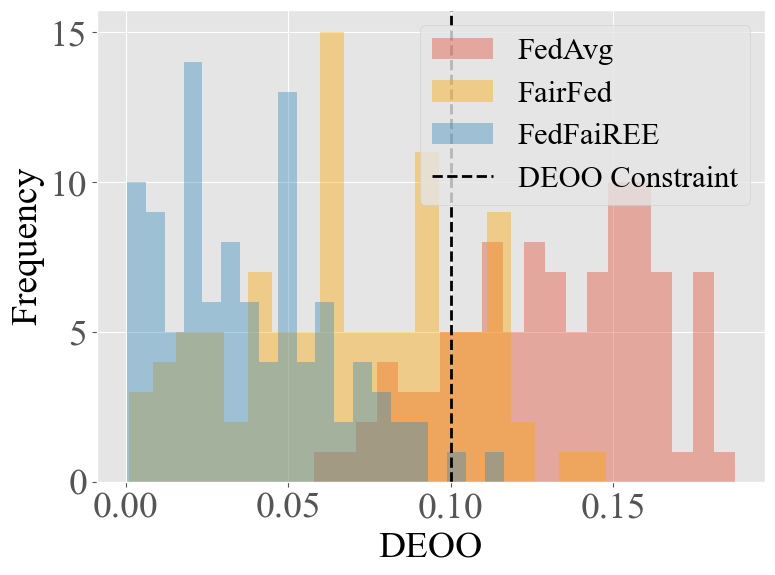}
    \caption{The distribution of $|DEOO|$ (a fairness metric, defined in Equation \ref{deoo}) for FedAvg \citep{Fedavg} and FairFed \citep{FairFed}, both with and without FedFaiREE, evaluated on the Adult dataset \citep{dua2017uci}. See Section \ref{sec:experiment} for experiment details.}\label{fig:teaser}
\end{figure}
Our primary contributions are three-fold: first, we introduce FedFaiREE, a simple yet highly effective approach to ensuring fairness constraints in scenarios with limited samples without any distributional assumptions; second, we provide theoretical guarantees that our method can achieve nearly optimal accuracy under fairness constraints when the input prediction function is suitable; third, empirically, as demonstrated in Figure \ref{fig:teaser}, we applied existing methods like FairFed \citep{FairFed} and FedAvg \citep{Fedavg} with and without FedFaiREE to the Adult dataset \citep{dua2017uci}. We found that existing algorithms are unable to effectively control fairness in real-world applications due to the small sample size in each client, while FedFaiREE shows promising performance by strictly satisfying the pre-defined fairness requirement.

\subsection{Additional Related Work}
Existing fairness methodologies in federated learning predominantly address two key aspects of fairness: fairness among clients and fairness among groups. 
The former aspect aims to ensure that the global model's performance across individual clients is equitable in terms of equality or contribution \citep{ditto,Lyu2020,yu2020fairness,huang2020efficiency}. In contrast, our primary focus in this paper revolves around the latter facet — fairness among groups \citep{Group}, also referred to as \textit{group fairness}, where the objective is to ensure equitable treatment across different sensitive labels, such as race and gender. 

\textbf{Existing Group Fairness Techniques.} Conventional approaches can be approximately divided into three categories \citep{groupfairness}: pre-processing methods that directly perform debiasing on input data \citep{zemel2013learning,johndrow2019algorithm}; in-processing methods that incorporate fairness metrics into model training as part of the objective function \citep{goh2016satisfying, cho2020fair}; post-processing methods that adjust model outputs to enhance fairness \citep{fairee,zeng2022bayes,fish2016confidence}. FaiREE \citep{fairee} is the first approach in the literature that achieves group fairness in a finite-sample and distribution-free manner. However, FaiREE is restricted to handling i.i.d. centralized data, while our proposed method is designed to address the challenges presented by decentralized settings, such as communication costs associated with updating local data and client heterogeneity. Under the setting of client heterogeneity, even if all training data are centralized, FaiREE will still encounter bias due to variations among different clients. In addition, our proposed method allows client correlation, while FaiREE requires independence among training samples. See a more detailed discussion in Section~\ref{sec:compare} of the Appendix. 

\textbf{Group Fairness Approaches in Federated Learning.}
In recent years, there has been a growing amount of work focusing on group fairness in the context of Federated Learning \citep{FairFed,FCFL,FedFB,AgnosticFair,rodriguez2021enforcing, chu2021FedFair, liang2020think,hu2022fair,papadaki2022minimax}. Most of these studies aim to either introduce fairness principles into the local updates, adapt conventional fairness methods, or perform reweighting during aggregation, or a combination of these strategies. Specifically, \citet{AgnosticFair} proposed AgnosticFair, a framework that utilizes kernel reweighing functions to adjust items in local objective functions, including both loss terms and fairness constraints. \citet{FedFB} introduced FedFB, a method that adapts Fair Batch, a centralized technique designed to improve fairness among groups by reweighting loss terms for different subgroups, for the FL setting. \citet{FairFed} proposed FairFed, an approach that adjusts aggregate weights by considering the disparities between local fairness metrics and the global fairness metric in each training round.
\section{Preliminaries}
In this paper, we address the problem of predicting a binary label, denoted by $Y$, using a set of features. The features are divided into two categories: $X$ and $A$. Here, $X \in \mathcal{X}$ represents non-sensitive features, while $A \in \mathcal{A} = \{0, 1,\cdots, A_0\}$ corresponds to sensitive features. A data point includes $(x, y, a)$, which corresponds to $(X,Y,A)$. For simplicity, we first introduce the concept of \textit{Score-based classifier} \citep{chen2018my,zafar2019fairness}. 
\begin{definition}(Score-based classifier)
        A score-based classifier is an indication function $\hat Y=\phi(x,a) = \1\{f(x,a) > c\}$ for a measurable score function $f: \mathcal{X} \times \mathcal{A} \rightarrow[0,1]$ and a constant threshold $c>0$. 
\end{definition}
To assess the fairness of the classifier, we introduce a fairness notion, Equality of Opportunity, which has been extensively utilized in the fairness literature.

\begin{definition}(Equality of Opportunity\citep{hardt2016equality})
A classifier satisfies Equality of Opportunity if it satisfies the same true positive rate among protected groups:
$\bP_{X \mid A=a, Y=1}(\widehat{Y}=1)=\bP_{X \mid A=0, Y=1}(\widehat{Y}=1),$
where $a \in \{1, \cdots, A_0\}$.
\label{def-eq}
\end{definition}

Equality of Opportunity focuses on ensuring an equal opportunity to be predicted as a true positive across different groups. However, in practice, achieving strict Equality of Opportunity is often too hard. Therefore, a tolerance parameter, denoted as $\alpha$, is commonly introduced in Equality of Opportunity, as discussed in prior works \citep{zeng2022bayes, fairee}. To be more specific, given a classifier $\phi$, the $\alpha$ difference tolerance in Equality of Opportunity within a binary group label can be defined as:
\begin{equation}
|\bP_{X \mid A=1, Y=1}(\widehat{Y}=1)-\bP_{X \mid A=0, Y=1}(\widehat{Y}=1)|\leq \alpha.
\end{equation}
To be concise, in later sections, we use $DEOO$ to represent the left side of the inequality, i.e.,
\begin{equation}
DEOO=\bP_{X \mid A=1, Y=1}(\widehat{Y}=1)-\bP_{X \mid A=0, Y=1}(\widehat{Y}=1).
\label{deoo}
\end{equation}
We are going to talk about the multi-group,multi-label vision of Equality of Opportunity tolerance in later sections.

\textbf{Notation.} To further simplify the formula in the article, we provide notations as follows:
$p_a$ signifies the probability of the sensitive attribute $A=a$, i.e., $P(A=a)$.
$p_{Y,a}$ represents the probability of label $Y=1$ given the sensitive attribute $A=a$, i.e., $P(Y=1 \mid A=a)$, and $q_{Y,a}$ is defined as $1-p_{Y,a}$.
$D$ and $D_i$ represent the datasets for all clients and client $i$, respectively, where $i$ belongs to the set $\{1, 2, \ldots, S\}$.
$n$ denotes the size of dataset $D$.
$T$ represents the ordered scores of elements in dataset $D$.
$D_i^{y,a}$ is used to denote the subset of dataset $D_i$ where $Y=y$ and $A=a$. Similar notations apply to $T^{y,a}$ and $n^{y,a}$.

\begin{figure*}[ht]
    \centering
    \includegraphics[width=0.8\textwidth]{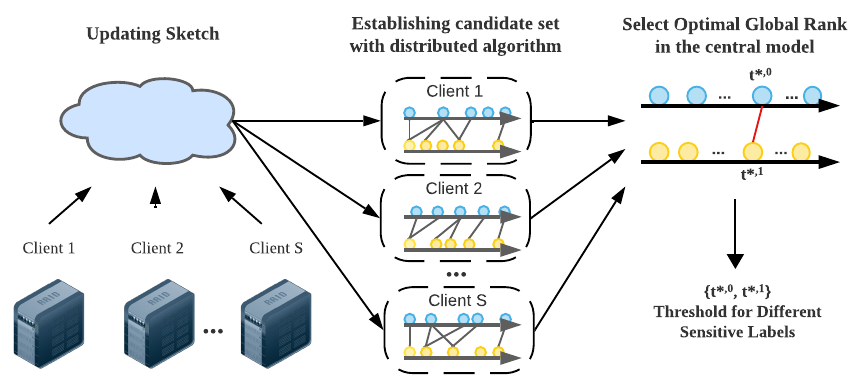}
    \caption{\textbf{Overview of FedFaiREE. } With S clients and a pre-trained model in consideration, each circle in the image symbolizes a datapoint score in the training set. The color of the circles represents different sensitive labels, while the gray edges depict local ranks of threshold pairs (each global classifier's threshold pair corresponds to S local ranks). Notably, the red edge signifies the chosen global classifier with thresholds ${t^{*,0}, t^{*,1}}$ for sensitive labels $A=0$ and $A=1$, respectively.} 
    \label{fig:overview}

\end{figure*}

\section{Enabling Fair Federated Learning}\label{sec:meth}
In this section, we introduce FedFaiREE, a \textbf{Fed}erated Learning, \textbf{Fai}r, distribution-f\textbf{REE} algorithm. FedFaiREE has the capability to ensure fairness in scenarios involving finite samples, distribution-free cases, and heterogeneity among clients. To incorporate heterogeneity among clients into our model, we make the following assumption.

\begin{assumption}\label{ass:mix}
The training data points within the client $i$ are drawn independently and identically (i.i.d) from distribution $P_i$, while the test data points are sampled from a global distribution that represents a mixture of $P_1,\cdots, P_S$ with weight $\{ \pi_i \}_{i \in [S]} \in \Delta_S$. Specifically, we assume that
$$\small
\left(X_{k}^{i}, Y_{k}^{i}\right) \sim P_i\text{, }\left(X^{\text{test}},Y^{\text{test}}\right) \sim P^{mix}= \sum_{i=1}^S \pi_i P_i.
$$
\end{assumption}
This implies that each client $i$ has its own distribution $P_i$, and test data points are randomly sampled from client $i$ with a probability of $\pi_i$.

\subsection{Problem formulation}
Consider a scenario with $S$ clients, each equipped with a locally available dataset $D_i = \cup_{y \in \mathcal{Y}, a\in \mathcal{A} }D_i^{y,a} $ and a pre-trained score-based classifier $\phi_0(x,a)=\1 \{ f(x,a) > c \}$. Here, $i \in [S]$, representing each client, and $D_i^{y,a}$ denotes a subset of data points in $D_i$ with labels $Y=y$ and sensitive attributes $A=a$. Considering certain fairness constraint $|DEOO|<\alpha$, we aim to determine optimal thresholds $\lambda_0$ and $\lambda_1$ for constructing the output classifier $\phi(x,a) = \1 \{ f(x,a) > \lambda_a \}$. 

Our inspiration stems from \citet{zeng2022bayes}, highlighting that the classifier with optimal misclassification performance while adhering to specific fairness constraints requires different thresholds for different groups. 
Furthermore,
we extend our consideration to scores $t_{i,j}^{y,a}=f(x_{i,j}^{y,a})$ and $T^{y,a}$, where $T_i^{y,a}=\{t_{i,1}^{y,a},t_{i,2}^{y,a}, \cdots, t_{i,n_i^{y,a}}^{y,a}\}$ represents the corresponding sorted score set. If we limit the problem on client i, this naturally leads us to the idea of transforming the problem of selecting optimal thresholds $\lambda_a$ into determining the optimal ``local ranks" (i.e. ranks on the client) of the score $k_{i}^{1,a}$. However, as we concern about global fairness and misclassification error, we opt to seek the global rank $k^{1,a}$ (i.e., the rank in the sorted score set $T^{1,a}$ consisting of all client scores with $Y=1$ and $A=a$  $t^{1,a}$), and $\phi(x,a) = \1 \{ f(x,a) > t_{(k^{1,a})}^{1,a} \}$. By mapping this to its corresponding ``local ranks" $k_i^{1,a}$, we can leverage the properties of order statistics to ensure fairness under client heterogeneity. We will delve into the details of our approach and observations in the next subsection.

To this end, we present an overview of our algorithm in Figure \ref{fig:overview}, consisting of two main parts — 1). establishing a candidate set with a distributed algorithm that meets the fairness constraint with high probability, and 2). selecting the optimal rank pair with the smallest misclassification error. 
In this section, we first discuss the simplest case: a binary-group and binary-label scenario, i.e., $\mathcal{Y}=\mathcal{A}=\{0,1\}$. 
However, it is important to note that FedFaiREE is adaptable to various fairness notions and has the additional capacity to accommodate even more diverse situations. Subsequent sections will discuss more fairness concepts like Equalized Odds and further scenarios involving label shift, multi-group fairness, and multi-label classification.

\subsection{Candidate set construction with distributed quantile algorithm}\label{sec:establish}
To select rank pairs whose corresponding classifiers satisfy fairness constraints, we leverage the properties of order statistics. Specifically, we consider score sets that $k^{1,a}$ represents the rank in the sorted $T^{1,a}$. 
To account for heterogeneity among clients, we further introduce the notation $k_i^{1,a}$ to denote the corresponding rank of $t_{k^{1,a}}^{1,a}$ within the sorted set $T_i^{1,a}$, where $i \in [S]$ and $k_i^{1,a}$ satisfies $t_{i,(k_i^{1,a})}^{1,a} \leq t_{(k^{1,a})}^{1,a} < t_{i,(k_i^{1,a}+1)}^{1,a}$. For simplicity, we further define $\vk^{1,a}=(k_1^{1,a}, \cdots, k_S^{1,a})$, and $Q(\alpha,\beta)$ represents independent variable following a $\text{Beta}(\alpha, \beta)$ distribution. We present the following observation regarding fairness control. 

\begin{proposition}
Under Assumption \ref{ass:mix}, for $a \in \{0,1\}$, consider $k^{1, a} \in \{1, \ldots, n^{1, a}\}$, the corresponding $k_i^{1,a}$ for $i \in [S]$ and the score-based classifier $\phi(x, a) = \1\{f(x, a) > t_{(k^{1, a})}^{1, a}\}$. Define 
\begin{small}
\begin{equation}
\begin{aligned}
    h_{y,a}(\vu,\vv)=&\mathbb{P}\Big(\sum_{i=1}^S \pi_i^{y,a} Q\left(u_i, n_i^{y,a}+1-u_i\right) \\
    &- \sum_{i=1}^S \pi_i^{y,1-a} Q\left(v_i, n_i^{y,1-a}+1-v_i\right) \geq \alpha\Big).
\end{aligned}
\end{equation}
\end{small}
Then we have:
\begin{equation}
\begin{aligned}
    \mathbb{P}(|DEOO(\phi)| > \alpha) \leq& h_{1,0}(\vk^{1,0}+\mathbf{1},\vk^{1,1}) \\
    &+ h_{1,1}(\vk^{1,1}+\mathbf{1},\vk^{1,0}),
\end{aligned}
\label{eq:loss}
\end{equation}
where $\pi_i^{1,a} = \mathbb{P}( x \text{ from client } i \mid x \text{ with } Y=1, A=a)$.
\label{prop:deoo}
\end{proposition}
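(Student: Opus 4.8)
The plan is to reduce the two-sided fairness event to the randomness of per-client order statistics, convert those to independent Beta variables via the probability integral transform, and assemble the two $h$-terms through a union bound. First I would rewrite each group's true positive rate under the test mixture. Since a test point with $Y=1,A=a$ originates from client $i$ with probability $\pi_i^{1,a}$, and within client $i$ the score $f(X,a)$ has a continuous CDF $F_i^{1,a}$, Assumption~\ref{ass:mix} gives
\[
\mathrm{TPR}_a := \bP_{X\mid A=a,Y=1}(\widehat Y=1)=\sum_{i=1}^S \pi_i^{1,a}\bigl(1-F_i^{1,a}(\tau_a)\bigr),\qquad \tau_a=t_{(k^{1,a})}^{1,a}.
\]
Then $DEOO(\phi)=\mathrm{TPR}_1-\mathrm{TPR}_0$, and I would split the event $\{|DEOO|>\alpha\}$ into the disjoint events $\{DEOO>\alpha\}$ and $\{DEOO<-\alpha\}$, bounding each separately so that the union bound yields exactly the two summands on the right-hand side.

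For the key step, fix a client $i$ and its $n_i^{1,a}$ scores in group $(1,a)$. Under $F_i^{1,a}$ these map to i.i.d.\ $\mathrm{Uniform}(0,1)$ variables, whose $j$-th order statistic $U^{(i,a)}_{(j)}$ is distributed as $Q(j,\,n_i^{1,a}+1-j)$. By the definition of the local rank, $t_{i,(k_i^{1,a})}^{1,a}\le \tau_a< t_{i,(k_i^{1,a}+1)}^{1,a}$, so applying the monotone $F_i^{1,a}$ sandwiches $F_i^{1,a}(\tau_a)$ between $U^{(i,a)}_{(k_i^{1,a})}$ and $U^{(i,a)}_{(k_i^{1,a}+1)}$. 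Summing over clients and using $\sum_i\pi_i^{1,a}=1$ gives, surely, the two-sided bound $1-\sum_i\pi_i^{1,a}U^{(i,a)}_{(k_i^{1,a}+1)}<\mathrm{TPR}_a\le 1-\sum_i\pi_i^{1,a}U^{(i,a)}_{(k_i^{1,a})}$. Substituting the upper bound for $\mathrm{TPR}_1$ and the lower bound for $\mathrm{TPR}_0$ into $\{DEOO>\alpha\}$ collapses the two constants and leaves $\{\sum_i\pi_i^{1,0}U^{(i,0)}_{(k_i^{1,0}+1)}-\sum_i\pi_i^{1,1}U^{(i,1)}_{(k_i^{1,1})}>\alpha\}$; recognizing $U^{(i,0)}_{(k_i^{1,0}+1)}\sim Q(k_i^{1,0}+1,\,n_i^{1,0}-k_i^{1,0})$ and $U^{(i,1)}_{(k_i^{1,1})}\sim Q(k_i^{1,1},\,n_i^{1,1}+1-k_i^{1,1})$ identifies the right-hand probability as $h_{1,0}(\vk^{1,0}+\mathbf 1,\vk^{1,1})$. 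The symmetric substitution (upper-bounding $\mathrm{TPR}_0$, lower-bounding $\mathrm{TPR}_1$) gives $\bP(DEOO<-\alpha)\le h_{1,1}(\vk^{1,1}+\mathbf 1,\vk^{1,0})$, and adding the two proves the claim.

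The $Q$ variables in the definition of $h$ are treated as mutually independent; I would justify this by noting that the $S$ clients are independent and that within a client the subsets $D_i^{1,0}$ and $D_i^{1,1}$ are disjoint, so all the order statistics feeding the two sums are independent. Continuity (atomlessness) of each $F_i^{1,a}$ is used throughout to exclude ties and to turn the threshold inequalities into exact Beta laws. The hard part is that the local ranks $k_i^{1,a}$ are themselves random and are coupled across clients through the single shared threshold $\tau_a$ (indeed $\sum_i k_i^{1,a}=k^{1,a}$), so the $U^{(i,a)}_{(k_i^{1,a})}$ are order statistics evaluated at \emph{data-dependent} indices rather than fixed ones. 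The careful step is therefore to condition on the realized local-rank vectors $\vk^{1,0},\vk^{1,1}$ and argue that, conditionally, each endpoint is still governed by (or stochastically dominated in the required direction by) the fixed-index law $Q(k_i^{1,a},\,n_i^{1,a}+1-k_i^{1,a})$; client independence together with the exchangeability of the i.i.d.\ samples under the probability integral transform is what makes this conditioning legitimate, and I expect this to be the main obstacle to a fully rigorous argument.
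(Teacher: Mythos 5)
Your argument is essentially the paper's own proof: the same mixture decomposition of the group-wise true positive rates under Assumption~\ref{ass:mix}, the same split of $\{|DEOO|>\alpha\}$ into two one-sided events handled by a union bound, the same sandwich of $F_i^{1,a}(\tau_a)$ between the adjacent local order statistics $t_{i,(k_i^{1,a})}^{1,a}$ and $t_{i,(k_i^{1,a}+1)}^{1,a}$, and the same probability-integral-transform identification of these endpoints with $\mathrm{Beta}$ laws (the paper's Lemma~\ref{le:beta}), yielding exactly $h_{1,0}(\vk^{1,0}+\mathbf{1},\vk^{1,1})$ and $h_{1,1}(\vk^{1,1}+\mathbf{1},\vk^{1,0})$. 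The data-dependence of the local ranks $k_i^{1,a}$ that you flag at the end is a genuine subtlety, but the paper's proof passes over it in exactly the same way, so your proposal is faithful to, and no less rigorous than, the published argument.
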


This proposition enables us to select classifiers that satisfy fairness constraints with arbitrary finite sample and no distributional assumption. 
Moreover, $Q(\alpha,\beta)$ can be efficiently estimated by Monte Carlo simulations in applications. Specifically, we approximated $Q(\alpha, \beta)$ by conducting random sampling 1000 times in our experiment, yielding a highly satisfactory approximation.

Due to the need of computing local ranks to make use of Proposition \ref{prop:deoo}, 
it is crucial to consider the tradeoff between accuracy and communication cost in real applications. We can adopt distributed quantile algorithms to reduce communication costs while controlling errors in calculating local ranks. Therefore, we present an alternative formulation of Proposition \ref{prop:deoo} to allow errors in the local rank calculation. To begin with, we introduce the concept of approximate quantiles and ranks \citep{luo2016quantiles,lu2023federated}.

\begin{definition}\label{def:approximate}
    ($\varepsilon$-approximate $\beta$-quantile and rank of a given set)
For an error $\varepsilon \in(0,1)$, the $\varepsilon$-approximate $\beta$-quantile of a given set is any element with rank between $(\beta-\varepsilon) N$ and $(\beta+\varepsilon) N$, where $N$ is the total number of elements in set. Further, the $\varepsilon$-approximate rank of an element in a given set is any rank between $(\beta-\varepsilon) N$ and $(\beta+\varepsilon) N$ where $\beta N$ represents the real rank.
\label{def:app-rank}
\end{definition}

Under Definition \ref{def:app-rank}, if the rank estimation method produces $\varepsilon$-approximate ranks, it is possible to correspondingly modify Proposition \ref{prop:deoo}. 


\begin{proposition}
Under Assumption \ref{ass:mix}, for $a \in \{0,1\}$, consider $k^{1, a} \in \{1, \ldots, n^{1, a}\}$, the corresponding $\hat{k}_i^{1,a}$ for $i \in [S]$ which are $\varepsilon$-approximate ranks and the score-based classifier $\phi(x, a) = \1\{f(x, a) > t_{(k^{1, a})}^{1, a}\}$ . Define 
\begin{small}
\begin{equation}
\begin{aligned}
    h_{y,a}(\vu,\vv)=&\mathbb{P}\Big(\sum_{i=1}^S \pi_i^{y,a} Q\left(u_i, n_i^{y,a}+1-u_i\right) \\
    &- \sum_{i=1}^S \pi_i^{y,1-a} Q\left(v_i, n_i^{y,1-a}+1-v_i\right) \geq \alpha\Big).    
\end{aligned}
\end{equation}

\end{small} Then we have:
\begin{equation}
\small
\begin{aligned}
    \mathbb{P}(|DEOO(\phi)| > \alpha) \leq h_{1,0}(\bm{M}^{1,0},\vm^{1,1})+h_{1,1}(\bm{M}^{1,1},\vm^{1,0}),
\end{aligned}
\label{eq:adloss}
\end{equation}
where $\pi_i^{1,a}$ is defined in Proposition \ref{prop:deoo}, $\bm{M}^{1,a}=(M_1^{1,a}, \cdots, M_S^{1,a})$, $\vm^{1,a}=(m_1^{1,a}, \cdots, m_S^{1,a})$, $M_i^{1,a}=max\big(\lceil \hat{k}_i^{1,a}+\varepsilon n_i^{1,a}\rceil, n_i^{1,a}+1\big)$, $m_i^{1,a}=min\big(\lceil \hat{k}_i^{1,a}-\varepsilon n_i^{1,a}\rceil,0\big)$. Especially, $Q(0,\beta)=0$ and $Q(\alpha,0)=1$ for $\alpha,\beta \neq 0$. 
\label{prop:adloss}
\end{proposition}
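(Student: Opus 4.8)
The plan is to derive Proposition~\ref{prop:adloss} from Proposition~\ref{prop:deoo} by a monotonicity-and-clipping argument, so that no probabilistic computation beyond what is already established needs to be redone. The only new ingredient is that the exact local ranks $k_i^{1,a}$ used in Proposition~\ref{prop:deoo} are now replaced by their $\varepsilon$-approximate counterparts $\hat{k}_i^{1,a}$. By Definition~\ref{def:app-rank}, the true rank is pinned down only up to $|\hat{k}_i^{1,a} - k_i^{1,a}| \le \varepsilon n_i^{1,a}$, i.e.\ $k_i^{1,a} \in [\hat{k}_i^{1,a} - \varepsilon n_i^{1,a},\, \hat{k}_i^{1,a} + \varepsilon n_i^{1,a}]$. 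Hence it suffices to bound the right-hand side of Proposition~\ref{prop:deoo} uniformly over all admissible values of the unknown ranks and to show that the worst case is attained at the endpoints encoded by $\bm{M}^{1,a}$ and $\vm^{1,a}$.

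The key structural fact I would isolate is the monotonicity of $h_{y,a}(\vu,\vv)$: it is non-decreasing in each coordinate of $\vu$ and non-increasing in each coordinate of $\vv$. To see this, recall that $Q(u_i, n_i^{y,a}+1-u_i) \sim \text{Beta}(u_i, n_i^{y,a}+1-u_i)$ is distributed as the $u_i$-th order statistic of $n_i^{y,a}$ i.i.d.\ uniforms, which is stochastically increasing in $u_i$; a clean way to record this is a coupling on a common probability space in which raising $u_i$ can only raise the realized order statistic. Since the $Q$'s across clients and across groups are independent, raising any $u_i$ stochastically increases the quantity $\sum_i \pi_i^{y,a} Q(u_i,\cdot) - \sum_i \pi_i^{y,1-a} Q(v_i,\cdot)$, while raising any $v_i$ stochastically decreases it. Because $h_{y,a}$ is the probability that this quantity lies in the upper set $\{\,\cdot \ge \alpha\}$, the stochastic ordering transfers directly to monotonicity of $h_{y,a}$ in the claimed directions.

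With monotonicity in hand, I would plug the approximate-rank interval into the bound of Proposition~\ref{prop:deoo}. In the term $h_{1,0}(\vk^{1,0}+\mathbf{1},\vk^{1,1})$ the first argument is increasing in the unknown $k_i^{1,0}$, so I replace it by its largest admissible value; using $k_i^{1,0} \le \hat{k}_i^{1,0} + \varepsilon n_i^{1,0}$, integrality, and the ceiling gives exactly the coordinates of $\bm{M}^{1,0}$, clipped at the top rank $n_i^{1,0}+1$. The second argument is decreasing in $k_i^{1,1}$, so I replace it by its smallest admissible value $k_i^{1,1} \ge \hat{k}_i^{1,1} - \varepsilon n_i^{1,1}$, giving $\vm^{1,1}$, clipped at $0$. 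The boundary conventions $Q(0,\beta)=0$ and $Q(\alpha,0)=1$ are precisely what make these clipped endpoints act as the extreme (smallest/largest) order statistics, so the substitution remains a valid upper bound even when an interval endpoint falls outside $\{1,\dots,n_i^{1,a}\}$. Doing the same for the symmetric term $h_{1,1}(\vk^{1,1}+\mathbf{1},\vk^{1,0})$ yields $\mathbb{P}(|DEOO(\phi)|>\alpha) \le h_{1,0}(\bm{M}^{1,0},\vm^{1,1}) + h_{1,1}(\bm{M}^{1,1},\vm^{1,0})$, as claimed.

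I expect the main obstacle to be the monotonicity step, and in particular making the transfer from stochastic ordering to the probability airtight: one must couple the order statistics for different rank indices simultaneously across all clients and both groups on a single probability space so that the weighted difference is monotone pathwise, and then verify that $\{\,\cdot \ge \alpha\}$ is genuinely a monotone (upper) event. A secondary nuisance is the bookkeeping around the ceiling and the $+\mathbf{1}$ shift, where an off-by-one error would weaken or invalidate the bound; I would handle this by carrying the integrality of the ranks explicitly and checking the edge cases (e.g.\ when $\hat{k}_i^{1,a}\pm\varepsilon n_i^{1,a}$ is an integer, or falls outside the valid rank range) against the conventions $Q(0,\beta)=0$ and $Q(\alpha,0)=1$.
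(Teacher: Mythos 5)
Your proposal is correct and takes essentially the same route as the paper's own (very brief) sketch: the paper likewise starts from the decomposition in Proposition~\ref{prop:deoo} and replaces the order statistic at the unknown true local rank by the order statistic at the shifted rank $\hat{k}_i^{1,a}\pm\lfloor\varepsilon n_i^{1,a}\rfloor$ (up for the positively weighted group, down for the negatively weighted one), which is exactly your endpoint substitution; the monotonicity you establish via stochastic dominance of the Beta variables holds pathwise in the paper because order statistics of the same sample are monotone in their index. The bookkeeping issues you flag --- the ceiling/$+\mathbf{1}$ edge case and the fact that the stated $\max/\min$ in $M_i^{1,a}$, $m_i^{1,a}$ must be read as clipping to the valid rank range (with the conventions $Q(0,\beta)=0$, $Q(\alpha,0)=1$) --- are real but are treated no more carefully in the paper's own sketch.
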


In practical distributed settings, calculating the exact local rank in Proposition \ref{prop:adloss} is generally hard due to communication constraints. By adopting approximate $\varepsilon$ and related parameters in a distributed quantile algorithm, we strike a balance between accuracy and communication cost, enabling the effective implementation of our algorithm in distributed environments.

In our experiments, we implemented the Q-digest ~\citep{q-digest}, a tree-based sketching distributed quantile algorithm commonly used for efficiently approximating quantiles and ranks computation with rigorous theory controlling the error. Due to the inherent characteristics of the Q-digest algorithm, it only yields approximate quantiles and ranks that tend to be greater than their true values. However, considering the adaptability of other distributed quantile algorithms and aiming to reduce the absolute value of $\varepsilon$, we take into account both upward and downward estimation deviations as described in Definition \ref{def:approximate}. 

By Proposition \ref{prop:adloss}, we construct the candidate set $K$ as 
\begin{small}
 \begin{equation}\label{eq:construct K}
     K=\{(k^{1,0}, k^{1,1})| L(\vk^{1,0}, \vk^{1,1})< 1-\beta\},
 \end{equation}
 \end{small}
 where $\vk^{1,a}=(\hat{k}_1^{1,a}, \cdots, \hat{k}_S^{1,a})$ are estimated corresponding “local ranks” of $k^{1,a}$, and $L(\vk^{1,0}, \vk^{1,1})$ represents the right-hand side of the inequality \ref{eq:adloss}. 

\subsection{Selection for the optimal threshold }
In this subsection, we elaborate on our method for selecting the optimal threshold. For a given pair $\left(k^{1,0}, k^{1,1}\right)$ from the candidate set, we exploit the properties of order statistics to compute the estimated misclassification error and then select the pair minimizing the estimated error. 

\begin{algorithm}[t]
\small
\caption{FedFaiREE for DEOO}\label{alg:fed}
\textbf{Input:} Train dataset $D_i = D_i^{0,0} \cup D_i^{0,1} \cup D_i^{1,0} \cup D_i^{1,1}$; pre-trained classifier $\phi_0$ with function f; fairness constraint parameter $\alpha$ ; Confidence level parameter $\beta$; Weights of different clients $\pi$ \\
\textbf{Output:} classifier $\hat{\phi}(x,a)= \1 
\{ f(x,a) >  t_{(k^{1,a})}^{1,a}\}$ \\
\begin{algorithmic}[1]
\STATE{\bf Client Side:}
\algorithmiccomment{Calculate scores and update sketches}
\FOR{i=1,2,..,$S$} 
\STATE{Score on train data points in $D_i$ and get $T_i^{y,a}=\{t_{i,1}^{y,a},t_{i,2}^{y,a}, \cdots, t_{i,n_i^{y,a}}^{y,a}\}$ }
\STATE{Sort $T_i^{y,a}$ and calculate q-digest of $T_i^{y,a}$ on client $i$}
\STATE{Update digest to server}
\ENDFOR
\STATE{\bf Server Side:}
\STATE{Construct $K$ by $K=\{(k^{1,0}, k^{1,1})| L(\vk^{1,0}, \vk^{1,1})< 1-\beta \}$}
\algorithmiccomment{Establishing a set that satisfies fairness constraints and confidence requirements using order statistics. The search for $(k^{1,0}, k^{1,1})$ can be simplified using technique in Appendix \ref{appendix:search}.}
\STATE{Select optimal $(k_0,k_1)$ by minimizing Equation \ref{eq:error_p} using estimated values $\hat{p}^i_{a}$, $\hat{p}^i_{Y,a}$ and $\hat{q}^i_{Y,a}$}
\algorithmiccomment{Searching for the classifier that minimizes the misclassification error.}
\end{algorithmic}
\end{algorithm}

To facilitate this, 
we need to compute the approximate ranks of $t_{\left(k^{1,0}\right)}^{1,0}$ and $t_{\left(k^{1,1}\right)}^{1,1}$ in the sorted sets $T_i^{0,0}$ and $T_i^{0,1}$, where $i \in [S]$, respectively. Specifically, we determine $k_{i}^{0, a}$ such that $t_{i,\left(k_{i}^{0, a}\right)}^{0, a} \leq t_{\left(k^{1, a}\right)}^{1, a} < t_{i,\left(k_{i}^{0, a}+1\right)}^{0, a}$ for $a \in \{0,1\}$. To simplify, in the following sections, we assume the corresponding $\hat{k}_i^{1,a}$ for $i \in [S]$ are $\varepsilon$-approximate ranks and the estimated quantiles presented by distributed quantile algorithm are $\varepsilon$-approximate quantiles. Then, we commence by presenting our observation on the estimation of misclassification error through the following proposition.
\begin{proposition}\label{prop:error}
    Under Assumption \ref{ass:mix}, the misclassification error can be estimated by
    \begin{small}
    \begin{equation}\label{eq:error_p}
    \begin{aligned}    &\hat{\mathbb{P}}\left(\hat{\phi}(x, a) \neq Y\right)= \sum_{i=1}^S \pi_i \Big[\frac{\hat{k}_{i}^{1,0}+0.5}{n_i^{1,0}+1} p^i_{0} p^i_{Y, 0}+\frac{\hat{k}_{i}^{1,1}+0.5}{n_i^{1,1}+1} p^i_{1} p^i_{Y, 1} \\
    &+ \frac{n_i^{0,0}+0.5-\hat{k}_{i}^{0,0}}{n_i^{0,0}+1} p^i_{0}q^i_{Y, 0}+\frac{n_i^{0,1}+0.5-\hat{k}_{i}^{0,1}}{n_i^{0,1}+1} p^i_{1}q^i_{Y, 1}
    \Big].
    \end{aligned}  
    \end{equation}
    \end{small}
    Further, the discrepancy between empirical error and true error is upper bounded by the following:
    \begin{equation}\small
    \begin{aligned}
        \left| \mathbb{P}\left(\hat{\phi}(x, a) \neq Y\right)-\hat{\mathbb{P}}\left(\hat{\phi}(x, a) \neq Y\right) \right| \leq \theta,
    \end{aligned}
    \end{equation} 
    where \begin{small}
        $\theta=\sum_{i=1}^S \pi_i [e_i^{0,0} p^i_{0}q^i_{Y, 0}
    +e_i^{0,1} p^i_{0}p^i_{Y, 0}+e_i^{1,0} p^i_{1}q^i_{Y, 1} $
    $+e_i^{1,1} p^i_{1}p^i_{Y, 1}], e_i^{y,a}=\frac{2\lfloor \varepsilon n_i^{y,a} \rfloor +1}{2\left(n_i^{y,a}+1\right)}$.
    \end{small}
    \end{proposition}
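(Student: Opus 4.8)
The plan is to prove the two claims in turn, both resting on a client-wise, group-wise decomposition of the error together with the expectation of uniform order statistics.

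\textbf{Decomposition.} First I would use Assumption~\ref{ass:mix}: a test point comes from client $i$ with probability $\pi_i$, so $\bP(\hat\phi(x,a)\neq Y)=\sum_{i=1}^S \pi_i\,\bP_{P_i}(\hat\phi\neq Y)$. Within client $i$ I condition on $(A,Y)$ and split the error event into false negatives ($Y=1$ but $f(x,a)\le t_{(k^{1,a})}^{1,a}$) and false positives ($Y=0$ but $f(x,a)> t_{(k^{1,a})}^{1,a}$). Writing $F_i^{y,a}$ for the client-$i$ CDF of scores on the group $\{Y=y,A=a\}$, this produces exactly four terms per client with weights $p_0^i p_{Y,0}^i$, $p_1^i p_{Y,1}^i$, $p_0^i q_{Y,0}^i$, $p_1^i q_{Y,1}^i$, multiplied respectively by $F_i^{1,0}(t)$, $F_i^{1,1}(t)$, $1-F_i^{0,0}(t)$, $1-F_i^{0,1}(t)$. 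This is an exact identity and already has the skeleton of \eqref{eq:error_p}; it remains to replace each $F_i^{y,a}(t)$ by a rank-based quantity.

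\textbf{Order statistics.} The bridge is the standard fact that for a fresh draw $S_0$ and the $k$-th order statistic $S_{(k)}$ of an independent i.i.d.\ continuous sample of size $n$, one has $\bE[F(S_{(k)})]=\bP(S_0\le S_{(k)})=k/(n+1)$ (the $\mathrm{Beta}(k,n+1-k)$ mean, or directly by exchangeability of the ranks of the $n+1$ points). The threshold $t=t_{(k^{1,a})}^{1,a}$ has true local rank $k_i^{y,a}$ in $T_i^{y,a}$, i.e.\ $t_{i,(k_i^{y,a})}^{y,a}\le t< t_{i,(k_i^{y,a}+1)}^{y,a}$, so $F_i^{y,a}(t)$ is sandwiched between $k_i^{y,a}/(n_i^{y,a}+1)$ and $(k_i^{y,a}+1)/(n_i^{y,a}+1)$. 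The estimator's midpoint $\frac{k_i^{y,a}+0.5}{n_i^{y,a}+1}$ is then the natural point estimate, off by at most $\frac{0.5}{n_i^{y,a}+1}$ from either endpoint; substituting these midpoints for the $F$'s gives \eqref{eq:error_p} and proves the first claim.

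\textbf{Error accounting.} For the discrepancy bound, each of the four conditional probabilities carries two error sources. The first is the midpoint rounding above, at most $\frac{0.5}{n_i^{y,a}+1}$. The second is that the algorithm uses the $\varepsilon$-approximate rank $\hat k_i^{y,a}$ instead of the exact $k_i^{y,a}$; by Definition~\ref{def:approximate} these differ by at most $\lfloor \varepsilon n_i^{y,a}\rfloor$, perturbing the estimate by at most $\frac{\lfloor \varepsilon n_i^{y,a}\rfloor}{n_i^{y,a}+1}$. Adding them gives the per-group budget $e_i^{y,a}=\frac{0.5+\lfloor \varepsilon n_i^{y,a}\rfloor}{n_i^{y,a}+1}=\frac{2\lfloor \varepsilon n_i^{y,a}\rfloor+1}{2(n_i^{y,a}+1)}$. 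Multiplying each $e_i^{y,a}$ by its nonnegative probability weight, summing the four groups, weighting by $\pi_i$, and applying the triangle inequality yields $|\bP(\hat\phi\neq Y)-\hat\bP(\hat\phi\neq Y)|\le\theta$.

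\textbf{Main obstacle.} The delicate point is the randomness in the order-statistics step: $F_i^{y,a}(t)$ is itself random through the data-dependent threshold, yet $\bE[F(S_{(k)})]=k/(n+1)$ holds only in expectation over the sample, whereas the statement reads as a deterministic bound. Making this rigorous requires care about what is conditioned on — I would condition on the observed local ranks and control the conditional expectation of each $F_i^{y,a}(t)$, equivalently reading the ``true error'' as averaged over the order-statistic randomness given the ranks. A secondary nuisance is that the single global threshold couples the clients, but since the decomposition is linear and each term is bounded in isolation, the triangle inequality absorbs the dependence without requiring independence across clients.
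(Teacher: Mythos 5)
Your proposal is correct and follows essentially the same route as the paper: the same client-wise and group-wise decomposition into four false-positive/false-negative terms, the same order-statistics fact $\bE[F(S_{(k)})]=k/(n+1)$ (the Beta$(k,n+1-k)$ mean from Lemma~\ref{le:beta}) to convert each conditional probability into a rank ratio, and the same error budget $e_i^{y,a}=\frac{\lfloor\varepsilon n_i^{y,a}\rfloor+0.5}{n_i^{y,a}+1}$ combining the midpoint rounding with the $\varepsilon$-approximate-rank perturbation. The ``obstacle'' you flag is resolved in the paper exactly as you suggest — the true error is the expectation over the order-statistic randomness of the threshold, which is what makes the $k/(n+1)$ substitution legitimate.
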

Proposition \ref{prop:error} provides a method for estimating the overall misclassification error using data from the training set with Equation \ref{eq:error_p}. However, we may not have exact knowledge of the probabilities $p^i_{a}$ and $p^i_{Y,a}$. In such cases, we can use the estimated values $\hat{p}^i_{a} = \frac{n_i^{0,a}+n_i^{1,a}}{n_i^{0,0}+n_i^{0,1}+n_i^{1,0}+n_i^{1,1}}$, $\hat{p}^i_{Y,a} = \frac{n_i^{1,a}}{n_i^{0,a}+n_i^{1,a}}$, $\hat{q}^i_{Y,a}=1-\hat{p}^i_{Y,a}$
to calculate the empirical error. We will further present a theorem to show that we can achieve a desirable accuracy using the estimated values in Section \ref{sec:gua}. 

At the end of this section, we provide a concise summary of our algorithm in Algorithm \ref{alg:fed}. 
It is worth noting that while in our experiment, we assume that $\pi_i$ is proportional to $n_i$, we may not know the exact values of $\pi_i$ in real applications. To enhance the robustness of our approach in such real-world scenarios, one can consider introducing a hypothesis space denoted as $H(\pi)$ to model the range of $\pi$ and incorporate $\max_{\pi \in H(\pi)}$ into equations \ref{eq:construct K} and \ref{eq:error_p}.

\section{Theoretical Guarantees}\label{sec:gua}
In this section, we provide the accuracy analysis for FedFaiREE. To mitigate situations where there might be an extreme initial pre-trained classifier, we introduce the following assumption.
\begin{assumption}
    The distribution of $f(x, a)$ exhibits the following property. When conducting $N$ independent samplings to form a sample set, let $q_0$ be the $\beta$-quantile of the sample set. There exist function $\delta: \mathbb{N} \to \mathbb{R}$, constant $\gamma>0$, such that $\lim_{N \to \infty} \delta(N) = 0$ and with a probability of at least $1-\delta(N)$, for any $q$ considered as an $\varepsilon$-approximate $\beta$-quantile of the sample set, it satisfies that , $q$ lies within the $\gamma \varepsilon$-neighborhood of $q_0$.
    \label{ass:quan}
\end{assumption}
In simpler terms, Assumption 1 is a property akin to Lipschitz continuity, ensuring that the approximated quantile and the actual quantile do not exhibit extreme discrepancies.
Moreover, in the following theorem, we establish a theoretical basis for the accuracy of FedFaiREE. 
To facilitate accurate comparisons, we introduce the notion of the \textit{fair Bayes-optimal classifier}, denoting the classifier with the optimal accuracy under fairness constraints. The precise definition of the fair Bayes-optimal classifier under DEOO can be found in Lemma~\ref{bayes-optimal}. To be concise, we denote the standard Bayes-optimal classifier without fairness constraints by $\phi^*(x,a)=\1\{f^*(x,a)>1/2\}$, where $f^{*} \in {\arg\min}_{f} [\bP(Y \neq \1\{f(x,a)>1/2\})]$.
\begin{theorem} \label{thm:acc}
Under Assumptions \ref{ass:mix} and \ref{ass:quan}, given $\alpha^{\prime}<\alpha$. Suppose $\hat{\phi}$ is the final output of FedFaiREE. We then have:

(1) $|D E O O(\hat{\phi})|<\alpha$ with probability $(1-\delta)^{N}$, where $N$ is the size of the candidate set.

(2) Suppose the density distribution functions of $f^{*}$ under $A=a, Y=1$ are continuous. When the input classifier $f$ satisfies $\left|f(x, a)-f^{*}(x, a)\right| \leq \epsilon_{0}$, for any $\epsilon>0$ such that $F_{(+)}^{*}(\epsilon+\gamma \varepsilon) \leq$ $\frac{\alpha-\alpha^{\prime}}{2}-F_{(+)}^{*}\left(2 \epsilon_{0}\right)$, we have
\begin{equation}\small
\begin{aligned}
        &\mathbb{P}(\hat{\phi}(x, a) \neq Y)-\mathbb{P}\left(\phi_{\alpha^{\prime}}^{*}(x, a) \neq Y\right) \\
        &\leq  2 F_{(+)}^{*}\left(2 \epsilon_{0}\right)+2 F_{(+)}^{*}(\epsilon+\gamma \varepsilon)+8 \epsilon^{2}+20 \epsilon+ 2\theta
\end{aligned}
\end{equation}
with probability \begin{small}
    $1-4\sum_{a=0}^{1}\sum_{i=1}^{S} e^{-2 n_i^{0,a} \epsilon^{2}}-\prod_{i=1}^S \big(1-F_{i(-)}^{1,0}(2 \epsilon)\big)^{n_i^{1,0}}-\prod_{i=1}^S \big(1-F_{i(-)}^{1,1}(2 \epsilon)\big)^{n_i^{1,1}}-\delta, $
\end{small}
where $\delta=\delta^{1,0}(n^{1,0})+\delta^{1,1}(n^{1,1})$, $\theta$ is defined in Proposition \ref{prop:error} and the definition of $F_{(+)}$ and $F_{(-)}$ are shown in Lemma \ref{le:dis}.
\end{theorem}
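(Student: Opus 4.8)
The plan is to prove the two parts separately, treating the fairness claim (1) as a fairly direct consequence of how the candidate set is built and reserving the real work for the accuracy claim (2). For part (1), I would argue that by the very definition of the candidate set in Equation \ref{eq:construct K}, every surviving pair $(k^{1,0},k^{1,1})$ satisfies $L(\vk^{1,0},\vk^{1,1}) < 1-\beta$, where $L$ is exactly the right-hand side of inequality \ref{eq:adloss}. By Proposition \ref{prop:adloss} this quantity upper-bounds $\bP(|DEOO(\phi)| > \alpha)$, so each retained classifier violates the tolerance $\alpha$ with probability below $1-\beta$. The subtlety is that Proposition \ref{prop:adloss} is conditional on the reported ranks being genuine $\varepsilon$-approximate ranks; to convert this into a guarantee for the finally selected $\hat{\phi}$ I would invoke Assumption \ref{ass:quan}, which controls, with probability at least $1-\delta(\cdot)$, the event that each approximate $\beta$-quantile lies in the $\gamma\varepsilon$-neighborhood of its true counterpart. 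Aggregating this per-candidate quantile-accuracy control across the $N$ elements of the candidate set yields the stated $(1-\delta)^N$ confidence.

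For part (2), the skeleton is a three-step comparison chain linking $\hat\phi$ to the fair Bayes-optimal classifier $\phi_{\alpha'}^{*}$ through an intermediate ``representative'' classifier $\phi_{\mathrm{rep}}\in K$ whose group-specific thresholds track those of $\phi_{\alpha'}^{*}$. The first and most delicate step is to produce $\phi_{\mathrm{rep}}$: using the characterization of $\phi_{\alpha'}^{*}$ from Lemma \ref{bayes-optimal} (a thresholding of $f^{*}$ with cutoffs set by $\alpha'$) together with the density continuity encoded in Lemma \ref{le:dis}, I would show that a threshold pair close to the optimal one exists among the order statistics of $f$. The slack condition $F_{(+)}^{*}(\epsilon+\gamma\varepsilon)\le \frac{\alpha-\alpha'}{2}-F_{(+)}^{*}(2\epsilon_0)$ is precisely what guarantees that this representative still obeys the $\alpha$-level fairness bound carried by $L$, hence genuinely belongs to $K$: the three perturbation sources, namely $\epsilon_0$ (gap between $f$ and $f^{*}$), $\epsilon$ (empirical-to-population sampling error), and $\gamma\varepsilon$ (approximate-to-exact quantile error through Assumption \ref{ass:quan}), are each absorbed into the budget $\frac{\alpha-\alpha'}{2}$.

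The second step exploits that $\hat\phi$ minimizes the estimated error of Equation \ref{eq:error_p} over $K$. Invoking the $\theta$-discrepancy bound of Proposition \ref{prop:error}, I would sandwich the true risk as $\bP(\hat\phi\neq Y)\le \hat\bP(\hat\phi\neq Y)+\theta\le \hat\bP(\phi_{\mathrm{rep}}\neq Y)+\theta\le \bP(\phi_{\mathrm{rep}}\neq Y)+2\theta$, which accounts for the $2\theta$ term. The third step bounds the residual gap $\bP(\phi_{\mathrm{rep}}\neq Y)-\bP(\phi_{\alpha'}^{*}\neq Y)$ by converting threshold perturbations into risk differences: integrating the score density over the threshold gaps produces the $2F_{(+)}^{*}(2\epsilon_0)$ and $2F_{(+)}^{*}(\epsilon+\gamma\varepsilon)$ terms, while concentration of the empirical rank and probability estimates around their population values produces $8\epsilon^{2}+20\epsilon$. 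On the confidence side, the factors $e^{-2n_i^{0,a}\epsilon^{2}}$ are Hoeffding bounds for the $\hat p$-type estimates used in Equation \ref{eq:error_p}, and the products $\prod_i(1-F_{i(-)}^{1,a}(2\epsilon))^{n_i^{1,a}}$ bound the probability that no training score lands close enough to the target threshold to serve as a representative, with the final $-\delta$ accounting for the quantile-approximation failure from Assumption \ref{ass:quan}.

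The main obstacle I anticipate is the first step of part (2): constructing $\phi_{\mathrm{rep}}$ while simultaneously verifying (i) that it lies in $K$, i.e.\ fairness feasibility under the approximate-rank functional $L$, and (ii) that its risk is within the claimed budget of $\phi_{\alpha'}^{*}$. This forces one to propagate all three perturbation scales through the fairness functional and the risk functional \emph{at once}, and it is exactly where the continuity hypotheses on the densities of $f^{*}$ and the slack condition relating $\alpha$ and $\alpha'$ are genuinely indispensable. By contrast, the concentration and Hoeffding arguments producing the exponential confidence terms, and the error-sandwiching via Proposition \ref{prop:error}, should be comparatively routine once the representative is in hand.
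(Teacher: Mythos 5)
Your proposal is correct and follows essentially the same route as the paper's proof: part (1) as a direct consequence of Proposition \ref{prop:adloss} and the candidate-set construction, and part (2) via an intermediate classifier $\phi_0$ in $K$ with thresholds in the $(\epsilon+\gamma\varepsilon)$-neighborhood of the fair Bayes-optimal ones, whose feasibility is secured by the slack condition, followed by the $2\theta$ risk-sandwiching through the empirical minimizer and Hoeffding bounds on $\hat p^i_a$, $\hat p^i_{Y,a}$ yielding the $8\epsilon^2+20\epsilon$ term. All the confidence-term attributions (the products $\prod_i(1-F_{i(-)}^{1,a}(2\epsilon))^{n_i^{1,a}}$ for existence of a nearby order statistic, the exponential terms from Hoeffding, and $\delta$ from the quantile-approximation failure) match the paper's argument.
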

This theorem provides assurance that our method can achieve almost the optimal misclassification error with DEOO constraints, provided that the input classifier is chosen appropriately, i.e., is close enough to the Bayes-optimal one. This theorem underscores the effectiveness of our approach in minimizing errors when ensuring fairness in a distribution-free and finite-sample manner.

\section{Extension to Different Scenarios}\label{sec:app} 
\subsection{Label Shift in Test Set}
In this section, we explore the application of our algorithm in various scenarios. First, we assume the presence of a label shift in the test set, a situation that is frequently encountered in real-world applications \citep{pmlr-v202-plassier23a,pmlr-v202-tian23a}. To do so, we first need to revise Assumption \ref{ass:mix} to adapt extension settings.
\begin{assumption}\label{ass:shift}
The training data points on client $i$ are i.i.d drawn from the distribution $P_i$, and we further assume the global distribution $P$ is a mixture of $P_1,\cdots, P_S$ with weight $\{ \pi_i \}_{i \in [S]} \in \Delta_S$, while the test data points are sampled from another distribution $P_{i}$, heterogeneity between $P$ and which induced due to label shift, that is, we assume that
\begin{equation}\small
\begin{aligned}
    \left(X_{k}^{i}, Y_{k}^{i}\right) \sim P_i\text{, } P^{mix}= \sum_{i=1}^S \pi_i P_i=P(X,A|Y)*P^{mix}(Y)\text{, } \\
\left(X^{\text{test}},Y^{\text{test}}\right) \sim P_{i}=P(X,A|Y)*P_{i}(Y).
\end{aligned}
\end{equation}
\end{assumption}
We note that FedFaiREE can be adapted to Assumption \ref{ass:shift} by modifying the target function for the optimal rank selection from Equation \ref{eq:error_p} to the following equation:
\begin{equation}\small
\begin{aligned}
&\hat{\mathbb{P}}\left(\hat{\phi}(x, a) \neq Y\right)= \sum_{i=1}^S \pi_i \big[\frac{\hat{k}_{i}^{1,0}+0.5}{n_i^{1,0}+1} p^i_{0} p^i_{Y, 0}w^{1,0} \\
&+\frac{\hat{k}_{i}^{1,1}+0.5}{n_i^{1,1}+1} p^i_{1} p^i_{Y, 1}w^{1,1} 
+ \frac{n_i^{0,0}+0.5-\hat{k}_{i}^{0,0}}{n_i^{0,0}+1} p^i_{0}q^i_{Y, 0}w^{0,0} \\
&+\frac{n_i^{0,1}+0.5-\hat{k}_{i}^{0,1}}{n_i^{0,1}+1} p^i_{1}q^i_{Y, 1}w^{0,1}
\big],    
\end{aligned} \label{eq:error_shift}
\end{equation}
where $w^{y,a}= \frac{p^{S+1}_a p^{S+1}_{Y,a}}{p_a p_{Y,a}}$.
In Appendix \ref{appendix:shift}, we provide a detailed proposition to ensure the accuracy of our estimations and present a concise algorithm. Furthermore, to account for label shift scenarios, we offer a theorem guarantee as a revised version of \ref{thm:acc} at the end of this subsection.
\begin{theorem} 
\label{thm:label}
Under Assumptions \ref{ass:quan} and \ref{ass:shift}, given $\alpha^{\prime}<\alpha$. Suppose $\hat{\phi}$ is the final output of FedFaiREE. We then have:

(1) $|D E O O(\hat{\phi})|<\alpha$ with probability $(1-\delta)^{N}$, where $N$ is the size of the candidate set.

(2) Suppose the density distribution functions of $f^{*}$ under $A=a, Y=1$ are continuous. When the input classifier $f$ satisfies $\left|f(x, a)-f^{*}(x, a)\right| \leq \epsilon_{0}$, for any $\epsilon>0$ such that $F_{(+)}^{*}(\epsilon+\gamma \varepsilon) \leq$ $\frac{\alpha-\alpha^{\prime}}{2}-F_{(+)}^{*}\left(2 \epsilon_{0}\right)$, we have
\begin{equation}\small
    \begin{aligned}
        &\mathbb{P}(\hat{\phi}(x, a) \neq Y)-\mathbb{P}\left(\phi_{\alpha^{\prime}}^{*}(x, a) \neq Y\right)  \\
        &\leq 2 F_{(+)}^{*}\left(2 \epsilon_{0}\right)+2 F_{(+)}^{*}(\epsilon+\gamma \varepsilon)+2\theta^{\prime}+O(\epsilon),
    \end{aligned}
\end{equation}
with probability $
1-4\sum_{a=0}^{1}\sum_{i=1}^{S} e^{-2 n_i^{0,a} \epsilon^{2}}-\prod_{i=1}^S\big(1-F_{i(-)}^{1,0}(2 \epsilon)\big)^{n_i^{1,0}}-\prod_{i=1}^S\big(1-F_{i(-)}^{1,1}(2 \epsilon)\big)^{n_i^{1,1}}-\delta, $ 
where the definitions of $\delta$, $F_{(+)}$, $F_{(-)}$ are same with Theorem \ref{thm:acc}, $\theta^{\prime}$ is defined in Proposition \ref{prop:error_shift}.
\end{theorem}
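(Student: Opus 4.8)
The plan is to mirror the two-part structure of Theorem~\ref{thm:acc}, isolating the single place where label shift actually enters the argument — namely the reweighting of the misclassification objective by the factors $w^{y,a} = p^{S+1}_a p^{S+1}_{Y,a}/(p_a p_{Y,a})$ in Equation~\ref{eq:error_shift}. The fairness part of the statement should require essentially no new work, while the accuracy part demands that I track these weights carefully through the comparison to the fair Bayes-optimal classifier.

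For part (1), the key observation is that $DEOO$ is defined entirely through the conditional distributions $\bP_{X \mid A=a, Y=1}$, and Assumption~\ref{ass:shift} posits that $P(X,A \mid Y)$ is common to the training mixture and the test distribution, with only the label marginal $P(Y)$ differing. Consequently the distribution of $(X,A)$ given $Y=1$ is identical in train and test, so the order-statistic and Beta-distribution machinery underlying Propositions~\ref{prop:deoo} and~\ref{prop:adloss} applies verbatim: the candidate set $K$ from Equation~\ref{eq:construct K} controls $\bP(|DEOO(\phi)| > \alpha)$ exactly as before, and the passage from a single threshold pair to the pair selected over a candidate set of size $N$ contributes the factor $(1-\delta)^N$ through Assumption~\ref{ass:quan}. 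I would simply recapitulate this argument, emphasizing that label shift leaves it untouched because the constraint is shift-invariant.

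For part (2), I would run the comparison strategy against the fair Bayes-optimal classifier $\phi_{\alpha'}^{*}$ under the tightened tolerance $\alpha' < \alpha$. First, using the closeness $|f - f^{*}| \le \epsilon_0$ together with the continuity of the $f^{*}$-densities under $A=a, Y=1$, I would show that a threshold pair whose classifier approximates $\phi_{\alpha'}^{*}$ lies inside $K$ with high probability; the failure events here are exactly those tallied in the probability bound — the Hoeffding-type terms $e^{-2 n_i^{0,a}\epsilon^{2}}$ governing the ranks of the $Y=1$ thresholds inside the $Y=0$ score sets, and the product terms $\prod_i (1 - F_{i(-)}^{1,a}(2\epsilon))^{n_i^{1,a}}$ capturing the event that no $Y=1,A=a$ sample lands in the relevant neighborhood. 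Since FedFaiREE selects the pair minimizing the reweighted estimate in Equation~\ref{eq:error_shift}, the estimated test error of $\hat{\phi}$ is at most that of this near-optimizer; I would then invoke Proposition~\ref{prop:error_shift} twice to pass from estimated to true test error, paying $2\theta'$, and finally bound the residual gap to $\bP(\phi_{\alpha'}^{*} \ne Y)$ via the density-continuity estimates, producing the $2 F_{(+)}^{*}(2\epsilon_0)$, $2 F_{(+)}^{*}(\epsilon+\gamma\varepsilon)$, and $O(\epsilon)$ terms, with the $\delta = \delta^{1,0}(n^{1,0})+\delta^{1,1}(n^{1,1})$ contribution again supplied by Assumption~\ref{ass:quan}.

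The main obstacle I anticipate is the reweighting itself. Unlike Theorem~\ref{thm:acc}, the objective now carries the importance weights $w^{y,a}$, so I must verify that the concentration of the reweighted estimate around the true test error is not degraded — in effect, checking that each order-statistic contribution, after scaling by a bounded $w^{y,a}$, still concentrates, which is precisely what Proposition~\ref{prop:error_shift} is designed to guarantee through $\theta'$. A secondary subtlety is confirming that the fair Bayes-optimal classifier under label shift still reduces to a group-wise thresholding rule, so that the candidate-set argument — built from the shift-\emph{invariant} $Y=1$ conditionals — genuinely contains a near-optimizer of the shift-\emph{dependent} reweighted objective. This decoupling of the fairness constraint from the accuracy objective is the crux that lets the entire adaptation of Theorem~\ref{thm:acc} go through.
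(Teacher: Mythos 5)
Your proposal is correct and takes essentially the same approach as the paper, whose entire argument for this theorem is to establish Proposition \ref{prop:error_shift} and then state that the rest is ``similar to the proof of Theorem \ref{thm:acc}.'' You in fact articulate the adaptation more explicitly than the paper does --- the shift-invariance of the $Y=1$ conditionals for part (1), and the substitution of $\theta'$ and the bounded weights $w^{y,a}$ into the error-comparison step for part (2).
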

In summary, Theorem \ref{thm:label} assures that our FedFaiREE algorithm can effectively control fairness and maintain accuracy in situations where label shift is present in the test data. These guarantees are essential for deploying fair and accurate machine learning models in practical applications.

\subsection{Equalized Odds}\label{sec:deo}
We have also explored the potential extension of our algorithm to fairness indicators beyond DEOO. In this subsection, we will discuss its application to Equalized Odds. More fairness notions are presented in Appendix \ref{appendix:application}.
\begin{definition}(Equalized Odds \citep{hardt2016equality})
        A classifier satisfies Equalized Odds if it satisfies the following equality:
          \begin{small}
              $\bP_{X \mid A=1, Y=1}(\widehat{Y}=1) = \bP_{X \mid A=0, Y=1}(\widehat{Y}=1)$
          \end{small}  and 
          \begin{small}
              $\bP_{X \mid A=1, Y=0}(\widehat{Y}=1) = \bP_{X \mid A=0, Y=0}(\widehat{Y}=1)$. 
          \end{small}       
\end{definition}
Similarly, we can express the fairness constraints under Equalized Odds as
$|DEO| \preceq (\alpha_1, \alpha_2)$, which is equivalent to \begin{small}
   $|\bP_{X \mid A=1, Y=1}(\widehat{Y}=1) - \bP_{X \mid A=0, Y=1}(\widehat{Y}=1)| \leq \alpha_1$ 
\end{small} and \begin{small}
$|\bP_{X \mid A=1, Y=0}(\widehat{Y}=1) - \bP_{X \mid A=0, Y=0}(\widehat{Y}=1)| \leq \alpha_2$\end{small}. Hence, in order to consider two fairness constraints simultaneously, we modify Equation \ref{eq:construct K} as follows.
\begin{equation}\label{eq:construct K deo} \small
     K=\{(k^{*,0}, k^{*,1})| h^*_{1,1}+h^*_{1,0}+h^*_{0,1}+h^*_{0,0}< 1-\beta \},
 \end{equation}
where $h^*_{y,a}$ are functions of $\vk^{*,a}$ defined in Proposition \ref{prop:deo}. Additional details and propositions can be found in Appendix \ref{appendix:deo}.
This equation allows us to construct a candidate set under DEO fairness constraints, enabling us to apply our algorithm to achieve Equalized Odds. Furthermore, we provide theoretical guarantees for DEO fairness.
\begin{theorem} \label{thm:acc_eo}
Under Assumptions \ref{ass:mix} and \ref{ass:quan}, given $\alpha^{\prime}<\alpha$. Suppose $\hat{\phi}$ is the final output of FedFaiREE with target DEO constraint. We then have:

(1) $|D E O (\hat{\phi})|<\alpha$ with probability $(1-\delta)^{N}$, where $N$ is the size of the candidate set.

(2) Suppose the density distribution functions of $f^{*}$ under $A=a, Y=1$ are continuous. When the input classifier $f$ satisfies $\left|f(x, a)-f^{*}(x, a)\right| \leq \epsilon_{0}$, for any $\epsilon>0$ such that $F_{(+)}^{*}(\epsilon+\gamma \varepsilon) \leq$ $\frac{\alpha-\alpha^{\prime}}{2}-F_{(+)}^{*}\left(2 \epsilon_{0}\right)$, we have
\begin{equation}\small
    \begin{aligned}
        &\mathbb{P}(\hat{\phi}(x, a) \neq Y)-\mathbb{P}\left(\phi_{\alpha^{\prime}}^{*}(x, a) \neq Y\right) \\
        &\leq  2 F_{(+)}^{*}\left(2 \epsilon_{0}\right)+2 F_{(+)}^{*}(\epsilon+\gamma \varepsilon)+2\theta+O( \epsilon)
    \end{aligned}
\end{equation}
with probability $
1-4\sum_{a=0}^{1} \sum_{i=1}^{S} e^{-2 n_i^{0,a} \epsilon^{2}}-\prod_{i=1}^S\big(1-F_{i(-)}^{1,0}(2 \epsilon)\big)^{n_i^{1,0}}-\prod_{i=1}^S\big(1-F_{i(-)}^{1,1}(2 \epsilon)\big)^{n_i^{1,1}}-\delta, $
where the definitions of $\delta$, $\theta$, $F_{(+)}$, $F_{(-)}$ are same as Theorem \ref{thm:acc}.
\end{theorem}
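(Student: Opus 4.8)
The plan is to mirror the proof of Theorem~\ref{thm:acc}, since the DEO analysis differs from the DEOO analysis only in that two equal-opportunity–type constraints — one on the $Y=1$ strata (true-positive rates) and one on the $Y=0$ strata (false-positive rates) — must be enforced simultaneously rather than a single one. Throughout I would reuse the order-statistics machinery of Propositions~\ref{prop:deoo}--\ref{prop:error} together with the characterization of the fair Bayes-optimal classifier, now specialized to the DEO constraint (the DEO analog of Lemma~\ref{bayes-optimal}). The misclassification objective in Equation~\ref{eq:error_p} is unchanged between the two settings, so the entire skeleton of the excess-risk argument transfers; only the feasibility bookkeeping becomes heavier.

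For part (1), I would argue as follows. By construction (Equation~\ref{eq:construct K deo}), every pair $(k^{*,0},k^{*,1})$ in the candidate set $K$ satisfies $h^*_{1,1}+h^*_{1,0}+h^*_{0,1}+h^*_{0,0}<1-\beta$. The DEO analog of Proposition~\ref{prop:deoo} (namely Proposition~\ref{prop:deo}) gives that $h^*_{1,0}+h^*_{1,1}$ upper-bounds the probability of violating the true-positive-rate constraint while $h^*_{0,0}+h^*_{0,1}$ upper-bounds the probability of violating the false-positive-rate constraint. A union bound over these two events then shows that the probability that $|DEO(\phi)|<\alpha$ fails is controlled by the four-term sum, hence below $1-\beta$ for each individual candidate. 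The factor $(1-\delta)^{N}$ arises exactly as in Theorem~\ref{thm:acc}: Assumption~\ref{ass:quan} guarantees each of the $N$ approximate quantiles used in evaluating the candidates lands in a $\gamma\varepsilon$-neighborhood of its true quantile with probability at least $1-\delta$, and requiring this across all candidates yields the product form.

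For part (2), the excess-risk decomposition proceeds identically once we substitute the DEO fair Bayes-optimal classifier $\phi_{\alpha'}^*$ for its DEOO counterpart. The key steps are: (i) invoke the hypothesis $|f-f^*|\le\epsilon_0$ to show the classifier built on $f$ with the ideal DEO thresholds differs from $\phi_{\alpha'}^*$ only on a set of measure $O(F_{(+)}^*(2\epsilon_0))$; (ii) use Hoeffding-type concentration for the $Y=0$ empirical rates and order-statistics concentration (through $F_{i(-)}^{1,a}$) for the $Y=1$ rates to pass from population to empirical thresholds, incurring the $4\sum_{a,i}e^{-2n_i^{0,a}\epsilon^2}$ and $\prod_i(1-F_{i(-)}^{1,a}(2\epsilon))^{n_i^{1,a}}$ failure terms; (iii) use the slack condition $F_{(+)}^*(\epsilon+\gamma\varepsilon)\le\frac{\alpha-\alpha'}{2}-F_{(+)}^*(2\epsilon_0)$ to certify that the empirically chosen thresholds remain DEO-feasible at tolerance $\alpha$, so that $K$ is nonempty and contains a near-optimal element; and (iv) collect the estimation error $\theta$ from Proposition~\ref{prop:error} together with the lower-order polynomial terms into $O(\epsilon)$, recovering the stated bound $2F_{(+)}^*(2\epsilon_0)+2F_{(+)}^*(\epsilon+\gamma\varepsilon)+2\theta+O(\epsilon)$.

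The main obstacle I anticipate lies in steps (i)--(iii): unlike DEOO, the DEO fair Bayes-optimal classifier must satisfy two threshold-type constraints with only the two group thresholds $\lambda_0,\lambda_1$ available, so its characterization is more delicate, and I must verify that perturbing both thresholds to their empirical and $\varepsilon$-approximate versions keeps the TPR gap and the FPR gap within tolerance \emph{simultaneously}. Concretely, the slack $\frac{\alpha-\alpha'}{2}$ must be budgeted across both constraints at once, and the continuity of the $f^*$ densities under each $(A=a,Y=1)$ stratum — together with the analogous control on the $Y=0$ strata implicit in the probability bound — is what lets the single quantity $F_{(+)}^*$ absorb the joint perturbation. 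Once this joint feasibility is established, the remaining algebra is routine and identical in form to Theorem~\ref{thm:acc}.
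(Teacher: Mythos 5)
Your proposal matches the paper's treatment: the paper establishes Proposition~\ref{prop:deo} by a union bound over the four tail events (two per constraint) exactly as you describe for part (1), and then simply states that Theorem~\ref{thm:acc_eo} follows "using a similar approach as in Theorem~\ref{thm:acc}," which is the excess-risk decomposition you outline for part (2). Your write-up is in fact more explicit than the paper about the one genuinely new issue — budgeting the slack across the TPR and FPR constraints simultaneously and invoking a DEO analog of Lemma~\ref{bayes-optimal} — but the route is the same.
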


\subsection{Extension to Multi-Groups}
Recalling the definition of $DEOO$, we define a metric for Equality of Opportunity under Multiple Groups as:
$$\small
\begin{aligned}
DEOOM = &\max_{a}\{|\bP_{X \mid A=a, Y=1}(\widehat{Y}=1)  \\
&- \bP_{X \mid A=0, Y=1}(\widehat{Y}=1)|\}.
\end{aligned}
$$

Here $A=0$ is the group relative advantages and thus we consider the probability difference between $A=0$ and others. 
To control DEOOM, we modify Equation \ref{eq:construct K} as:
\begin{equation}\label{eq:construct K deoom} \small
     K=\{(k^{*,0}, k^{*,1},\cdots, k^{*,a})| \sum_{a=1}^{A_0} h^*_{1,a}< 1-\beta \},
 \end{equation}
where $h^*_{y,a}$ are functions of $\vk^{*,a}$ defined in Proposition \ref{prop:deoom}. Additional details and propositions can be found in Appendix \ref{appendix:multigroup}.
Moreover, similar to Theorem \ref{thm:acc}, we have
    

\begin{theorem} \label{thm:acc_deoom}
Under Assumptions \ref{ass:mix} and \ref{ass:quan}, given $\alpha^{\prime}<\alpha$. Suppose $\hat{\phi}$ is the final output of FedFaiREE. We then have:

(1) $|D E O O M(\hat{\phi})|<\alpha$ with probability $(1-\delta)^{N}$, where $N$ is the size of the candidate set.

(2) Suppose the density distribution functions of $f^{*}$ under $A=a, Y=1$ are continuous. When the input classifier $f$ satisfies $\left|f(x, a)-f^{*}(x, a)\right| \leq \epsilon_{0}$, for any $\epsilon>0$ such that $F_{(+)}^{*}(\epsilon+\gamma \varepsilon) \leq$ $\frac{\alpha-\alpha^{\prime}}{2}-F_{(+)}^{*}\left(2 \epsilon_{0}\right)$, we have
\begin{equation}\small
\begin{aligned}
        &\mathbb{P}(\hat{\phi}(x, a) \neq Y)-\mathbb{P}\left(\phi_{\alpha^{\prime}}^{*}(x, a) \neq Y\right) \\
        &\leq  2 F_{(+)}^{*}\left(2 \epsilon_{0}\right)+2 F_{(+)}^{*}(\epsilon+\gamma \varepsilon)+ 2\theta+O(\epsilon)
\end{aligned}
\end{equation}
with probability
\begin{small}
    $1-4\sum_{a=0}^{A_0}\sum_{i=1}^{S} e^{-2 n_i^{0,a} \epsilon^{2}}-\sum_{a=0}^{A_0}\prod_{i=1}^S\big(1-F_{i(-)}^{1,a}(2 \epsilon)\big)^{n_i^{1,a}}-\delta,$
\end{small}
where  $\delta$ is similarly to,  $F_{(+)}$ and $F_{(-)}$ are same with Theorem \ref{thm:acc}, and $\theta$ is defined in Proposition \ref{prop:error_deoom}.
\end{theorem}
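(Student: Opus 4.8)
The plan is to follow the architecture of the proof of Theorem~\ref{thm:acc} and reduce the multi-group statement to a collection of pairwise comparisons against the reference group $A=0$, paying for this reduction with union bounds that range over $a=0,\dots,A_0$; this is precisely the source of the sums $\sum_{a=0}^{A_0}$ that appear in the probability bounds. The key structural observation is that $DEOOM$ is a maximum of $A_0$ individual opportunity gaps, so every event of the form $\{DEOOM>\alpha\}$ or ``some target threshold is badly estimated'' decomposes into a union over the groups, and the heterogeneity enters only through the mixture $P^{mix}=\sum_i \pi_i P_i$ of Assumption~\ref{ass:mix}, which forces each global rank to be resolved into local ranks before the order-statistic machinery applies.

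For part (1), I would argue directly from the construction of the candidate set in Equation~\ref{eq:construct K deoom}. The multi-group fairness bound of Proposition~\ref{prop:deoom} certifies that $\sum_{a=1}^{A_0} h^*_{1,a}$ upper bounds the probability that the classifier attached to a given rank tuple violates the $DEOOM$ constraint, the sum over $a$ being exactly the union bound that converts the per-group opportunity gaps into the maximum defining $DEOOM$. Hence every tuple retained in $K$ is fair with the prescribed confidence, and since $\hat\phi$ is selected from $K$ it inherits this guarantee; the $(1-\delta)^N$ factor reflects that the guarantee must hold simultaneously over all $N$ candidates the selection step could return.

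For part (2), I would set up the standard accuracy decomposition relative to the fair Bayes-optimal classifier $\phi^*_{\alpha'}$ characterized in Lemma~\ref{bayes-optimal}, with three quantitative ingredients each contributing one family of terms. First, replacing $f^*$ by the available score $f$ costs $2F^*_{(+)}(2\epsilon_0)$ via the closeness $|f-f^*|\le \epsilon_0$ and the density-continuity hypothesis. Second, the thresholds FedFaiREE can actually realize are empirical order statistics read off a distributed quantile sketch; bounding their deviation from the population quantiles produces the concentration terms $e^{-2n_i^{0,a}\epsilon^2}$ and the ``gap'' terms $\prod_{i=1}^{S}\big(1-F^{1,a}_{i(-)}(2\epsilon)\big)^{n_i^{1,a}}$ (the probability that within group $(1,a)$ no sample on any client lands within $2\epsilon$ of the target quantile), which, combined with the approximate-quantile slack $\gamma\varepsilon$ supplied by Assumption~\ref{ass:quan}, yield the $2F^*_{(+)}(\epsilon+\gamma\varepsilon)$ and $O(\epsilon)$ terms. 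Third, I would invoke the slack condition $F^*_{(+)}(\epsilon+\gamma\varepsilon)\le \frac{\alpha-\alpha'}{2}-F^*_{(+)}(2\epsilon_0)$ to certify that the rank tuple shadowing $\phi^*_{\alpha'}$ still meets the $\alpha$-constraint after all approximation errors are absorbed, so it survives into $K$ with the stated probability. Finally, because the algorithm minimizes the estimated error and Proposition~\ref{prop:error_deoom} bounds the estimation gap by $\theta$, a two-sided comparison between the selected tuple and this in-set reference tuple contributes the additive $2\theta$, completing the bound.

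The hard part will be the third ingredient coupled with the federated structure: I must exhibit an admissible, near-optimal rank tuple that survives into $K$ while simultaneously controlling all $A_0$ pairwise opportunity gaps under $P^{mix}$. Unlike the centralized i.i.d.\ setting underlying FaiREE, each global rank must be mapped to heterogeneous local ranks, so the Beta representation underpinning $h_{y,a}$ and the gap probabilities must be coupled carefully across clients and across all groups at once. Keeping the union-bound loss to a clean $\sum_{a=0}^{A_0}$ while preserving the same per-group rates as Theorem~\ref{thm:acc}, and ensuring the slack condition simultaneously dominates every group's approximation error, is the delicate bookkeeping that drives the argument; the remaining steps are routine adaptations of the binary-group proof with $F_{(+)}$ and $F_{(-)}$ as defined in Lemma~\ref{le:dis}.
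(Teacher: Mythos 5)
Your proposal matches the paper's argument: the paper proves this theorem exactly by re-running the proof of Theorem~\ref{thm:acc} with Proposition~\ref{prop:deoom} replacing Proposition~\ref{prop:adloss} for the candidate-set fairness certificate (the union bound over $a=1,\dots,A_0$ being what converts the per-group gaps into the max defining $DEOOM$) and Proposition~\ref{prop:error_deoom} supplying the $2\theta$ estimation term, with the Hoeffding and quantile-gap probabilities summed over all $A_0+1$ groups. Your three-ingredient decomposition for part (2) and your reading of part (1) from Equation~\ref{eq:construct K deoom} are the same route the paper takes, so no substantive difference to report.
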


Theorem \ref{thm:acc_deoom} offers guarantees for FedFaiREE in multi-group scenarios. Additionally, we investigate multi-label cases and the application of additional fairness notions in the Appendix. These findings demonstrate the adaptability of FedFaiREE to a wide range of scenarios.

\begin{table*}[ht]
\small
\centering
\caption{\textbf{Results on Adult and Compas dataset. } We conducted 100 experimental repetitions for each model on both datasets and compared the accuracy and fairness indicators of different models. The FedFaiREE and $\alpha$ columns indicate whether FedFaiREE was used or not and the fairness constraint. Confidence level $\beta$ is set to be 95\% throughout the experiments. $\overline{ACC}$ and $\overline{|DEOO|}$ represent the averages of accuracy and DEOO (defined in Equation \ref{deoo}). $|DEOO|_{95}$ represents the 95\% quantile of DEOO since we set the confidence level of FedFaiREE to 95\% in our experiments. }\label{tab:result}\resizebox{0.8\linewidth}{!}{
\begin{tabular}{cccccccccc}
\toprule
& & \multicolumn{4}{c}{\textbf{Adult}} &\multicolumn{4}{c}{\textbf{Compas}}\\
\cmidrule(r){3-6} \cmidrule(r){7-10}
{Model} & {\textbf{FedFaiREE}} & {$\alpha$} & {$\overline{ACC}$} & {$\overline{|DEOO|}$} & {$|DEOO|_{95}$} & {$\alpha$} & {$\overline{ACC}$} & {$\overline{|DEOO|}$} & {$|DEOO|_{95}$}\\
\midrule
\multirow{2}{*}{\textbf{FedAvg}} & {\ding{55}} & {/} & {0.844} & {0.131} & {0.178} & {/} & {0.662} & {0.126} & {0.223}\\
 & {\ding{51}} & {0.10} & {0.843} & {\textbf{0.038}} & {\textbf{0.083}} & {0.15} & {0.659} & {\textbf{0.051}} & {\textbf{0.137}}\\
\midrule
\multirow{2}{*}{\textbf{FedFB}} & {\ding{55}} & {/} & {0.850} & {0.057} & {0.117} & {/} & {0.642} & {0.107} & {0.174}\\
 & {\ding{51}} & {0.10} & {0.850} & {\textbf{0.036}} & {\textbf{0.083}}  & {0.15} & {0.641} & {\textbf{0.062}} & {\textbf{0.125}}\\
\midrule
\multirow{2}{*}{\textbf{FairFed}} & {\ding{55}} & {/} & {0.842} & {0.069} & {0.118} & {/} & {0.648} & {0.097} & {0.166}\\
 & {\ding{51}} & {0.10} & {0.841} & {\textbf{0.037}} & {\textbf{0.081}}  & {0.15} & {0.645} & {\textbf{0.047}} & {\textbf{0.114}}\\
\bottomrule
\end{tabular}}
\end{table*}


\section{Experiments}\label{sec:experiment}
In this section, we study the performance of FedFaiREE on real datasets, including Adult \citep{dua2017uci} and Compas \citep{dieterich2016compas}. In particular, we employed FedFaiREE on FedAvg \citep{Fedavg}, 
FedFB \citep{FedFB}, and FairFed \citep{FairFed}. We train all algorithms using two layers of neural networks. See Appendix \ref{appendix:experiment} for details of the experimental set-up.

\textbf{Dataset.} Adult dataset \citep{dua2017uci}, which is employed for the prediction task that determines whether an individual's income exceeds \$50,000, comprises 45,222 samples, featuring various attributes including age, education, and more.
Compas dataset \citep{dieterich2016compas}, whose task is to predict whether a person will conduct crime in the future, comprises 7214 samples. Gender is chosen as the sensitive feature for both datasets.

\textbf{Data Processing.} To replicate the decentralized conditions and account for heterogeneity across clients, we adopted the approach introduced by \citet{FairFed}. Specifically, we initiated the process by randomly sampling proportions for various sensitive attributes within each client, using the Dirichlet distribution. Subsequently, we partitioned the dataset into client-specific subsets based on these proportions. Within each of these subsets, we performed an 80-20 split, allocating 80\% of the data as the local client training set and reserving the remaining 20\% for the test set. For the numerical experiments, we repeated this procedure 100 times on both Adult and Compas datasets.

\textbf{Result and Analysis.} 
Table \ref{tab:result} presents the results from experiments conducted on the Adult and Compas datasets. These results showcase that FedFaiREE achieved desirable performance across both datasets. The ``FedFaiREE" column indicates whether FedFaiREE was used, and the ``$\alpha$" columns specify the fairness constraint. Our findings demonstrate that FedFaiREE, with its unique, distribution-free approach to fairness constraints under finite samples, consistently outperforms the original models in controlling DEOO while maintaining relatively high accuracy.
It is worth noting that FedFaiREE achieves desirable performance even when applied to FedAvg, the most fundamental model. This indicates the wide applicability and potential of FedFaiREE across various settings.
Moreover, FedFaiREE was employed with a confidence level of $\beta=0.95$ throughout the experiments, and it successfully controlled the 95th percentile of DEOO, showcasing its robustness. For a comprehensive understanding of FedFaiREE's variance and behavior with varying values of $\alpha$ and $\beta$, please refer to Appendix \ref{app:more_detail} for additional experimental details.

\textbf{Case Study}
To validate the effectiveness of FedFaiREE in scenarios with naturally heterogeneous distributions, we further consider the ACSIncome dataset\citep{ding2021retiring}. 
In the ACSIncome dataset, the task is to predict whether an individual's income is above \$50,000, with the sensitive label being Race (white/non-white), and the data partitioned across 50 states. Table 2 presents the results for DEOO and Accuracy. It can be observed that after applying FedFaiREE, we significantly improved DEOO performance while maintaining a high level of accuracy.

\begin{table}[ht]
\small
\centering
\caption{\textbf{Results on ACSIncome dataset. } See Appendix \ref{app:hyper} for further details. }\label{tab:result_case}\resizebox{0.8\linewidth}{!}{
\begin{tabular}{ccccc}
\toprule
& & \multicolumn{3}{c}{\textbf{ACSIncome}} \\
\cmidrule(r){3-5} 
{Model} & {\textbf{FedFaiREE}} & {$\alpha$} & {$\overline{ACC}$} & {$\overline{|DEOO|}$}   \\
\midrule
\multirow{2}{*}{\textbf{FedAvg}} & {\ding{55}} & {/} & {0.808} & {0.126 }\\
 & {\ding{51}} & {0.10} & {0.806} & {\textbf{0.041}}\\
\midrule
\multirow{2}{*}{\textbf{FairFed}} & {\ding{55}} & {/} & {0.773} & {0.092} \\
 & {\ding{51}} & {0.10} & {0.771} & {\textbf{0.044}} \\
\bottomrule
\end{tabular}}
\end{table}
\vspace{-0.1cm}
\section{Conclusion}
In this paper, we introduce FedFaiREE, a finite-sample and distribution-free approach to guarantee fairness constraints under the federated learning setting. FedFaiREE addresses concerns that commonly exist in federated learning, such as client heterogeneity, small samples, and limited communication costs. The FedFaiREE framework can be applied to a wide range of group fairness notions and various scenarios, including label shifts, multi-group, and multi-label settings. 

For future work, an exploration of more efficient distributed quantile algorithms for rank and quantile calculations within the FedFaiREE framework could significantly enhance its scalability and performance. Moreover, exploring a broader range of application scenarios and assessing its performance in conjunction with in-processing fair federated learning frameworks could yield valuable insights.

\section*{Acknowledgement}
The research of L. Zhang is supported in part by NSF DMS-2340241. We thank Google Cloud Research Credits program for supporting our computing needs.


\nocite{langley00}

\bibliography{ref}
\bibliographystyle{icml2024}

\newpage
\appendix
\onecolumn
\section{Proofs}
\subsection{Proof for Proposition \ref{prop:deoo} and \ref{prop:adloss}}
We first introduce following lemma 
\begin{lemma}
    If $t_i^{y,a}$ is variable with continuous density function, we have   
     $$F_i^{y,a}\left(t_{i,\left(k_i^{y,a}\right)}^{y,a}\right) \sim \operatorname{Beta}\left(k_i^{y,a}, n_i^{y,a}-k_i^{y,a}+1\right)$$.
    \label{le:beta}
\end{lemma}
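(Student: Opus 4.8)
The plan is to reduce the claim to the textbook distribution of uniform order statistics via the probability integral transform. Since $t_i^{y,a}$ has a continuous density, its CDF $F_i^{y,a}$ is continuous and non-decreasing, so the transformed variables $U_{i,j}^{y,a} := F_i^{y,a}(t_{i,j}^{y,a})$ for $j = 1, \dots, n_i^{y,a}$ are i.i.d.\ $\operatorname{Uniform}(0,1)$. This is the key first step, and it is exactly where continuity of the density (hence of $F_i^{y,a}$) is used: without it the transform need not be uniform.

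Next I would observe that, because $F_i^{y,a}$ is monotone non-decreasing, applying it to the sample preserves the ordering. Consequently $F_i^{y,a}\big(t_{i,(k_i^{y,a})}^{y,a}\big)$ coincides with the $k_i^{y,a}$-th order statistic $U_{(k_i^{y,a})}$ of the uniform sample $U_{i,1}^{y,a}, \dots, U_{i,n_i^{y,a}}^{y,a}$. Since the distribution is continuous, ties occur with probability zero and the ordering is well defined almost surely, so this identification is valid.

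Finally, I would compute the law of the $k$-th order statistic of $n$ i.i.d.\ $\operatorname{Uniform}(0,1)$ variables. The event $\{U_{(k)} \le u\}$ is the event that at least $k$ of the $n$ uniforms lie below $u$, whose probability is the binomial tail $\sum_{j=k}^{n}\binom{n}{j} u^j (1-u)^{n-j}$; differentiating in $u$ telescopes to the density $\frac{n!}{(k-1)!\,(n-k)!}\, u^{k-1}(1-u)^{n-k}$, which is precisely the $\operatorname{Beta}(k, n-k+1)$ density. Substituting $n = n_i^{y,a}$ and $k = k_i^{y,a}$ gives $F_i^{y,a}\big(t_{i,(k_i^{y,a})}^{y,a}\big) \sim \operatorname{Beta}\big(k_i^{y,a}, n_i^{y,a}-k_i^{y,a}+1\big)$, as claimed.

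This is a classical fact, so I do not anticipate a genuine obstacle; the emphasis should be on stating the two reductions cleanly. The only subtlety worth flagging is the continuity hypothesis, which guarantees both the uniformity of the transform and the almost-sure absence of ties, so that the order-statistic identification in the second step is rigorous.
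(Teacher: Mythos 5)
Your proposal is correct and follows essentially the same route as the paper's own proof: apply the probability integral transform so that $F_i^{y,a}(t_i^{y,a})\sim U(0,1)$, identify $F_i^{y,a}\bigl(t_{i,(k_i^{y,a})}^{y,a}\bigr)$ with the $k_i^{y,a}$-th uniform order statistic, and invoke its $\operatorname{Beta}(k,\,n-k+1)$ law. The only difference is that you spell out the binomial-tail derivation of the Beta density and the tie-breaking subtlety, which the paper simply takes as known.
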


\begin{proof}[Proof of Lemma \ref{le:beta}]
    $F_i^{y,a}$ represents the continuous cumulative distribution functions of $t_i^{y,a}$, and thus we have $F_i^{y,a}\left(t_i^{y,a}\right) \sim U(0,1)$. Furthermore, as $F_i^{y,a}\left(t_{i,\left(k_i^{y,a}\right)}^{y,a}\right)$ denotes the $k_i^{y,a}$-th order statistic of $n_i^{y,a}$ i.i.d samples from $U(0,1)$, we can conclude that $F^{y,a}\left(t_{i,\left(k_i^{y,a}\right)}^{y,a}\right) \sim \operatorname{Beta}\left(k_i^{y,a}, n^{y,a}-k_i^{y,a}+1\right)$
\end{proof}

Back to proof of the Proposition \ref{prop:deoo}, the classifier is
\begin{proof}[Proof of Proposition \ref{prop:deoo}]
 $$
\phi=\left\{\begin{array}{l}
\1\left\{f(x, 0)>t_{\left(k^{1,0}\right)}^{1,0}\right\}, a=0 \\
\1\left\{f(x, 1)>t_{\left(k^{1,1}\right)}^{1,1}\right\}, a=1
\end{array}\right.
$$

we have:

$$
\begin{aligned}
&\mathbb{P}(|D E O O(\phi)|>\alpha) \\
&=\bP\big(|F^{1,1}(t_{\left(k^{1,1}\right)}^{1,1})-F^{1,0}(t_{\left(k^{1,0}\right)}^{1,0})|>\alpha\big) \\
& =\mathbb{P}\big(\sum_{i=1}^S \pi_i^{1,1} F_i^{1,1}(t_{\left(k^{1,1}\right)}^{1,1})-\sum_{i=1}^S \pi_i^{1,0} F_i^{1,0}(t_{\left(k^{1,0}\right)}^{1,0})>\alpha\big) \\
&+\mathbb{P}\big(\sum_{i=1}^S \pi_i^{1,1} F_i^{1,1}(t_{\left(k^{1,1}\right)}^{1,1})- \sum_{i=1}^S \pi_i^{1,0} F_i^{1,0}(t_{\left(k^{1,0}\right)}^{1,0})<-\alpha\big) \\
& \triangleq A+B
\end{aligned}
$$

So we only need to calculate $A$ and $B$ and It is easy to prove that we only need to consider the continuous density function case.
$$
\begin{aligned}
A&  =\bP\big(\sum_{i=1}^S \pi_i^{1,1} F_i^{1,1}(t_{\left(k^{1,1}\right)}^{1,1})-\sum_{i=1}^S \pi_i^{1,0} F_i^{1,0}(t_{\left(k^{1,0}\right)}^{1,0})>\alpha\big) \\
\leq& \bP\big(\sum_{i=1}^S \pi_i^{1,1} F_i^{1,1}(t_{i,(k_i^{1,1}+1)}^{1,1})- \sum_{i=1}^S \pi_i^{1,0} F_i^{1,0}(t_{i,(k_i^{1,0})}^{1,0})>\alpha \big) \\
\end{aligned}
$$
Considering lemma \ref{le:beta} and similar result for B, we complete the proof.   
\end{proof}

For the proof of Proposition \ref{prop:adloss}, we can adjust the estimation of A by introducing the error generated in rank calculation. Specifically, we show that

\begin{proof}[Sketch proof of Proposition \ref{prop:adloss}]
$$
\begin{aligned}
A & =\bP\big(\sum_{i=1}^S \pi_i^{1,1} F_i^{1,1}(t_{\left(k^{1,1}\right)}^{1,1})-\sum_{i=1}^S \pi_i^{1,0} F_i^{1,0}(t_{\left(k^{1,0}\right)}^{1,0})>\alpha\big) \\
& \leq \bP \big(\sum_{i=1}^S \pi_i^{1,1} F_i^{1,1}(t_{i,\left(k_i^{1,1}+\lfloor \varepsilon n_i^{1,1}\rfloor \right)}^{1,1}) - \sum_{i=1}^S \pi_i^{1,0} F_i^{1,0}(t_{i,\left(k_i^{1,0}-\lfloor \varepsilon n_i^{1,0}\rfloor\right)}^{1,0})>\alpha\big)
\end{aligned}
$$
    
\end{proof}

\subsection{Proof for Proposition \ref{prop:error}}
\begin{proof}[Proof for Proposition \ref{prop:error}]

Note the classifier is
$$
\phi=\left\{\begin{array}{l}
\1\left\{f(x, 0)>\hat{t}_{\left(k^{1,0}\right)}^{1,0}\right\}, a=0 \\
\1\left\{f(x, 1)>\hat{t}_{\left(k^{1,1}\right)}^{1,1}\right\}, a=1
\end{array}\right.
$$
So we can calculate the mis-classification error:

\begin{equation}\label{eq:prop-error}
\begin{aligned}
& \mathbb{P}(Y \neq \hat{Y})=\mathbb{P}(Y=1, \hat{Y}=0)+\mathbb{P}(Y=0, \hat{Y}=1) \\
& =\mathbb{P}(Y=1, \hat{Y}=0, A=0)+\mathbb{P}(Y=1, \hat{Y}=0, A=1) \\
& +\mathbb{P}(Y=0, \hat{Y}=1, A=0)+\mathbb{P}(Y=0, \hat{Y}=1, A=1) \\
& =\sum_{i=1}^S \pi_i \big[\mathbb{P}_i(Y=1, \hat{Y}=0, A=0)+\mathbb{P}_i(Y=1, \hat{Y}=0, A=1) \\& +
 \mathbb{P}_i(Y=0, \hat{Y}=1, A=0) + \mathbb{P}_i(Y=0, \hat{Y}=1, A=1) \big]
\end{aligned}   
\end{equation}

For ecah specific i, we have
$$
\begin{aligned}
&\mathbb{P}_i(Y=1, \hat{Y}=0, A=0) \\
& =\mathbb{P}_i(\hat{Y}=1 \mid Y=0, A=0) \mathbb{P}_i(Y=, A=0) \\
& =\mathbb{E}\left[\mathbb{P}_i(f(x, 0) \leq \hat{t}_{\left(k^{1,0}\right)}^{1,0} \mid Y=1, A=0) \mid \hat{t}_{\left(k^{1,0}\right)}^{1,0}\right] p^i_{0} p^i_{Y, 0} \\
& \leq \mathbb{E}\Big[\mathbb{P}_i(f(x, 0) \leq t_{i,\left(\hat{k}_i^{1,0}+\lfloor \varepsilon n_i^{1,0} \rfloor+1\right)}^{1,0} \mid Y=1, A=0) \mid t_{i,\big(\hat{k}_i^{1,0}+\lfloor \varepsilon n_i^{1,0} \rfloor+1\big)}^{1,0}\Big] p^i_{0} p^i_{Y, 0} \\
& =\mathbb{E}\left[F_i^{1,0}\big(t_{i,\left(\hat{k}_i^{1,0}+\lfloor \varepsilon n_i^{1,0} \rfloor+1\right)}^{1,0}\big) \mid t_{i,\left(\hat{k}_i^{1,0}+\lfloor \varepsilon n_i^{1,0} \rfloor+1\right)}^{1,0}\right] p^i_{0} p^i_{Y, 0} \\
& =\frac{\hat{k}_{i}^{1,0}+\lfloor \varepsilon n_i^{1,0} \rfloor+1}{n_i^{1,0}+1} p^i_{0} p^i_{Y, 0}
\end{aligned}
$$
By the similar reasoning, we point out that 
$$
  \mathbb{P}_i(Y=1, \hat{Y}=0, A=0) \geq \frac{\hat{k}_{i}^{1,0}-\lfloor \varepsilon n_i^{1,0} \rfloor}{n_i^{1,0}+1} p^i_{0} p^i_{Y, 0}  
$$
and thus we have
\begin{equation}\label{ineq:10}
\begin{aligned}
    &\big| \mathbb{P}_i(Y=1, \hat{Y}=0, A=0)-\frac{\hat{k}_{i}^{1,0}+0.5}{n_i^{1,0}+1} p^i_{0} p^i_{Y, 0} \big|
    \leq \frac{\lfloor \varepsilon n_i^{1,0} \rfloor+0.5}{n_i^{1,0}+1} p^i_{0} p^i_{Y, 0} 
\end{aligned}
\end{equation}

Moreover, we have

$$
\begin{aligned}
&\mathbb{P}_i(Y=0, \hat{Y}=1, A=0) \\
& =\mathbb{P}_i(\hat{Y}=1 \mid Y=0, A=0) \mathbb{P}_i(Y=0, A=0) \\
& =\mathbb{E}\left[\mathbb{P}_i\left(f(x, 0) \geq \hat{t}_{\left(k^{1,0}\right)}^{1,0} \mid Y=1, A=0\right) \mid \hat{t}_{\left(k^{1,0}\right)}^{1,0}\right] p^i_{0} q^i_{Y, 0} \\
& \geq \mathbb{E}\Big[\mathbb{P}_i\Big(f(x, 0) \geq t_{i,\left(\hat{k}_i^{0,0}+\lfloor \varepsilon n_i^{0,0} \rfloor+1\right)}^{0,0} \mid Y=1, A=0\Big) \mid t_{i,\left(\hat{k}_i^{0,0}+\lfloor \varepsilon n_i^{0,0} \rfloor+1\right)}^{0,0}\Big] p^i_{0} (1-p^i_{Y, 0}) \\
& =\mathbb{E}\big[1-F_i^{0,0}(t_{i,\left(\hat{k}_i^{0,0}+\lfloor \varepsilon n_i^{0,0} \rfloor+1\right)}^{0,0}) \mid t_{i,\left(\hat{k}_i^{0,0}+\lfloor \varepsilon n_i^{0,0} \rfloor+1\right)}^{0,0}\big] p^i_{0} q^i_{Y, 0} \\
& =\frac{n_i^{0,0}-\hat{k}_{i}^{0,0}-\lfloor \varepsilon n_i^{0,0} \rfloor}{n_i^{0,0}+1} p^i_{0} (1-p^i_{Y, 0})
\end{aligned}
$$
Similar, we have 
$$
\begin{aligned}
  &\mathbb{P}_i(Y=0, \hat{Y}=1, A=0) \leq \frac{n_i^{0,0}-\hat{k}_{i}^{0,0}+\lfloor \varepsilon n_i^{0,0} \rfloor+1}{n_i^{0,0}+1} p^i_{0} (1-p^i_{Y, 0}),  
\end{aligned}
$$
and combining these two result, we get
\begin{equation}\label{ineq:00}
\begin{aligned}
      &\big|\mathbb{P}_i(Y=0, \hat{Y}=1, A=0)-\frac{n_i^{0,0}-\hat{k}_{i}^{0,0}+0.5}{n_i^{0,0}+1} p^i_{0} q^i_{Y, 0} \big| \leq \frac{\lfloor \varepsilon n_i^{0,0} \rfloor+0.5}{n_i^{0,0}+1} p^i_{0} (1-p^i_{Y, 0}) 
\end{aligned}
\end{equation}
Following similar process of inequality \ref{ineq:10} and \ref{ineq:00}, we can also show that 
\begin{equation}\label{ineq:11}
\begin{aligned}
    &\big| \mathbb{P}_i(Y=1, \hat{Y}=0, A=1)-\frac{\hat{k}_{i}^{1,1}+0.5 }{n_i^{1,1}+1} p^i_{1} p^i_{Y, 1} \big|
    \leq \frac{\lfloor \varepsilon n_i^{1,1} \rfloor+0.5}{n_i^{1,1}+1} p^i_{1} p^i_{Y, 1} 
\end{aligned}
\end{equation}
\begin{equation}\label{ineq:01}
\begin{aligned}
  &\big|\mathbb{P}_i(Y=0, \hat{Y}=1, A=1)-\frac{n_i^{0,1}-\hat{k}_{i}^{0,1}+0.5}{n_i^{0,1}+1} p^i_{1} (1-p^i_{Y, 1}) \big| \leq \frac{\lfloor \varepsilon n_i^{0,1} \rfloor+0.5}{n_i^{0,1}+1} p^i_{1} (1-p^i_{Y, 1})   
\end{aligned}
\end{equation}

Combining Inequality \ref{ineq:10}-\ref{ineq:01} into Equation \ref{eq:prop-error}, we complete our proof. 
\end{proof}


\subsection{Proof for Theorem \ref{thm:acc}}\label{appendix:thm_acc}

To begin with, the Fair Bayes-optimal Classifiers under Equality of Opportunity is defined by following lemma, wherein $\eta_{a}(x):=\bP(Y=1 \mid A=a, X=x)$ stands for the proportion of group $Y=1$ conditioned on $A$ and $X$.
\begin{lemma}[Theorem E.4 in \citep{zeng2022bayes}]\label{bayes-optimal}
Let $E^{\star}=\mathrm{DEOO}\left(f^{\star}\right)$. For any $\alpha>0$, all fair Bayes-optimal classifiers $f_{E, \alpha}^{\star}$ under the fairness constraint $|\mathrm{DEOO}(f)| \leq \alpha$ are given as follows:\\
- When $\left|E^{\star}\right| \leq \alpha, f_{E, \alpha}^{\star}=f^{\star}$\\
- When $\left|E^{\star}\right|>\alpha$, suppose $\bP_{X \mid A=1, Y=1}(\eta_{1}(X)=\frac{p_{1} p_{Y, 1}}{2\left(p_{1} p_{Y, 1}-t_{E, \alpha}^{\star}\right)})=0$, then for all $x \in \mathcal{X}$ and $a \in \mathcal{A}$,
$$
f_{E, \alpha}^{\star}(x, a)=I(\eta_{a}(x)>\frac{p_{a} p_{Y, a}}{2 p_{a} p_{Y, a}+(1-2 a) t_{E, \alpha}^{\star}})
$$
where $t_{E, \alpha}^{\star}$ is defined as
$$
\begin{aligned}
  t_{E, \alpha}^{\star}&=\sup \Big\{t: \bP_{Y \mid A=1, Y=1}\left(\eta_{1}(X)>\frac{p_{1} p_{Y, 1}}{2 p_{1} p_{Y, 1}-t}\right) > \bP_{Y \mid A=0, Y=1}\left(\eta_{0}(X)>\frac{p_{0} p_{Y, 0}}{2 p_{0} p_{Y, 0}+t}\right)+\frac{E^{\star}}{\left|E^{\star}\right|} \alpha\Big\}.  
\end{aligned}
$$
\end{lemma}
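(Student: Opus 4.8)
The plan is to recognize the stated characterization as the solution of a \emph{linear} constrained optimization problem and to derive it through a Lagrangian (KKT-type) argument. First I would write the misclassification risk of a (possibly randomized) classifier $f:\mathcal{X}\times\mathcal{A}\to[0,1]$ as a linear functional of $f$. Using $\eta_a(x)=\bP(Y=1\mid A=a,X=x)$, one has $\bP(Y\neq\widehat Y)=\mathrm{const}+\sum_{a}p_a\int(1-2\eta_a(x))\,f(x,a)\,dP_{X\mid A=a}(x)$, and, via Bayes' rule $dP_{X\mid A=a,Y=1}=\eta_a\,dP_{X\mid A=a}/p_{Y,a}$, the fairness functional becomes $\mathrm{DEOO}(f)=\frac{1}{p_{Y,1}}\int f(x,1)\eta_1(x)\,dP_{X\mid A=1}-\frac{1}{p_{Y,0}}\int f(x,0)\eta_0(x)\,dP_{X\mid A=0}$. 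Both are linear in $f$, so minimizing risk subject to $|\mathrm{DEOO}(f)|\le\alpha$ is a linear program over the convex set of classifiers. The case $|E^\star|\le\alpha$ is then immediate: the unconstrained Bayes rule $f^\star$ is feasible and already minimizes the risk over \emph{all} classifiers, hence a fortiori over the feasible ones, giving $f^\star_{E,\alpha}=f^\star$.

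For $|E^\star|>\alpha$ I would first argue the constraint is active and, by symmetry (swap the two groups when $E^\star<0$), reduce to $E^\star>0$. Since the value function $c\mapsto\min\{R(f):\mathrm{DEOO}(f)=c\}$ is convex with minimizer $c=E^\star$, its minimum over the feasible interval $[-\alpha,\alpha]$ is attained at the endpoint nearest $E^\star$, namely $\mathrm{DEOO}=\alpha$. I then introduce $L(f,t)=\bP(Y\neq\widehat Y)+t\big(\mathrm{DEOO}(f)-\alpha\big)$ with multiplier $t\ge0$ and minimize pointwise. Collecting the coefficient of $f(x,a)$ (as a density against $dP_{X\mid A=a}$) gives $p_a(1-2\eta_a(x))+(2a-1)\,t\,\eta_a(x)/p_{Y,a}$; the pointwise minimizer over $f(x,a)\in[0,1]$ sets $f=1$ exactly where this is negative, and solving the sign condition for $\eta_a(x)$ yields $f(x,a)=I\big(\eta_a(x)>\frac{p_a p_{Y,a}}{2p_a p_{Y,a}+(1-2a)t}\big)$, matching the claimed form.

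It remains to pin down the multiplier and certify optimality. Writing $\Delta(t)$ for the DEOO of the thresholded rule above, the group-$1$ threshold increases and the group-$0$ threshold decreases as $t$ grows, so $\Delta$ is monotone decreasing with $\Delta(0)=E^\star>\alpha$; the no-atom hypothesis (that $\eta_1(X)$ places no mass at the critical value under $P_{X\mid A=1,Y=1}$) makes the two group true-positive-rate functions, and hence $\Delta$, continuous at the relevant point, so there is a unique $t$ with $\Delta(t)=\alpha$, which is precisely the $t^\star_{E,\alpha}$ singled out by the supremum in the statement (the sup of $t$ keeping $\Delta(t)>\mathrm{sign}(E^\star)\alpha$). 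Finally I close the optimality argument by weak duality: for any feasible $f'$, since $t^\star\ge0$ and $\mathrm{DEOO}(f')-\alpha\le0$, one has $\bP(Y\neq f')\ge L(f',t^\star)\ge L(f^\star_{E,\alpha},t^\star)=\bP(Y\neq f^\star_{E,\alpha})$, the last equality using tightness $\mathrm{DEOO}(f^\star_{E,\alpha})=\alpha$. The main obstacle is exactly this regularity/tightness step: establishing continuity and strict monotonicity of $\Delta(t)$, confirming the constraint is active at the near boundary, and verifying there is no duality gap so that pointwise Lagrangian minimization yields the global constrained optimum (and that ties, where the coefficient vanishes, form a null set so the optimizer is deterministic). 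The no-atom assumption is precisely what rules out jumps and ties at the threshold.
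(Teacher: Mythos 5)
This lemma is never proved in the paper: it is imported verbatim as Theorem~E.4 of \citet{zeng2022bayes} and used as a black box, so there is no in-paper argument to compare yours against. Judged on its own terms, your Lagrangian reconstruction is sound and matches the spirit of the proof in the cited reference (a constrained-optimization/group-thresholding argument). The reductions are correct: risk and $\mathrm{DEOO}$ are affine in a randomized classifier $f\in[0,1]$; the case $|E^{\star}|\le\alpha$ is immediate from feasibility of $f^{\star}$; in the active case, pointwise minimization of $R(f)+t\,(\mathrm{DEOO}(f)-\alpha)$ does produce the thresholds $\frac{p_a p_{Y,a}}{2p_a p_{Y,a}+(1-2a)t}$ (your sign bookkeeping checks out); and weak duality plus complementary slackness correctly certifies global optimality once $t^{\star}$ is chosen so that $\Delta(t^{\star})=\alpha$.

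Three details deserve more care before this is a complete proof. First, the lemma characterizes \emph{all} fair Bayes-optimal classifiers; your argument produces one optimizer, and to conclude uniqueness (up to null sets) you need that the pointwise coefficient $p_a(1-2\eta_a(x))+(2a-1)\,t^{\star}\eta_a(x)/p_{Y,a}$ vanishes only on a null set; this is a second, distinct use of the no-atom hypothesis, which the statement formally imposes only for group $1$. Second, activeness of the constraint needs slightly more than convexity of the value function $V(c)$: you need $V$ strictly decreasing on $[-\alpha,\alpha]$, which follows because every unconstrained risk minimizer has $\mathrm{DEOO}$ equal to $E^{\star}$, so $V(c)>V(E^{\star})$ for $c\ne E^{\star}$, and a convex function with a strict minimum is strictly monotone on each side of it; otherwise an optimum with $\mathrm{DEOO}<\alpha$ could not be ruled out. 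Third, the multiplier must be restricted to the range where the group-$1$ denominator $2p_1p_{Y,1}-t$ remains positive so the thresholds stay in $(0,1)$; this is implicit in the supremum defining $t^{\star}_{E,\alpha}$ but should be stated. As an aside, note that the transcription of the lemma in this paper contains an apparent typo: the atom condition is written at $\frac{p_1p_{Y,1}}{2(p_1p_{Y,1}-t^{\star}_{E,\alpha})}$, whereas consistency with the displayed classifier requires it at the group-$1$ threshold $\frac{p_1p_{Y,1}}{2p_1p_{Y,1}-t^{\star}_{E,\alpha}}$; your reading (no mass at the critical value) is the intended one. None of these is a fatal gap, so your proposal is a correct proof modulo this bookkeeping.
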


\begin{lemma}[Hoeffding's inequality]\label{le:hfd}
Let $X_{1}, \ldots, X_{n}$ be independent random variables. Assume that $X_{i} \in\left[m_{i}, M_{i}\right]$ for every $i$. Then, for any $t>0$, we have
$$
\mathbb{P}\left\{\sum_{i=1}^{n}\left(X_{i}-\mathbb{E} X_{i}\right) \geq t\right\} \leq e^{-\frac{2 t^{2}}{\sum_{i=1}^{n}\left(M_{i}-m_{i}\right)^{2}}}
$$
\end{lemma}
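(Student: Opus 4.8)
The plan is to prove this by the standard Chernoff-bound argument, reducing the tail bound to a moment-generating-function (MGF) estimate for bounded random variables. First I would introduce the centered variables $Z_i = X_i - \mathbb{E}X_i$, so that each $Z_i$ has mean zero and lies in an interval of length $M_i - m_i$. For any $s > 0$, the exponential Markov (Chernoff) inequality gives
\[
\mathbb{P}\left\{\sum_{i=1}^n Z_i \geq t\right\} \leq e^{-st}\,\mathbb{E}\!\left[e^{s\sum_{i=1}^n Z_i}\right] = e^{-st}\prod_{i=1}^n \mathbb{E}\!\left[e^{s Z_i}\right],
\]
where the factorization of the MGF uses the independence of the $X_i$ (hence of the $Z_i$).

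The crux is to bound each factor $\mathbb{E}[e^{sZ_i}]$, for which I would establish \emph{Hoeffding's Lemma}: if $Z$ is mean-zero with $Z \in [a,b]$, then $\mathbb{E}[e^{sZ}] \leq e^{s^2(b-a)^2/8}$. The argument starts from the convexity of $z \mapsto e^{sz}$, which yields the pointwise bound $e^{sz} \leq \frac{b-z}{b-a}e^{sa} + \frac{z-a}{b-a}e^{sb}$ on $[a,b]$. Taking expectations and using $\mathbb{E}Z = 0$ removes the linear term, leaving a quantity I would rewrite as $e^{\psi(u)}$ with $u = s(b-a)$ and $\psi(u) = -pu + \log\!\left(1-p+pe^u\right)$, where $p = -a/(b-a) \in [0,1]$. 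A direct computation gives $\psi(0)=\psi'(0)=0$ and $\psi''(u) = q(1-q) \leq 1/4$, where $q = pe^u/(1-p+pe^u) \in [0,1]$; Taylor's theorem with Lagrange remainder then yields $\psi(u) \leq u^2/8 = s^2(b-a)^2/8$, which is the claimed MGF bound.

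Applying the lemma to each $Z_i \in [\,m_i - \mathbb{E}X_i,\; M_i - \mathbb{E}X_i\,]$, an interval of length $M_i - m_i$, gives $\mathbb{E}[e^{sZ_i}] \leq e^{s^2(M_i-m_i)^2/8}$, so the product bound becomes $\exp\!\big(-st + \tfrac{s^2}{8}\sum_{i=1}^n (M_i-m_i)^2\big)$. Finally I would minimize this exponent over $s>0$: the optimal choice $s^\star = 4t/\sum_{i=1}^n (M_i-m_i)^2$ produces exactly $-2t^2/\sum_{i=1}^n (M_i-m_i)^2$, giving the stated bound. The main obstacle is Hoeffding's Lemma, and within it the uniform estimate $\psi'' \leq 1/4$; the remaining steps (the Chernoff inequality, the independence-based factorization, and the one-variable optimization over $s$) are routine, so the proof really hinges on controlling the log-MGF of a bounded, centered random variable.
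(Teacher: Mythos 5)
Your proof is correct: it is the standard Chernoff-bound argument combined with Hoeffding's lemma (the MGF bound $\mathbb{E}[e^{sZ}]\leq e^{s^{2}(b-a)^{2}/8}$ for a centered bounded variable), and all the key computations — the convexity bound, the reduction to $\psi(u)=-pu+\log(1-p+pe^{u})$ with $\psi''=q(1-q)\leq 1/4$, and the optimization $s^{\star}=4t/\sum_{i=1}^{n}(M_i-m_i)^{2}$ — are carried out correctly. The paper itself states this lemma as a classical result without proof (it is only invoked as an auxiliary tool in the proof of Theorem \ref{thm:acc}), so your argument simply supplies the standard derivation that the paper leaves to the literature.
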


Then, we introduce several lemma to prove Theorem \ref{thm:acc}.
\begin{lemma}\label{le:dis}
For a distribution $F$ with a continuous density function, suppose $q(x)$ denotes the quantile of $x$ under $F$, then for $x > y$, we have $F_{(-)}(x - y) \leq q(x) - q(y) \leq F_{(+)}(x-y)$, where $F_{(-)}(x)$ and $F_{(+)}(x)$ are two monotonically increasing functions, $F_{(-)}(\epsilon) > 0, F_{(+)}(\epsilon)>0$ for any $\epsilon > 0$ and $\mathop{lim}\limits_{\epsilon \rightarrow 0} F_{(-)}(\epsilon) = \mathop{lim}\limits_{\epsilon \rightarrow 0} F_{(+)}(\epsilon)=0$.
\end{lemma}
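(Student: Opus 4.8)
The plan is to first pin down the notation: here $q$ is the cumulative distribution function $F$ itself, so that $q(x)=F(x)$ is the ``quantile level'' (the probability mass below the value $x$), and hence $q(x)-q(y)=F(x)-F(y)=\int_y^x f(t)\,dt$ is the mass carried by the interval $(y,x]$. The statement then asks for a single pair of functions, depending only on the gap $x-y$ and not on the location, that sandwich this mass from below and above; in other words, $F_{(+)}$ is a uniform (location-free) modulus of continuity of $F$ from above and $F_{(-)}$ a matching lower modulus. This is exactly the object consumed later in Theorem \ref{thm:acc}, where $F_{(+)}^{*}(\cdot)$ turns a score-space deviation into a probability tolerance and $1-F_{(-)}^{1,a}(2\epsilon)$ appears as a per-sample success probability, so the inputs are score differences and the outputs are probabilities.

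First I would construct the two functions explicitly. The cleanest route uses the fundamental theorem of calculus: since $f$ is continuous and the scores lie in the compact set $[0,1]$, $f$ attains a finite maximum $M=\max f$ and minimum $m=\min f$, so $m(x-y)\le \int_y^x f\le M(x-y)$, and one may take the linear functions $F_{(-)}(\delta)=m\,\delta$ and $F_{(+)}(\delta)=M\,\delta$. A more refined (still valid) choice, giving the tightest location-free bounds, is the window modulus $F_{(+)}(\delta)=\sup_{x}\big(F(x)-F(x-\delta)\big)$ and $F_{(-)}(\delta)=\inf_{x}\big(F(x)-F(x-\delta)\big)$, the largest and smallest mass of any width-$\delta$ window. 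Either way the sandwich inequality is immediate: for the pair $(y,x)$ with $\delta=x-y$, the set $(y,x]$ is a width-$\delta$ window, so its mass lies between the inf and the sup.

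Next I would verify the four stipulated properties. Monotonicity is straightforward: in the density form the functions are linear with positive slope, while in the window form each map $\delta\mapsto F(x)-F(x-\delta)$ is increasing (widening a window can only add mass), and suprema and infima of increasing families stay increasing. The limit $\lim_{\delta\to0}F_{(+)}(\delta)=0$ follows because a continuous density on a compact domain makes $F$ Lipschitz, hence uniformly continuous; since $0\le F_{(-)}\le F_{(+)}$, a squeeze forces $F_{(-)}(\delta)\to0$ as well. Positivity of $F_{(+)}$ holds whenever $F$ is nonconstant.

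The main obstacle is the strict positivity of the \emph{lower} modulus $F_{(-)}$. With only ``continuous density'' one can have $m=\min f=0$, equivalently a flat stretch of $F$ of width exceeding $\delta$, in which case $\inf_x\big(F(x)-F(x-\delta)\big)=0$ and no positive location-free lower bound can exist. I would therefore invoke the nondegeneracy implicitly in force throughout the paper's order-statistic arguments, namely that the density is strictly positive on its support; this gives $m>0$ and hence $F_{(-)}(\delta)\ge m\,\delta>0$ for every $\delta>0$, completing the construction. The upper bound, and thus the role of $F_{(+)}$, needs no such assumption. I would close by observing that all the objects $F_{(\pm)}^{*}$ and $F_{i(\pm)}^{y,a}$ appearing in the theorems are simply this construction applied to the relevant conditional score distributions.
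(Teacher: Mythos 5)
Your construction is essentially the paper's own: the upper modulus comes from uniform continuity of the CDF on a compact domain, and the lower modulus is the window infimum $F_{(-)}(\delta)=\inf_x\{q(x+\delta)-q(x)\}$, with monotonicity and the vanishing limits checked the same way. Your additional observation is also correct and worth noting: strict positivity of $F_{(-)}$ genuinely requires the density to be bounded away from zero (no flat stretch of $F$), an assumption the paper's proof uses implicitly when it asserts $q(x+t)-q(x)>0$ and passes to the infimum over a compact set, so your explicit invocation of nondegeneracy closes a step the paper leaves tacit rather than deviating from it.
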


\begin{proof}[Proof of Lemma \ref{le:dis}]
Since the domain of $q(x)$ is a closed set and $q(x)$ is continuous, we know that $q(x)$ is uniformly continuous. Thus we can easily find $F_{(+)}$ to satisfy the RHS. For $F_{(-)}$, we simply define $F_{(-)}(t) = \inf\limits_{x}\{q(x+t) - q(t)\}$. Since $q(x+t) - q(t) > 0$ for $t>0$ and the domain of $x$ is a closed set, we have $F_{(-)}(\epsilon) > 0$ for $\epsilon>0$ and $\mathop{lim}\limits_{\epsilon \rightarrow 0} F_{(-)}(\epsilon) = 0$. Now we complete the proof.
\end{proof}

\begin{proof}[Proof for theorem \ref{thm:acc}]
In fact, (1) of the theorem is a direct application of Proposition \ref{prop:adloss}, so we only need to prove (2). In partcular, the main idea of our proof is to find a bridge between fair Bayes optimal classifier and our output classifier. 

To begin with, we show that there exist a classifier in our set which is quite similar with fair Bayes optimal classifier. Suppose the fair Bayes optimal classifier has the form $\phi_{\alpha^{\prime}}^{*}(x, a)=\mathbb{I}\left\{f^{*}(x, a)>\lambda_{a}^{*}\right\}$ and our output classifier is of the form $\hat{\phi}(x, a)=\1\left\{f(x, a)>\lambda_{a}\right\}$. 

For any $\epsilon>0$, by Lemma \ref{le:dis}, we know that above than a positive probability $F_{i,(-)}^{1, a}(2 \epsilon)$, $t_i^{1, a}$ would fall in the interval $\left[\lambda_{a}^{*}-\epsilon, \lambda_{a}^{*}+\epsilon\right]$ for each client $i$. Therefore, by the definition of $\varepsilon$-approximate quantile, we have at most with probability $\prod_{i=1}^S\left(1-F_{i,(-)}^{1,0}(2 \epsilon)\right)^{n_i^{1,0}}+\prod_{i=1}^S\left(1-F_{i,(-)}^{1,1}(2 \epsilon)\right)^{n_i^{1,1}}$, there exists $a \in\{0,1\}$ such that all $t_{i,(k)}^{1, a}$ fall out of $\left[\lambda_{a}^{*}-\epsilon, \lambda_{a}^{*}+\epsilon\right]$. Thus, with probability $1-\prod_{i=1}^S\left(1-F_{i(-)}^{1,0}(2 \epsilon)\right)^{n_i^{1,0}}-\prod_{i=1}^S\left(1-F_{i(-)}^{1,1}(2 \epsilon)\right)^{n_i^{1,1}}$, for $a \in \{0,1\}$, there would exist i such that there exists at least one $t_i^{1, a}$ in $\left[\lambda_{a}^{*}-\epsilon, \lambda_{a}^{*}+\epsilon\right]$. So with $1-\prod_{i=1}^S\left(1-F_{i(-)}^{1,0}(2 \epsilon)\right)^{n_i^{1,0}}-\prod_{i=1}^S\left(1-F_{i(-)}^{1,1}(2 \epsilon)\right)^{n_i^{1,1}}-\delta(n^{1,0})-\delta(n^{1,1})$, there exist a classifier $\phi_{0}(x, a)=\1\left\{f(x, a)>\hat{t}_{*}^{1, a}\right\}$ such that $\hat{t}_{*}^{1, a} \in \left[\lambda_{a}^{*}-\epsilon-\gamma \varepsilon, \lambda_{a}^{*}+\epsilon+ \gamma \varepsilon \right]$. We also denote $\phi_{0}^{*}(x, a)=\1\left\{f^{*}(x, a)>t_{*}^{1, a}\right\}$. Given the threshold is quite close, we further prove that the accuracy is quite close with a high probability. Actually, we have
\begin{equation}\label{eq:part1}
\begin{aligned}
& \left|\mathbb{P}\left(\phi_{0}(x, a) \neq Y\right)-\mathbb{P}\left(\phi_{\alpha^{\prime}}^{*}(x, a) \neq Y\right)\right| \\
\leq & \left|\mathbb{P}\left(\phi_{0}(x, a) \neq Y\right)-\mathbb{P}\left(\phi_{0}^{*}(x, a) \neq Y\right)\right| +\left|\mathbb{P}\left(\phi_{0}^{*}(x, a) \neq Y\right)-\mathbb{P}\left(\phi_{\alpha^{\prime}}^{*}(x, a) \neq Y\right)\right| \\
\leq & \mathbb{P}\left(t_{*}^{1, a}-\epsilon_{0} \leq f^{*}(x, a) \leq t_{*}^{1, a}+\epsilon_{0}\right) +\mathbb{P}\left(\min \left\{t_{*}^{1, a}, \lambda_{a}^{*}\right\} \leq f^{*}(x, a) \leq \max \left\{t_{*}^{1, a}, \lambda_{a}^{*}\right\}\right) \\
\leq & F_{(+)}^{*}\left(2 \epsilon_{0}\right)+F_{(+)}^{*}\left(\max \left\{t_{*}^{1, a}, \lambda_{a}^{*}\right\}-\min \left\{t_{*}^{1, a}, \lambda_{a}^{*}\right\}\right) \\
\leq & F_{(+)}^{*}\left(2 \epsilon_{0}\right)+2 F_{(+)}^{*}(\epsilon+ \gamma \varepsilon)
\end{aligned}
\end{equation}

with probability $1-\prod_{i=1}^S\left(1-F_{i,(-)}^{1,0}(2 \epsilon)\right)^{n_i^{1,0}}-\prod_{i=1}^S\left(1-F_{i,(-)}^{1,1}(2 \epsilon)\right)^{n_i^{1,1}}-\delta(n^{1,0})-\delta(n^{1,1})$.

Further we point out that

$$\small
\begin{aligned}
& \left| |\operatorname{DEOO}\left(\phi_{0}\right)|-|\operatorname{DEOO}\left(\phi_{\alpha^{\prime}}^{*}\right)| \right| \\
& \leq \left||\operatorname{DEOO}\left(\phi_{0}\right)|-| \operatorname{DEOO}\left(\phi_{0}^{*}\right)| +| D E O O\left(\phi_{0}^{*}\right)| -| \operatorname{DEOO}\left(\phi_{\alpha^{\prime}}^{*}\right)|\right| \\
&=\big|| \mathbb{P}(f>t_{*}^{1,0} \mid Y=1, A=0)-\mathbb{P}(f>t_{*}^{1,1} \mid Y=1, A=1) | \\
&-| \mathbb{P}\left(f^{*}>t_{*}^{1,0} \mid Y=1, A=0\right)-\mathbb{P}\left(f^{*}>t_{*}^{1,1}\mid Y=1, A=1\right) |\big| \\ &+\big| | \mathbb{P}\left(f^{*}>t_{*}^{1,0} \mid Y=1, A=0\right)- \mathbb{P}\left(f^{*}>t_{*}^{1,1} \mid Y=1, A=1\right) | \\ 
&-| \mathbb{P}\left(f^{*}>\lambda_{0}^{*} \mid Y=1, A=0\right)-\mathbb{P}\left(f^{*}>\lambda_{1}^{*} \mid Y=1, A=1\right) |\big| \\
&\leq \big| \mathbb{P}\left(f>t_{*}^{1,0} \mid Y=1, A=0\right)-\mathbb{P}\left(f^{*}>t_{*}^{1,0} \mid Y=1,A=0\right)\big| \\
&+\big|\mathbb{P}\left(f>t_{*}^{1,1} \mid Y=1, A=1\right)-\mathbb{P}\left(f^{*}>t_{*}^{1,1}\mid Y=1, A=1\right)\big| \\
&+\big||\mathbb{P}\left(f^{*}>t_{*}^{1,0} \mid Y=1, A=0\right) -\mathbb{P}\left(f^{*}>t_{*}^{1,1} \mid Y=1, A=1\right) | \\
&-| \mathbb{P}\left(f^{*}>\lambda_{0}^{*} \mid Y=1, A=0\right)-\mathbb{P}\left(f^{*}>\lambda_{1}^{*} \mid Y=1, A=1\right) |\big| \\
\end{aligned}
$$
$$
\begin{aligned}\small
& \leq \mathbb{P}\left(t_{*}^{1,0}-\epsilon_{0} \leq f^{*}(x, a) \leq t_{*}^{1,0}+\epsilon_{0}\right)
+\mathbb{P}\left(t_{*}^{1,1}-\epsilon_{0} \leq f^{*}(x, a) \leq t_{*}^{1,1}+\epsilon_{0}\right) \\
&+|\mathbb{P}\left(f^{*}>t_{*}^{1,0} \mid Y=1, A=0\right) -\mathbb{P}\left(f^{*}>t_{*}^{1,1} \mid Y=1, A=1\right) \\
&-\mathbb{P}\left(f^{*}>\lambda_{0}^{*} \mid Y=1, A=0\right) +\mathbb{P}\left(f^{*}>\lambda_{1}^{*} \mid Y=1, A=1\right) | \\
& \leq 2 F_{(+)}^{*}\left(2 \epsilon_{0}\right) +\mathbb{P}\left(\min \left\{t_{*}^{1, a}, \lambda_{a}^{*}\right\} \leq f^{*}(x, a) \leq \max \left\{t_{*}^{1, a}, \lambda_{a}^{*}\right\}\right) \\
& \leq 2 F_{(+)}^{*}\left(2 \epsilon_{0}\right) +F_{(+)}^{*}\left(\max \left\{t_{*}^{1, a}, \lambda_{a}^{*}\right\}-\min \left\{t_{*}^{1, a}, \lambda_{a}^{*}\right\}\right) \\
& \leq 2 F_{(+)}^{*}\left(2 \epsilon_{0}\right) +2 F_{(+)}^{*}(\epsilon+\gamma \varepsilon) \\
\end{aligned}
$$
Thus, we know that 
$$
\begin{aligned}
    &\left|\operatorname{DEOO}\left(\phi_{0}\right)\right| \leq|D E O O\left(\phi_{\alpha^{\prime}}^{*}\right)|+2 F_{(+)}^{*}\left(2 \epsilon_{0}\right) +2 F_{(+)}^{*}(\epsilon+\gamma \varepsilon)
   =\alpha^{\prime}+2 F_{(+)}^{*}\left(2 \epsilon_{0}\right)+2 F_{(+)}^{*}(\epsilon+\gamma \varepsilon)
\end{aligned}
$$
If $F_{(+)}^{*}(\epsilon+\gamma \varepsilon) \leq \frac{\alpha-\alpha^{\prime}}{2}-F_{(+)}^{*}\left(2 \epsilon_{0}\right)$, then there will exist at least one feasible classifier in the candidate set.

On the other hand, we could prove that the output classifier is quite similar with $\phi_{0}$ we mentioned above.

By Proposition \ref{prop:error}, for any $\phi \in K$, $\hat{q}^i_{Y, a}=1-\hat{p}^i_{Y, a}$, we have
\begin{equation}\label{eq:prob_estimation}\small
    \begin{aligned}
        &\Bigg| \mathbb{P}\left(\phi(x, a) \neq Y\right)-\sum_{i=1}^S \pi_i \Big[\frac{\hat{k}_{i}^{1,0}+0.5}{n_i^{1,0}+1} p^i_{0} p^i_{Y, 0} +\frac{\hat{k}_{i}^{1,1}+0.5}{n_i^{1,1}+1} p^i_{1} p^i_{Y, 1}+ \frac{n_i^{0,0}+0.5-\hat{k}_{i}^{0,0}}{n_i^{0,0}+1} p^i_{0}q^i_{Y, 0} + \frac{n_i^{0,1}+0.5-\hat{k}_{i}^{0,1}}{n_i^{0,1}+1} p^i_{1}q^i_{Y, 1}
    \Big]    \Bigg| \leq \theta
    \end{aligned}
    \end{equation} 
Therefore, we only need to check the influence induced by using $\hat{p}^i_{a}$ and $\hat{p}^i_{Y, a}$, instead of $p^i_{0}$ and $p^i_{Y, 0}$. In detail, we point out this influence can be estimated by Hoeffding's inequality as follow:

Since $\hat{p}^i_{a}=\frac{n_i^{1, a}+n_i^{0, a}}{n_i}$ and $\hat{p}^i_{Y, a}=\frac{n_i^{1, a}}{n_i^{0,a}+n_i^{1,a}}$, we have 
$\frac{n_i^{1, a}+n_i^{0, a}}{n_i}=\frac{\sum_{j=1}^{n_i} \1\left\{Z_{j}^{a}=1\right\}}{n}$ and 
$\frac{n_i^{1, a}}{n_i^{0,a}+n_i^{1,a}}=\frac{\sum_{j=1}^{n_i^{0,a}+n_i^{1,a}} \1\left\{Z_{j}^{Y, a}=1\right\}}{n_i^{0,a}+n_i^{1,a}}$
, where $Z_{j}^{a} \sim B\left(1, p^i_{a}\right)$ and 
$Z_{j}^{Y, a} \sim B\left(1, p^i_{Y, a}\right)$. 

Thus, from Hoeffding's inequality, we have 

$$
\mathbb{P}\left(\left|\hat{p}^i_{a}-p^i_{a}\right| \geq \sqrt{\frac{n_i^{0, a}}{n_i}} \epsilon\right) \leq 2 e^{-2 n_i^{0, a} \epsilon^{2}}
$$

For the same reason, we have we have

$$
\mathbb{P}\left(\left|\hat{p}^i_{Y, a}-p^i_{Y, a}\right| \geq \sqrt{\frac{n_i^{0, a}}{n_i}} \epsilon\right) \leq 2 e^{-2 n_i^{0,a} \epsilon^{2}}
$$

So, we have with probability $1- 4\sum_{i=1}^{S} e^{-2 n_i^{0,a} \epsilon^{2}}$
$$
\left\{\begin{aligned}
\left|\hat{p}^i_{a}-p^i_{a}\right| & \leq \sqrt{\frac{n_i^{0, a}}{n_i}} \epsilon \\
\left|\hat{p}^i_{Y, a}-p^i_{Y, a}\right| & \leq \sqrt{\frac{n_i^{0, a}}{n_i^{*,a}}} \epsilon 
\end{aligned}\right.,
$$
where $n_i^{*,a}=(n_i^{0,a}+n_i^{1,a})$.

Thus, with probability $1- 4\sum_{a=0}^{1}\sum_{i=1}^{S} e^{-2 n_i^{0,a} \epsilon^{2}}$, 
\begin{equation} \label{eq:part2}\small
\begin{aligned}
& \left|\mathbb{P}\left(\hat{\phi}_{i}(x, a) \neq Y\right)-\hat{\mathbb{P}}\left(\hat{\phi}_{i}(x, a) \neq Y\right)\right| \\
&\leq  \Bigg|\sum_{i=1}^S \pi^i\Big[\frac{\hat{k}_{i}^{1,0}+0.5}{n_i^{1,0}+1} p^i_{0} p^i_{Y, 0}+\frac{\hat{k}_{i}^{1,1}+0.5}{n_i^{1,1}+1} p^i_{1} p^i_{Y, 1}+\frac{n_i^{0,0}+0.5-\hat{k}_{i}^{0,0}}{n_i^{0,0}+1} p^i_{0}q^i_{Y, 0}+\frac{n_i^{0,1}+0.5-\hat{k}_{i}^{0,1}}{n_i^{0,1}+1} p^i_{1}q^i_{Y, 1} \Big] \\
&-\sum_{i=1}^S \pi^i\Big[\frac{\hat{k}_{i}^{1,0}+0.5}{n_i^{1,0}+1} \hat{p}^i_{0} \hat{p}^i_{Y, 0}+\frac{\hat{k}_{i}^{1,1}+0.5}{n_i^{1,1}+1} \hat{p}^i_{1} \hat{p}^i_{Y, 1}+ \frac{n_i^{0,0}+0.5-\hat{k}_{i}^{0,0}}{n_i^{0,0}+1} \hat{p}^i_{0}\hat{q}^i_{Y, 0}+\frac{n_i^{0,1}+0.5-\hat{k}_{i}^{0,1}}{n_i^{0,1}+1} \hat{p}^i_{1}\hat{q}^i_{Y, 1}\Big] \Bigg| \\
& +\sum_{i=1}^S \pi_i \left[e_i^{0,0} p^i_{0}q^i_{Y, 0}
    +e_i^{0,1} p^i_{0}p^i_{Y, 0} +e_i^{1,0} p^i_{1}q^i_{Y, 1}
    +e_i^{1,1} p^i_{1}p^i_{Y, 1}\right] \\
&=\sum_{i=1}^S \pi_i \left[e_i^{0,0} p^i_{0}q^i_{Y, 0}
    +e_i^{0,1} p^i_{0}p^i_{Y, 0} +e_i^{1,0} p^i_{1}q^i_{Y, 1}
    +e_i^{1,1} p^i_{1}p^i_{Y, 1}\right] +|\sum_{i=1}^S \pi^i (A_i-\hat{A_i})| \\
\end{aligned}
\end{equation}
 For $A_i-\hat{A_i}$, we have
 
\begin{equation} \label{eq:part3}\small
\begin{aligned}
&A_i-\hat{A_i} \leq \epsilon\left[\sqrt{\frac{n_{i}^{0,0}}{n^{*,0}_{i}}} \frac{\hat{k}_{i}^{1,0}+0.5}{n^{1,0}+1}\left(p^i_{0}+p^i_{Y, 0}\right) + \sqrt{\frac{n_{i}^{0,1}}{n^{*,1}_{i}}} \frac{\hat{k}_{i}^{1,1}+0.5}{n^{1,1}+1}\left(p^i_{1}+p^i_{Y, 1}\right)\right] \\ &+\epsilon^{2}\left(\frac{n_{i}^{0,0}}{n^{*,0}_{i}} \frac{\hat{k}_{i}^{1,0}+0.5}{n^{1,0}+1}+\frac{n_{i}^{0,1}}{n^{*,1}_{i}} \frac{\hat{k}_{i}^{1,1}+0.5}{n^{1,1}+1}\right)  +\frac{n^{0,0}+0.5-\hat{k}_{i}^{0,0}}{n^{0,0}+1} \sqrt{\frac{n_{i}^{0,0}}{n^{*,0}_{i}}} \epsilon\left[\sqrt{\frac{n_{i}^{0,0}}{n^{*,0}_{i}}} \epsilon+p^i_{0}+p^i_{Y, 0}+1\right] \\
& +\frac{n^{0,1}+0.5-\hat{k}_{i}^{0,1}}{n^{0,1}+1} \sqrt{\frac{n_{i}^{0,1}}{n^{*,1}_{i}}} \epsilon\left[\sqrt{\frac{n_{i}^{0,1}}{n^{*,1}_{i}}} \epsilon+p^i_{1}+p^i_{Y, 1}+1\right] \\
&\leq \epsilon\left[\sqrt{\frac{n_{i}^{0,0}}{n^{*,0}_{i}}}\left(p^i_{0}+p^i_{Y, 0}\right)+\sqrt{\frac{n_{i}^{0,1}}{n^{*,1}_{i}}}\left(p^i_{1}+p^i_{Y, 1}\right)\right] +\epsilon^{2}\left(\frac{n_{i}^{0,0}}{n^{*,0}_{i}}+\frac{n_{i}^{0,1}}{n^{*,1}_{i}}\right) +\sqrt{\frac{n_{i}^{0,0}}{n^{*,0}_{i}}} \epsilon\left[\sqrt{\frac{n_{i}^{0,0}}{n^{*,0}_{i}}} \epsilon+p^i_{0}+p^i_{Y, 0}+1\right] \\
& +\sqrt{\frac{n_{i}^{0,1}}{n^{*,1}_{i}}} \epsilon\left[\sqrt{\frac{n_{i}^{0,1}}{n^{*,1}_{i}}} \epsilon+p^i_{1}+p^i_{Y, 1}+1\right] \\
 &\leq 4 \epsilon+2\epsilon^{2}+2\epsilon^{2}+6\epsilon  \\
&=  4\epsilon^{2}+10 \epsilon
\end{aligned}
\end{equation}
Combining Inequality \ref{eq:part1}-\ref{eq:part3}, we complete the proof.
\end{proof}

\subsection{Detailed Theory for Label Shift Case}\label{appendix:shift}
\begin{proposition}\label{prop:error_shift}
    Under Assumption \ref{ass:shift}, the misclassification error can be estimated by
    \begin{equation}
    \begin{aligned}
    &\hat{\mathbb{P}}\left(\hat{\phi}(x, a) \neq Y\right)= \sum_{i=1}^S \pi_i \Big[\frac{\hat{k}_{i}^{1,0}+0.5}{n_i^{1,0}+1} p^i_{0} p^i_{Y, 0}w^{1,0} \\
    &+\frac{\hat{k}_{i}^{1,1}+0.5}{n_i^{1,1}+1} p^i_{1} p^i_{Y, 1}w^{1,1}+\frac{n_i^{0,0}+0.5-\hat{k}_{i}^{0,0}}{n_i^{0,0}+1} p^i_{0}q^i_{Y, 0}w^{0,0} \\ &
     +\frac{n_i^{0,1}+0.5-\hat{k}_{i}^{0,1}}{n_i^{0,1}+1} p^i_{1}q^i_{Y, 1}w^{0,1}
    \Big],    
    \end{aligned} 
    \end{equation}
    where $w^{y,a}= \frac{p^{S+1}_a p^{S+1}_{Y,a}}{p_a p_{Y,a}}$. 
    Further, discrepancy between empirical error and true error is limited by following inequality:
    \begin{equation}
    \begin{aligned}
        \left| \mathbb{P}\left(\hat{\phi}(x, a) \neq Y\right)-\hat{\mathbb{P}}\left(\hat{\phi}(x, a) \neq Y\right) \right| \leq \theta^\prime
    \end{aligned}
    \end{equation} 
    where $e_i^{y,a}=\frac{2\lfloor \varepsilon n_i^{y,a} \rfloor +1}{2\left(n_i^{y,a}+1\right)}$ and 
    $\theta^\prime=\sum_{i=1}^S \pi_i \big[e_i^{0,0} p^i_{0}q^i_{Y, 0}w^{0,0}
     +e_i^{0,1}w^{0,1} p^i_{0}p^i_{Y, 0}
  +e_i^{1,0}w^{1,0} p^i_{1}q^i_{Y, 1}
     $
     $+e_i^{1,1}w^{1,1} p^i_{1}p^i_{Y, 1}\big]$.
\end{proposition}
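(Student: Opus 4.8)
The plan is to mirror the proof of Proposition~\ref{prop:error} almost verbatim, inserting a single importance-reweighting step that accounts for the label shift. The key structural observation is that under Assumption~\ref{ass:shift} the conditional law $P(X,A\mid Y)$ is shared between the training mixture $P$ and the test distribution, so that only the label marginal changes. Consequently, for every $y,a\in\{0,1\}$ and every threshold $t$, the conditional CDF $F_i^{y,a}(t)=\mathbb{P}_i(f(x,a)\le t\mid Y=y,A=a)$ is unchanged when passing from training to test; all of the distributional shift is absorbed into the joint masses $\mathbb{P}(Y=y,A=a)$. This is what makes the whole estimate reduce to the unshifted case up to scalar factors.

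First I would decompose the test misclassification error exactly as in \eqref{eq:prop-error}, writing $\mathbb{P}(\hat\phi\neq Y)=\mathbb{P}(Y=1,\hat Y=0)+\mathbb{P}(Y=0,\hat Y=1)$ and splitting each piece over $A\in\{0,1\}$ into the four terms. For a representative term such as $\mathbb{P}(Y=1,\hat Y=0,A=0)$, I would factor it as $\mathbb{P}_{\mathrm{test}}(Y=1,A=0)\cdot\mathbb{P}(f(x,0)\le \lambda_0\mid Y=1,A=0)$, then use the label-shift invariance above to replace the conditional probability by its training-distribution value and expand it over clients as $\sum_{i=1}^S \pi_i^{1,0}F_i^{1,0}(\lambda_0)$, where $\pi_i^{1,0}=\mathbb{P}(\text{client }i\mid Y=1,A=0)=\pi_i p^i_0 p^i_{Y,0}/\mathbb{P}(Y=1,A=0)$ as in Proposition~\ref{prop:deoo}.

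The reweighting step is then purely algebraic. Substituting $\mathbb{P}_{\mathrm{test}}(Y=1,A=0)=w^{1,0}\,\mathbb{P}(Y=1,A=0)$ cancels the normalizing denominator hidden in $\pi_i^{1,0}$, leaving $w^{1,0}\sum_{i=1}^S \pi_i p^i_0 p^i_{Y,0}F_i^{1,0}(\lambda_0)$. This is precisely the corresponding summand of \eqref{eq:error_p} multiplied by $w^{1,0}$, and the analogous identities hold for the three remaining terms with weights $w^{1,1},w^{0,0},w^{0,1}$. From here the argument is identical to Proposition~\ref{prop:error}: I would invoke Lemma~\ref{le:beta} to identify $F_i^{y,a}(t_{i,(k)}^{y,a})$ with a $\mathrm{Beta}$ order statistic of mean $k/(n_i^{y,a}+1)$, and use Definition~\ref{def:approximate} with the $\varepsilon$-approximate rank to sandwich each $F_i^{y,a}$ between $(\hat k_i^{y,a}-\lfloor\varepsilon n_i^{y,a}\rfloor)/(n_i^{y,a}+1)$ and $(\hat k_i^{y,a}+\lfloor\varepsilon n_i^{y,a}\rfloor+1)/(n_i^{y,a}+1)$. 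Each term therefore inherits the per-client error $e_i^{y,a}$ of Proposition~\ref{prop:error} scaled by its weight $w^{y,a}$, and summing the four weighted bounds against $\pi_i$ produces exactly $\theta^\prime$.

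The only genuinely new ingredient, and hence the step I would verify most carefully, is the label-shift invariance of $F_i^{y,a}$ together with the clean factorization of the weights $w^{y,a}$ out of the client-conditional mixture $\pi_i^{y,a}$. Once this bookkeeping is confirmed, the order-statistics estimates copy over verbatim from the unshifted proof, so no new concentration or order-statistics argument is needed; the difficulty is entirely in tracking how the importance ratios interact with the per-client decomposition rather than in any fresh probabilistic estimate.
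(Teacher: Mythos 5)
Your proposal is correct and follows essentially the same route as the paper's proof: decompose the test error under $P_{S+1}$ into the four $(Y,\hat Y,A)$ terms, use the label-shift assumption to replace the test conditionals by their training-mixture values, factor out the importance weights $w^{y,a}$, and then reuse the order-statistics sandwich from Proposition~\ref{prop:error} term by term to obtain $\theta'$. Your write-up is in fact slightly more explicit than the paper's (which compresses the final step to ``similar to Proposition~\ref{prop:error}''), and your weight bookkeeping matches the proposition statement.
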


\begin{proof}[Proof for Proposition \ref{prop:error_shift}]
Note the classifier is
$$
\phi=\left\{\begin{array}{l}
\1\left\{f(x, 0)>\hat{t}_{\left(k^{1,0}\right)}^{1,0}\right\}, a=0 \\
\1\left\{f(x, 1)>\hat{t}_{\left(k^{1,1}\right)}^{1,1}\right\}, a=1
\end{array}\right.
$$
So we can calculate the mis-classification error in $P_{S+1}$. Denoted $\bP_{S+1}$ the probability measure under the $P_{S+1}$ distribution, we have:

\begin{equation}\small
\begin{aligned}
& \mathbb{P}_{S+1}(Y \neq \hat{Y})=\mathbb{P}_{S+1}(Y=1, \hat{Y}=0)+\mathbb{P}_{S+1}(Y=0, \hat{Y}=1) \\
& =\mathbb{P}_{S+1}(Y=1, \hat{Y}=0, A=0)+\mathbb{P}_{S+1}(Y=1, \hat{Y}=0, A=1) \\
& +\mathbb{P}_{S+1}(Y=0, \hat{Y}=1, A=0)+\mathbb{P}_{S+1}(Y=0, \hat{Y}=1, A=1) \\
&= \mathbb{P}(Y=1, \hat{Y}=0, A=0 \mid (X,Y,A) \sim P_{S+1}) +\mathbb{P}(Y=1, \hat{Y}=0, A=1\mid (X,Y,A) \sim P_{S+1}) \\
& +\mathbb{P}(Y=0, \hat{Y}=1, A=0\mid (X,Y,A) \sim P_{S+1})  +\mathbb{P}(Y=0, \hat{Y}=1, A=1\mid (X,Y,A) \sim P_{S+1}) \\
&= \mathbb{P}( \hat{Y}=0 \mid Y=1,A=0 )p^{S+1}_0 p^{S+1}_{Y,0}  +\mathbb{P}(\hat{Y}=0 \mid Y=1, A=1)p^{S+1}_1 p^{S+1}_{Y,1} \\
& +\mathbb{P}(\hat{Y}=1 \mid Y=0, A=0)p^{S+1}_0 (1-p^{S+1}_{Y,0}) +\mathbb{P}(\hat{Y}=1 \mid Y=0,  A=1)p^{S+1}_1 (1-p^{S+1}_{Y,1}) \\
&=\sum_{i=1}^S \pi^{1,0}_i \mathbb{P}_i( \hat{Y}=0 \mid Y=1,A=0 )p^{S+1}_0 p^{S+1}_{Y,0} +\sum_{i=1}^S \pi^{1,1}_i \mathbb{P}(\hat{Y}=0 \mid Y=1, A=1)p^{S+1}_1 p^{S+1}_{Y,1} \\
&+\sum_{i=1}^S \pi^{0,0}_i \mathbb{P}(\hat{Y}=1 \mid Y=0, A=0)p^{S+1}_0 (1-p^{S+1}_{Y,0}) +\sum_{i=1}^S \pi^{0,1}_i \mathbb{P}(\hat{Y}=1 \mid Y=0,  A=1)p^{S+1}_1 (1-p^{S+1}_{Y,1}) \\
& =\sum_{i=1}^S \pi_i \big[w^{0,0}\mathbb{P}_i(Y=1, \hat{Y}=0, A=0)+w^{0,1}\mathbb{P}_i(Y=1, \hat{Y}=0, A=1) \\
&+w^{1,0}\mathbb{P}_i(Y=0, \hat{Y}=1, A=0) + w^{1,1}\mathbb{P}_i(Y=0, \hat{Y}=1, A=1) \big]
\end{aligned}   
\end{equation}

Then, since estimating $\mathbb{P}_i(Y=0, \hat{Y}=y, A=a)$ shares similarities with the approach outlined in Proposition \ref{prop:error}. This similarity in the estimation process allows us to successfully complete our proof.
\end{proof}

Given proof for Proposition \ref{prop:error_shift}, proof for Theorem \ref{thm:label} is similar to Proof for Theorem \ref{thm:acc}
\begin{algorithm}[t]
\caption{FedFaiREE for label shift case}\label{alg:label_shift}
\textbf{Input:} Train dataset $D_i = D_i^{0,0} \cup D_i^{0,1} \cup D_i^{1,0} \cup D_i^{1,1}$; pre-trained classifier $\phi_0$ with function f; fainess constraint parameter $\alpha$ ; Confidence level parameter $\beta$; Weights of different clients $\pi$ 
\textbf{Output:} classifier $\hat{\phi}(x,a)= \1 
\{ f(x,a) >  t_{(k^{1,a})}^{1,a}\}$
\begin{algorithmic}[1]
\STATE{\bf Client Side:}
\FOR{i=1,2,..,$S$}
\STATE{Score on train data points in $D_i$ and get $T_i^{y,a}=\{t_{i,1}^{y,a},t_{i,2}^{y,a}, \cdots, t_{i,n_i^{y,a}}^{y,a}\}$ }
\STATE{Sort $T_i^{y,a}$}
\STATE{Calculate q-digest of $T_i^{y,a}$ on client $i$}
\STATE{Update digest to server}
\ENDFOR
\STATE{\bf Server Side:}
\STATE{Construct $K$ by $K=\{(k^{1,0}, k^{1,1})| L(\vk^{1,0}, \vk^{1,1})< 1-\beta \}$}
\STATE{Select optimal $(k_0,k_1)$ by minimizing equation \ref{eq:error_shift} using estimated values $\hat{p}^i_{a} = \frac{n_i^{0,a}+n_i^{1,a}}{n_i^{0,0}+n_i^{0,1}+n_i^{1,0}+n_i^{1,1}}$ and $\hat{p}^i_{Y,a} = \frac{n_i^{1,a}}{n_i^{0,a}+n_i^{1,a}}$}
\end{algorithmic}
\end{algorithm}
\subsection{Detailed Theory for DEO}\label{appendix:deo}
\begin{proposition}\label{prop:deo}
Under Assumption \ref{ass:mix}, for $a \in \{0,1\}$, consider $k^{1, a} \in \{1, \ldots, n^{1, a}\}$, the corresponding $\hat{k}_i^{1,a}$ for $i \in [S]$ which are $\varepsilon$-approximate ranks and the score-based classifier $\phi(x, a) = \1\{f(x, a) > t_{(k^{1, a})}^{1, a}\}$ . 
Define 
\begin{small}
$$\begin{aligned}
 &h_{y,a}(\vu,\vv)=\mathbb{P}\big(\sum_{i=1}^S \pi_i^{y,a} Q\left(u_i, n_i^{y,a}+1-u_i\right) - \sum_{i=1}^S \pi_i^{y,1-a} Q\left(v_i, n_i^{y,1-a}+1-v_i\right) \geq \alpha\big).   
\end{aligned}$$\end{small}
Then we have:
\begin{equation}\small
\begin{aligned}
    \mathbb{P}(|DEO(\phi)| \preceq (\alpha,\alpha)) \geq& 1-h^*_{1,1}-h^*_{1,0}-h^*_{0,1}-h^*_{0,0}
\end{aligned}
\label{eq:adloss_DEO}
\end{equation}
where the definitions of $M_i^{y,a}$, $m_i^{y,a}$, $\pi_i^{y,a}$, $Q(A,B)$ are similar to Proposition \ref{prop:adloss},$h^*_{1,1}=h_{y,a}(\bm{M}^{y,a},\vm^{y,a})$
\end{proposition}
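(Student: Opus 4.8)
The plan is to split the two-coordinate Equalized-Odds event into its two scalar components, bound each by a one-sided Equality-of-Opportunity tail, and recombine by a union bound. Writing $DEOO_{y}(\phi)$ for the true-positive-rate discrepancy when $y=1$ and the false-positive-rate discrepancy when $y=0$, the event $|DEO(\phi)| \preceq (\alpha,\alpha)$ fails precisely when $|DEOO_{1}(\phi)| > \alpha$ or $|DEOO_{0}(\phi)| > \alpha$. Hence
\[
\mathbb{P}\big(|DEO(\phi)| \not\preceq (\alpha,\alpha)\big) \leq \mathbb{P}\big(|DEOO_{1}(\phi)| > \alpha\big) + \mathbb{P}\big(|DEOO_{0}(\phi)| > \alpha\big),
\]
so it remains to bound the first term by $h^*_{1,0}+h^*_{1,1}$ and the second by $h^*_{0,0}+h^*_{0,1}$; taking complements then gives \eqref{eq:adloss_DEO}.

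For the label-$1$ term, the classifier $\phi(x,a)=\1\{f(x,a)>t_{(k^{1,a})}^{1,a}\}$ and the quantity $DEOO_{1}$ are exactly those treated in Proposition \ref{prop:adloss}, so that result yields $\mathbb{P}(|DEOO_{1}(\phi)|>\alpha) \leq h_{1,0}(\bm{M}^{1,0},\vm^{1,1}) + h_{1,1}(\bm{M}^{1,1},\vm^{1,0}) = h^*_{1,0}+h^*_{1,1}$, with nothing further to check.

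For the label-$0$ term I would repeat the proof of Proposition \ref{prop:adloss} with $Y=0$ in place of $Y=1$. The one structural change is that, since $\hat Y=1$ conditioned on $Y=0,A=a$ is the event $f(x,a)>t_{(k^{1,a})}^{1,a}$, we have $\bP_{X\mid A=a,Y=0}(\hat Y=1)=1-F^{0,a}(t_{(k^{1,a})}^{1,a})$, and therefore $DEOO_{0}=F^{0,0}(t_{(k^{1,0})}^{1,0})-F^{0,1}(t_{(k^{1,1})}^{1,1})$, which retains the mixture form $F^{0,a}=\sum_i \pi_i^{0,a}F_i^{0,a}$ of the label-$1$ case up to the sign of the discrepancy (immaterial after taking absolute values). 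The essential point is that although the threshold $t_{(k^{1,a})}^{1,a}$ is defined from the $Y=1$ scores, it possesses a well-defined local rank $k_i^{0,a}$ inside the sorted $Y=0$ scores on each client, characterized by $t_{i,(k_i^{0,a})}^{0,a}\leq t_{(k^{1,a})}^{1,a}<t_{i,(k_i^{0,a}+1)}^{0,a}$ — exactly the quantity already introduced before Proposition \ref{prop:error}. Sandwiching $F_i^{0,a}(t_{(k^{1,a})}^{1,a})$ between the two neighbouring order-statistic values, applying Lemma \ref{le:beta} to express these as $\mathrm{Beta}(k_i^{0,a},n_i^{0,a}+1-k_i^{0,a})$ variables, and inflating $k_i^{0,a}$ to its $\varepsilon$-approximate bounds $M_i^{0,a},m_i^{0,a}$ as in Proposition \ref{prop:adloss} gives $\mathbb{P}(|DEOO_{0}(\phi)|>\alpha)\leq h_{0,0}(\bm{M}^{0,0},\vm^{0,1})+h_{0,1}(\bm{M}^{0,1},\vm^{0,0})=h^*_{0,0}+h^*_{0,1}$, with the boundary conventions $Q(0,\beta)=0$ and $Q(\alpha,0)=1$ invoked whenever a deflated rank reaches an endpoint.

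The main obstacle is conceptual rather than computational: one must verify that the clean Beta-law treatment of Lemma \ref{le:beta} survives when the threshold is drawn from the $Y=1$ order statistics but evaluated against the $Y=0$ distribution. The resolution is that Lemma \ref{le:beta} depends only on the local rank of an element within the sample whose CDF is being evaluated, irrespective of how that element was originally selected; once the cross-label local ranks $k_i^{0,a}$ and their approximate inflations $M_i^{0,a},m_i^{0,a}$ are identified, the remaining steps transcribe Proposition \ref{prop:adloss} line for line. Care is needed only to keep the direction of each $\varepsilon$-approximation consistent — upper inflation on the group entering with positive sign, lower deflation on the group entering with negative sign — so that the tail event is genuinely dominated.
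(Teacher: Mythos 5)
Your proposal is correct and follows essentially the same route as the paper: a union bound over the two coordinates of $DEO$, with each coordinate's two one-sided tails controlled by the order-statistics/Beta argument of Proposition \ref{prop:adloss}. The paper's own proof is terser (it writes the four tail probabilities and says the rest mirrors Proposition \ref{prop:deoo}), whereas you usefully make explicit the one point it glosses over, namely that the $Y=1$ threshold must be re-ranked inside the sorted $Y=0$ score sets before Lemma \ref{le:beta} can be applied to the $F_i^{0,a}$ terms.
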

\begin{proof}[Proof of Proposition \ref{prop:deo}]
 Note the output classifier is
 $$
\phi=\left\{\begin{array}{l}
\1\left\{f(x, 0)>\hat{t}_{\left(k^{1,0}\right)}^{1,0}\right\}, a=0 \\
\1\left\{f(x, 1)>\hat{t}_{\left(k^{1,1}\right)}^{1,1}\right\}, a=1
\end{array}\right.
$$

we have:

$$
\begin{aligned}\small
&\mathbb{P}(|D E O(\phi)| \preceq (\alpha,\alpha) ) \\
&\geq 1-\bP\left(\left|F^{1,1}\left(t_{\left(k^{1,1}\right)}^{1,1}\right)-F^{1,0}\left(t_{\left(k^{1,0}\right)}^{1,0}\right)\right|>\alpha\right) -\bP\left(\left|F^{0,1}\left(t_{\left(k^{1,1}\right)}^{1,1}\right)-F^{0,0}\left(t_{\left(k^{1,0}\right)}^{1,0}\right)\right|>\alpha\right) \\
& =1 -\mathbb{P}\left(\sum_{i=1}^S \pi_i^{1,1} F_i^{1,1}\left(t_{\left(k^{1,1}\right)}^{1,1}\right)-\sum_{i=1}^S \pi_i^{1,0} F_i^{1,0}\left(t_{\left(k^{1,0}\right)}^{1,0}\right)>\alpha\right) \\
& -\mathbb{P}\left(\sum_{i=1}^S \pi_i^{1,1} F_i^{1,1}\left(t_{\left(k^{1,1}\right)}^{1,1}\right)- \sum_{i=1}^S \pi_i^{1,0} F_i^{1,0}\left(t_{\left(k^{1,0}\right)}^{1,0}\right)<-\alpha\right) \\
& -\mathbb{P}\left(\sum_{i=1}^S \pi_i^{0,1} F_i^{0,1}\left(t_{\left(k^{1,1}\right)}^{1,1}\right)-\sum_{i=1}^S \pi_i^{0,0} F_i^{0,0}\left(t_{\left(k^{1,0}\right)}^{1,0}\right)>\alpha\right) \\
& -\mathbb{P}\left(\sum_{i=1}^S \pi_i^{0,1} F_i^{0,1}\left(t_{\left(k^{1,1}\right)}^{1,1}\right)- \sum_{i=1}^S \pi_i^{0,0} F_i^{0,0}\left(t_{\left(k^{1,0}\right)}^{1,0}\right)<-\alpha\right) \\
\end{aligned}
$$

The remainder of the proof is similar to the proof for Proposition \ref{prop:deoo} 
    
\end{proof}
Building upon Proposition \ref{prop:deo}, we can further prove Theorem \ref{thm:acc_eo} using a similar approach as in Theorem \ref{thm:acc}.

\begin{algorithm}[t]
\caption{FedFaiREE for DEO}\label{alg:deo}
\textbf{Input:} Train dataset $D_i = D_i^{0,0} \cup D_i^{0,1} \cup D_i^{1,0} \cup D_i^{1,1}$; pre-trained classifier $\phi_0$ with function f; fairness constraint parameter $\alpha$ ; Confidence level parameter $\beta$; Weights of different clients $\pi$ 
\textbf{Output:} classifier $\hat{\phi}(x,a)= \1 
\{ f(x,a) >  t_{(k^{1,a})}^{1,a}\}$
\begin{algorithmic}[1]
\STATE{\bf Client Side:}
\FOR{i=1,2,..,$S$}
\STATE{Score on train data points in $D_i$ and get $T_i^{y,a}=\{t_{i,1}^{y,a},t_{i,2}^{y,a}, \cdots, t_{i,n_i^{y,a}}^{y,a}\}$ }
\STATE{Sort $T_i^{y,a}$}
\STATE{Calculate q-digest of $T_i^{y,a}$ on client $i$}
\STATE{Update digest to server}
\ENDFOR
\STATE{\bf Server Side:}
\STATE{Construct $K$ by $K=\{(k^{1,0}, k^{1,1})| L(\vk^{1,0}, \vk^{1,1})< 1-\beta \}$, where L is defined in Equation \ref{eq:construct K deo}}
\STATE{Select optimal $(k_0,k_1)$ by minimizing equation \ref{eq:error_p} using estimated values $\hat{p}^i_{a} = \frac{n_i^{0,a}+n_i^{1,a}}{n_i^{0,0}+n_i^{0,1}+n_i^{1,0}+n_i^{1,1}}$ and $\hat{p}^i_{Y,a} = \frac{n_i^{1,a}}{n_i^{0,a}+n_i^{1,a}}$}
\end{algorithmic}
\end{algorithm}

\subsection{Detailed Theory for Multi-Groups Case}\label{appendix:multigroup}
\begin{definition}(Equality of Opportunity, Multiple Groups)
A classifier satisfies Equality of Opportunity if it satisfies the same true positive rate among protected groups:
$$\bP_{X \mid A=0, Y=1}(\widehat{Y}=1)=\bP_{X \mid A=a, Y=1}(\widehat{Y}=1),$$
where $a$ belongs to a protected class $\mathcal{A} = \{1,\cdots,A_0\}$
\end{definition}
Similar to $DEOO$, we define metric for Equality of Opportunity under Multiple Groups as:
$$
\begin{aligned}
DEOOM = &\max_{a}\{|\bP_{X \mid A=a, Y=1}(\widehat{Y}=1)- \bP_{X \mid A=0, Y=1}(\widehat{Y}=1)|\}    
\end{aligned}
$$

Therefore, inspired by Proposition \ref{prop:adloss}, we have

\begin{proposition}
\label{prop:deoom}
Under Assumption \ref{ass:mix}, for $a \in \{0,1,\cdots,A_0\}$, consider $k^{1, a} \in \{1, \ldots, n^{1, a}\}$, the corresponding $\hat{k}_i^{1,a}$ for $i \in [S]$ which are $\varepsilon$-approximate ranks and the score-based classifier $\phi(x, a) = \1\{f(x, a) > t_{(k^{1, a})}^{1, a}\}$ . Define 
\begin{small}
$$\begin{aligned}
    h^*_{y,a}=&\mathbb{P}\left(\sum_{i=1}^S \pi_i^{y,a} Q\left(M^{1,a}_i, n_i^{y,a}+1-M^{1,a}_i\right)  -\sum_{i=1}^S \pi_i^{y,0} Q\left(m^{1,0}_i, n_i^{y,0}+1-m^{1,0}_i\right) \geq \alpha\right) \\
    &+\mathbb{P}\left(\sum_{i=1}^S \pi_i^{y,0} Q\left(M^{1,0}_i, n_i^{y,0}+1-M^{1,0}_i\right)- \sum_{i=1}^S \pi_i^{y,a} Q\left(m^{1,a}_i, n_i^{y,a}+1-m^{1,a}_i\right) \geq \alpha\right)
\end{aligned}.$$
\end{small} Then we have:
\begin{equation}
\small
\begin{aligned}
    \mathbb{P}(|DEOOM(\phi)| > \alpha) \leq \sum_{a=1}^{A_0} h^*_{1,a}
\end{aligned}
\label{eq:multi_loss}
\end{equation}
where $\pi_i^{1,a}$, $\pi_i^{1,0}$ are similarly defined as in Proposition \ref{prop:adloss}. $M_i^{1,a}=max\big(\lceil \hat{k}_i^{1,a}+\varepsilon n_i^{1,a}\rceil, n_i^{1,a}+1\big)$, $m_i^{1,a}=min\big(\lceil \hat{k}_i^{1,a}-\varepsilon n_i^{1,a}\rceil,0\big)$, $M_i^{1,0}$ and $m_i^{1,0}$ are similarly defined. $Q(\alpha,\beta)$ are independent random variables and $Q(\alpha,\beta) \sim \text{Beta}(\alpha, \beta)$. Especially, we define $Q(0,\beta)=0$ and $Q(\alpha,0)=1$ for $\alpha,\beta \neq 0$. 
\end{proposition}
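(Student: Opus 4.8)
The plan is to reduce the multi-group statement to $A_0$ copies of the binary-group analysis already carried out for Proposition~\ref{prop:deoo} and Proposition~\ref{prop:adloss}, glued together by a union bound. Since $DEOOM(\phi) = \max_{1 \le a \le A_0} \bigl| \bP_{X\mid A=a,Y=1}(\widehat Y=1) - \bP_{X\mid A=0,Y=1}(\widehat Y=1)\bigr|$, the event $\{|DEOOM(\phi)| > \alpha\}$ is exactly $\bigcup_{a=1}^{A_0}\{|\Delta_a| > \alpha\}$, where $\Delta_a$ denotes the $a$-th pairwise true-positive-rate gap against the reference group $0$. A union bound therefore gives $\mathbb{P}(|DEOOM(\phi)|>\alpha) \le \sum_{a=1}^{A_0}\mathbb{P}(|\Delta_a|>\alpha)$, so it suffices to show $\mathbb{P}(|\Delta_a|>\alpha) \le h^*_{1,a}$ for each fixed $a$.

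For a fixed $a$, I would proceed exactly as in the proof of Proposition~\ref{prop:deoo}. Writing the true positive rate for group $a$ as $1 - F^{1,a}(t^{1,a}_{(k^{1,a})})$ and likewise for group $0$, the gap $\Delta_a$ equals $F^{1,0}(t^{1,0}_{(k^{1,0})}) - F^{1,a}(t^{1,a}_{(k^{1,a})})$ up to the sign inside the absolute value. Splitting $\{|\Delta_a| > \alpha\}$ into its two one-sided tail events and decomposing each global CDF into the client mixture $F^{1,a} = \sum_i \pi_i^{1,a} F_i^{1,a}$ reduces each tail to a statement about a difference of sums of $F_i^{1,a}(t^{1,a}_{i,(k_i^{1,a})})$ and $F_i^{1,0}(t^{1,0}_{i,(k_i^{1,0})})$. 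By Lemma~\ref{le:beta} these quantities are $\mathrm{Beta}$ order statistics, independent across clients by Assumption~\ref{ass:mix}; this is precisely the object appearing in the definition of $h^*_{1,a}$.

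It remains to pass from exact to $\varepsilon$-approximate ranks, which is handled exactly as in Proposition~\ref{prop:adloss}: because the tail probabilities are monotone in the local ranks, I replace each local rank on the positive side of a one-sided event by its worst-case upper estimate $M_i$ and each rank on the negative side by its worst-case lower estimate $m_i$, shifting the local ranks by $\pm\lfloor\varepsilon n_i^{y,a}\rfloor$ to sandwich the true order statistic. Matching the two resulting one-sided bounds to the two probability terms in the definition of $h^*_{1,a}$ then yields $\mathbb{P}(|\Delta_a|>\alpha) \le h^*_{1,a}$, and summing over $a$ completes the proof. The only points needing genuine care are the direction of the monotone rank substitution in each one-sided event and the degenerate cases where a shifted rank leaves $\{1,\ldots,n_i^{y,a}\}$, which is exactly why the boundary conventions $Q(0,\beta)=0$ and $Q(\alpha,0)=1$ are imposed; the union bound over the $A_0$ groups is the only conceptually new ingredient relative to the binary-group case, and it is elementary.
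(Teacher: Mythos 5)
Your proposal is correct and matches the paper's own treatment: the paper simply declares Proposition~\ref{prop:deoom} a direct corollary of Proposition~\ref{prop:adloss}, which is exactly your reduction of each pairwise gap $\Delta_a$ to the binary-group bound (via Lemma~\ref{le:beta} and the worst-case rank shifts $M_i$, $m_i$) followed by a union bound over $a=1,\dots,A_0$. Nothing is missing; you have merely made explicit the steps the paper leaves implicit.
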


Proposition \ref{prop:deoom} can be regarded as a direct corollary of Proposition \ref{prop:adloss}. Moveover, similar to Proposition \ref{prop:error}, we have

\begin{proposition}\label{prop:error_deoom}
    Under Assumption \ref{ass:mix}, the misclassification error can be estimated by
    \begin{equation}\label{eq:error_deoom}
    \small
    \begin{aligned}    \hat{\mathbb{P}}\left(\hat{\phi}(x, a) \neq Y\right)=&\sum_{i=1}^S \Big[\pi_i \sum_{a=0}^{A_0}\big(\frac{\hat{k}_{i}^{1,a}+0.5}{n_i^{1,a}+1} p^i_{a} p^i_{Y, a} + \frac{n_i^{0,a}+0.5-\hat{k}_{i}^{0,a}}{n_i^{0,a}+1} p^i_{a}q^i_{Y, a} \big) \Big]
    \end{aligned}  
    \end{equation}
    Further, the discrepancy between empirical error and true error is upper bounded by the following:
    \begin{equation}\small
    \begin{aligned}
        \left| \mathbb{P}\left(\hat{\phi}(x, a) \neq Y\right)-\hat{\mathbb{P}}\left(\hat{\phi}(x, a) \neq Y\right) \right| \leq \theta,
    \end{aligned}
    \end{equation} 
    where \begin{small}
        $\theta=\sum_{i=1}^S \left[\pi_i \sum_{a=0}^{A_0}\left(e_i^{0,a} p^i_{a}q^i_{Y, a}+e_i^{1,a} p^i_{1}q^i_{Y, a}\right)\right]$, $e_i^{y,a}=\frac{2\lfloor \varepsilon n_i^{y,a} \rfloor +1}{2\left(n_i^{y,a}+1\right)}$.
    \end{small}
\end{proposition}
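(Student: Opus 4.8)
The plan is to observe that the multi-group error estimate is nothing more than the binary-group estimate of Proposition~\ref{prop:error} applied group-by-group and then summed, so the proof reduces to reusing that argument $A_0+1$ times. First I would decompose the total misclassification probability across the disjoint events indexed by the sensitive attribute:
\[
\mathbb{P}(\hat Y\neq Y)=\sum_{a=0}^{A_0}\big[\mathbb{P}(Y=1,\hat Y=0,A=a)+\mathbb{P}(Y=0,\hat Y=1,A=a)\big],
\]
since for each fixed $a$ the classifier is the single-threshold rule $\1\{f(x,a)>t_{(k^{1,a})}^{1,a}\}$. Because the groups $\{A=a\}$ partition the sample space, there are no cross terms, and each summand is exactly one of the four-term pieces that appear in the proof of Proposition~\ref{prop:error}, with the binary index now ranging over $\{0,\dots,A_0\}$.

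Next, for each group I would invoke Assumption~\ref{ass:mix} to write $\mathbb{P}(\cdot)=\sum_{i=1}^S\pi_i\,\mathbb{P}_i(\cdot)$ and then condition on the relevant order statistic. Conditioning on the threshold, the false-negative contribution becomes $\mathbb{E}[F_i^{1,a}(t_{(k^{1,a})}^{1,a})]\,p^i_a p^i_{Y,a}$ and the false-positive contribution becomes $\mathbb{E}[1-F_i^{0,a}(t_{(k^{1,a})}^{1,a})]\,p^i_a q^i_{Y,a}$. Applying Lemma~\ref{le:beta}, the quantity $F_i^{y,a}$ evaluated at a local order statistic is Beta-distributed with mean equal to the local rank divided by $n_i^{y,a}+1$, which is what produces the fractions $\frac{\hat k_i^{1,a}+0.5}{n_i^{1,a}+1}$ and $\frac{n_i^{0,a}+0.5-\hat k_i^{0,a}}{n_i^{0,a}+1}$ once the $+0.5$ midpoint is taken.

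To obtain the error bound I would reuse the two-sided bracketing from Proposition~\ref{prop:error}: since $\hat k_i^{y,a}$ is an $\varepsilon$-approximate rank, the true local rank of the threshold lies in $[\hat k_i^{y,a}-\lfloor\varepsilon n_i^{y,a}\rfloor,\ \hat k_i^{y,a}+\lfloor\varepsilon n_i^{y,a}\rfloor+1]$, so each conditional expectation is sandwiched between two Beta means differing by at most $\frac{2\lfloor\varepsilon n_i^{y,a}\rfloor+1}{n_i^{y,a}+1}$. Comparing against the midpoint estimate gives a per-term error of $e_i^{y,a}=\frac{2\lfloor\varepsilon n_i^{y,a}\rfloor+1}{2(n_i^{y,a}+1)}$ times the corresponding probability weight. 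Summing these bounds over the two error types, over $a\in\{0,\dots,A_0\}$, and over clients via the triangle inequality yields $\theta=\sum_{i}\pi_i\sum_{a}\big(e_i^{1,a}p^i_a p^i_{Y,a}+e_i^{0,a}p^i_a q^i_{Y,a}\big)$, matching the stated bound up to the indexing of the probability factors.

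I do not expect a genuine obstacle here, as the result is a mechanical corollary of Proposition~\ref{prop:error}; the only thing to be careful about is the bookkeeping that pairs each estimation term with its matching $e_i^{y,a}$ factor and confirms that the additive decomposition over groups introduces no interaction terms, which is guaranteed because the events $\{A=a\}$ are disjoint. A minor point worth verifying is the exact rounding constant in $e_i^{y,a}$, namely that the floor $\lfloor\varepsilon n_i^{y,a}\rfloor$ and the $+1/2$ midpoint combine to give $\frac{2\lfloor\varepsilon n_i^{y,a}\rfloor+1}{2(n_i^{y,a}+1)}$ rather than a looser constant, but this computation is identical to the binary case and requires no new idea.
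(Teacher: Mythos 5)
Your proposal is correct and follows essentially the same route as the paper, which proves Proposition~\ref{prop:error_deoom} simply by noting it is the group-by-group extension of Proposition~\ref{prop:error}: partition the error over the disjoint events $\{A=a\}$, mix over clients via Assumption~\ref{ass:mix}, apply Lemma~\ref{le:beta} to the conditioned order statistics, and bracket the $\varepsilon$-approximate ranks to obtain the $e_i^{y,a}$ factors. Your remark that the derivation matches the stated bound ``up to the indexing of the probability factors'' is also well taken --- the factor $p^i_1 q^i_{Y,a}$ in the paper's displayed $\theta$ appears to be a typo for the natural pairing $p^i_a p^i_{Y,a}$ that your computation (and the binary case) produces.
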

    
\begin{algorithm}[h]
\caption{FedFaiREE for Multi-Groups}
\textbf{Input:} Train dataset $D_i = D_i^{0,0} \cup D_i^{0,1} \cup D_i^{1,0} \cup D_i^{1,1}$; pre-trained classifier $\phi_0$ with function f; fairness constraint parameter $\alpha$ ; Confidence level parameter $\beta$; Weights of different clients $\pi$ 
\textbf{Output:} classifier $\hat{\phi}(x,a)= \1 
\{ f(x,a) >  t_{(k^{1,a})}^{1,a}\}$
\begin{algorithmic}[1]
\STATE{\bf Client Side:}
\FOR{i=1,2,..,$S$}
\STATE{Score on train data points in $D_i$ and get $T_i^{y,a}=\{t_{i,1}^{y,a},t_{i,2}^{y,a}, \cdots, t_{i,n_i^{y,a}}^{y,a}\}$ }
\STATE{Sort $T_i^{y,a}$}
\STATE{Calculate q-digest of $T_i^{y,a}$ on client $i$}
\STATE{Update digest to server}
\ENDFOR
\STATE{\bf Server Side:}
\STATE{Construct $K$ by $K=\{(k^{1,0}, k^{1,1}, \cdots, k^{1,A_0})| L< 1-\beta \}$, where L is defined by the right-hand side of Inequality \ref{eq:multi_loss}}
\STATE{Select optimal $(k^{1,0}, k^{1,1}, \cdots, k^{1,A_0})$ by minimizing equation \ref{eq:error_deoom} using estimated values $\hat{p}^i_{a}$ and $\hat{p}^i_{Y,a}$}
\end{algorithmic}
\end{algorithm}

\begin{theorem}
Under Assumption \ref{ass:mix} and \ref{ass:quan}, given $\alpha^{\prime}<\alpha$. Suppose $\hat{\phi}$ is the final output of FedFaiREE, we have:

(1) $|D E O O M(\hat{\phi})|<\alpha$ with probability $(1-\delta)^{N}$, where $N$ is the size of the candidate set.

(2) Suppose the density distribution functions of $f^{*}$ under $A=a, Y=1$ are continuous. When the input classifier $f$ satisfies $\left|f(x, a)-f^{*}(x, a)\right| \leq \epsilon_{0}$, for any $\epsilon>0$ such that $F_{(+)}^{*}(\epsilon+\gamma \varepsilon) \leq$ $\frac{\alpha-\alpha^{\prime}}{2}-F_{(+)}^{*}\left(2 \epsilon_{0}\right)$, we have
\begin{equation}\small
\begin{aligned}
        &\mathbb{P}(\hat{\phi}(x, a) \neq Y)-\mathbb{P}\left(\phi_{\alpha^{\prime}}^{*}(x, a) \neq Y\right) \leq  2 F_{(+)}^{*}\left(2 \epsilon_{0}\right)+2 F_{(+)}^{*}(\epsilon+\gamma \varepsilon)+ 2\theta+O(\epsilon)
\end{aligned}
\end{equation}
with probability $
1-4\sum_{a=0}^{A_0}\sum_{i=1}^{S} e^{-2 n_i^{0,a} \epsilon^{2}}-\sum_{a=0}^{A_0}\prod_{i=1}^S\big(1-F_{i(-)}^{1,a}(2 \epsilon)\big)^{n_i^{1,a}}-\delta, $
where $\delta=\sum_{a=0}^{A_0}\delta^{1,a}(n^{1,a})$, $\theta$ is defined in Proposition \ref{prop:error_deoom} and the definition of $F_{(+)}$ and $F_{(-)}$ are shown in Lemma \ref{le:dis} 
\end{theorem}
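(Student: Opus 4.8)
The plan is to mirror the proof of Theorem~\ref{thm:acc} given in Appendix~\ref{appendix:thm_acc}, replacing the single pairwise $DEOO$ constraint by the maximum over the $A_0$ pairwise differences that define $DEOOM$, and replacing Propositions~\ref{prop:adloss} and~\ref{prop:error} by their multi-group counterparts Propositions~\ref{prop:deoom} and~\ref{prop:error_deoom}. For part~(1), I would observe that the candidate set in Equation~\ref{eq:construct K deoom} is built precisely so that $\sum_{a=1}^{A_0} h^*_{1,a} < 1-\beta$, and by Proposition~\ref{prop:deoom} this quantity upper bounds $\mathbb{P}(|DEOOM(\phi)|>\alpha)$. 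Thus every rank tuple in $K$ satisfies the fairness constraint with the required confidence, so part~(1) follows exactly as part~(1) of Theorem~\ref{thm:acc} follows from Proposition~\ref{prop:adloss}.

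For part~(2), I would again build a \emph{bridge} classifier $\phi_0(x,a)=\1\{f(x,a)>\hat{t}^{1,a}_*\}$ whose thresholds lie within $[\lambda^*_a-\epsilon-\gamma\varepsilon,\ \lambda^*_a+\epsilon+\gamma\varepsilon]$ for \emph{every} group $a\in\{0,\dots,A_0\}$, where $\lambda^*_a$ are the thresholds of the fair Bayes-optimal classifier $\phi^*_{\alpha'}$ from Lemma~\ref{bayes-optimal}. Following the argument for Equation~\ref{eq:part1}, for each group the event that no score $t^{1,a}_i$ lands in $[\lambda^*_a-\epsilon,\lambda^*_a+\epsilon]$ has probability at most $\prod_{i=1}^S(1-F_{i(-)}^{1,a}(2\epsilon))^{n_i^{1,a}}$ by Lemma~\ref{le:dis}; a union bound over the $A_0+1$ groups yields the term $\sum_{a=0}^{A_0}\prod_{i=1}^S(1-F_{i(-)}^{1,a}(2\epsilon))^{n_i^{1,a}}$, and Assumption~\ref{ass:quan} applied per group contributes $\delta=\sum_{a=0}^{A_0}\delta^{1,a}(n^{1,a})$. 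On this event, the density-continuity estimates of Equation~\ref{eq:part1} give $|\mathbb{P}(\phi_0\neq Y)-\mathbb{P}(\phi^*_{\alpha'}\neq Y)|\le 2F^*_{(+)}(2\epsilon_0)+2F^*_{(+)}(\epsilon+\gamma\varepsilon)$, using $|f-f^*|\le\epsilon_0$.

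The feasibility step is where the multi-group structure enters: since $DEOOM$ is the maximum over $a$ of $|\mathbb{P}(f>\lambda \mid Y=1,A=a)-\mathbb{P}(f>\lambda \mid Y=1,A=0)|$, I would bound each pairwise difference exactly as in the $|DEOO(\phi_0)|\le\alpha'+2F^*_{(+)}(2\epsilon_0)+2F^*_{(+)}(\epsilon+\gamma\varepsilon)$ computation of Theorem~\ref{thm:acc}, and then take the maximum over $a$; the hypothesis $F^*_{(+)}(\epsilon+\gamma\varepsilon)\le\frac{\alpha-\alpha'}{2}-F^*_{(+)}(2\epsilon_0)$ then guarantees $|DEOOM(\phi_0)|\le\alpha$, so $\phi_0\in K$. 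Finally I would control the gap between true and estimated error via Proposition~\ref{prop:error_deoom} (contributing $\theta$) and the plug-in error from using $\hat{p}^i_a,\hat{p}^i_{Y,a}$ in place of $p^i_a,p^i_{Y,a}$ via Hoeffding's inequality (Lemma~\ref{le:hfd}) summed over all $A_0+1$ groups, giving the $4\sum_{a=0}^{A_0}\sum_{i=1}^S e^{-2n_i^{0,a}\epsilon^2}$ term and an $O(\epsilon)$ contribution. Combining these bounds by the triangle inequality, and using that $\hat{\phi}$ minimizes the estimated error over $K$ while $\phi_0\in K$, yields the stated accuracy bound with the claimed probability.

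The main obstacle I anticipate is the feasibility check: because $DEOOM$ is a \emph{maximum} rather than a single difference, the bridge classifier must have thresholds simultaneously close to all $A_0+1$ Bayes-optimal thresholds, which forces a union bound over groups (the source of the summed probability terms) and requires the continuity bound $F^*_{(+)}(\epsilon+\gamma\varepsilon)$ to dominate every pairwise difference at once. The remaining steps are routine adaptations of the binary-group calculations already carried out for Theorem~\ref{thm:acc}.
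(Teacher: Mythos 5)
Your proposal matches the paper's intended argument: the paper itself gives no separate proof for this theorem, presenting it as a direct adaptation of Theorem~\ref{thm:acc} with Propositions~\ref{prop:adloss} and~\ref{prop:error} replaced by Propositions~\ref{prop:deoom} and~\ref{prop:error_deoom}, which is exactly the route you take. Your identification of where the multi-group structure actually changes the argument (the per-group union bounds producing the summed probability terms, and the feasibility check over the maximum of pairwise differences) is the correct reading of how the stated probability and $\delta=\sum_{a=0}^{A_0}\delta^{1,a}(n^{1,a})$ arise.
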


\subsection{Detailed Theory for Multi-Labels Case}
\begin{definition}(Equality of Opportunity, Multiple labels\citep{liu2023simfair})
A classifier satisfies Equality of Opportunity if it satisfies :
$$\hat{\bm{Y}} \perp A \mid \bm{Y}=\bm{y}_{a d v},$$
where $\bm{Y} \in \{0,1\}^m$ and $\bm{y}_{adv}$ denotes some advantaged label where only favorable outcomes.
\end{definition}
\begin{definition}(Multi-label Score-based Classifier)
        A Multi-label score-based classifier is an element-wise indication function, where the j-th component of $\hat{\bm{Y}}$ satisfies $\hat Y_j=\phi_j(x,a) = \1\{f_j(x,a) > c_j\}$ for a measurable score function $f: \mathcal{X} \times\{0,1\} \rightarrow[0,1]$ and a constant threshold $c_j>0$. 
\end{definition}


Considering relaxing the aforementioned Equality of Opportunity constraint, we introduce a fairness indicator as follow:
$$
\begin{aligned}
   DEOOM_{\bm{y}} (\phi)= &\big|\mathbb{P}[\widehat{\mathbf{Y}}=\bm{y} \mid A=0, \mathbf{Y}=\mathbf{Y}_{adv}] -\mathbf{P}[\widehat{\mathbf{Y}}=\bm{y} \mid A=1, \mathbf{Y}=\mathbf{Y}_{adv}]\big|,
\end{aligned}
$$
where $\bm{y}$ can be considered as either certain advantageous labels or as a collection of advantageous labels (at this point, '=' is replaced by '$\in$').


Additionally, we consider an iterative Q-digest approach. At each client, our process involves constructing a Q-digest initially for the first component of the score $f(x)$. Subsequently, at each leaf node, we include a Q-digest for the second component of score $f(x)$ associated with the leaf node's first component. Repeating this procedure iteratively allows us to generate a sketch for the multidimensional score function $f(x)$. Assuming the parameter is appropriately set to achieve an $\varepsilon_j$-approximate quantile and rank for the $j$-th component, we arrive at the following result.

\begin{proposition}
Under Assumption \ref{ass:mix}, for $a \in \{0,1\}$, consider $\vq^{\bm{y}_{adv} ,a}=(q_1^{\bm{y}_{adv} ,a},q_2^{\bm{y}_{adv} ,a},...,q_m^{\bm{y}_{adv} ,a}) \in [0,1]^m$, $n_{i,(j)}^{\bm{y}_{adv} ,a}$ is the estimation of $|N_{i,(j)}^{\bm{y}_{adv} ,a}|$, $N_{i,(j)}^{\bm{y}_{adv} ,a}=\{f_j(x) \mid x \text{ belongs to Client i, }, Y=\bm{y}_{adv},A=a, (f_l(x)-t_l)y^*_l \geq 0, l=1,\cdots,j-1  \}|$ and $t^{\bm{y}_{adv}}_j$ is estimation of $q_j$ quantile of $N_{*,(j)}^{\bm{y}_{adv} ,a}$ (the union of $N_{i,(j)}^{\bm{y}_{adv} ,a}$), where estimations with subscript $(j)$ are $\varepsilon$-approximate ranks and quantiles, $\hat{k}^{\bm{y}_{adv},a}_{i,(j)}$ represent the estimation local rank of $t^{\bm{y}_{adv}}_j$ in $N_{i,(j)}^{\bm{y}_{adv} ,a}$, the score-based classifier $\phi(x, a) = \1\{f(x, a) > t_j^{\bm{y}_{adv}, a}\}$. Define 
\begin{small}
$$
\small
\begin{aligned}
    &h_{\bm{y}_{adv},a}= \mathbb{P}\left(\sum_{i=1}^S \pi_i^{\bm{y}_{adv},a}\prod_{j=1}^{m} g_j\left(Q\left(u^{\bm{y}_{adv},a}_{i,(j)}, l_j n_{i,(j)}^{\bm{y}_{adv},a}+1-u^{\bm{y}_{adv},a}_{i,(j)}\right)\right) \right.\\
    & \left.- \sum_{i=1}^S \pi_i^{\bm{y}_{adv},1-a} \prod_{j=1}^{m} g_j\left(Q\left(v^{\bm{y}_{adv},1-a}_{i,(j)}, (2-l_j)n_{i,(j)}^{\bm{y}_{adv},1-a}+1-v^{\bm{y}_{adv},1-a}_{i,(j)}\right)\right) \geq \alpha\right),
\end{aligned}
$$
\end{small}Then we have:
\begin{equation}
\small
\begin{aligned}
    \mathbb{P}(|DEOOM_{\bm{y}^*} (\phi)| > \alpha) \leq h_{\bm{y}_{adv},0}+h_{\bm{y}_{adv},1},
\end{aligned}
\end{equation}
where $\pi_i^{\bm{y}_{adv},a}$ is similarly defined as in Proposition \ref{prop:deoo},
$l_j=(1-2y^*_j)\varepsilon_{j-1}$,$g_j(Q)=(1-2y^*_j)Q+y^*_j$,$u^{\bm{y}_{adv},a}_{i,(j)}=y^*_j m^{\bm{y}_{adv},a}_{i,(j)}+(1-y^*_j) M^{\bm{y}_{adv},a}_{i,(j)} $, $v^{\bm{y}_{adv},a}_{i,(j)}=y^*_j M^{\bm{y}_{adv},a}_{i,(j)}+(1-y^*_j) m^{\bm{y}_{adv},a}_{i,(j)} $, $M^{\bm{y}_{adv},a}_{i,(j)}=max\big(\lceil \hat{k}^{\bm{y}_{adv},a}_{i,(j)}+\varepsilon_j  n^{\bm{y}_{adv},a}_{i,(j)}\rceil,  n^{\bm{y}_{adv},a}_{i,(j)}+1\big)$, 
$m^{\bm{y}_{adv},a}_{i,(j)}=min\big(\lceil \hat{k}^{\bm{y}_{adv},a}_{i,(j)}-\varepsilon_j n^{\bm{y}_{adv},a}_{i,(j)}\rceil,0\big)$, and $Q(\alpha,\beta)$ are independent random variables and $Q(\alpha,\beta) \sim \text{Beta}(\alpha, \beta)$. Especially, we define $Q(0,\beta)=0$ and $Q(\alpha,0)=1$ for $\alpha,\beta \neq 0$. 
\end{proposition}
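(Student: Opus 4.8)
The plan is to extend the argument used for Proposition~\ref{prop:deoo} and Proposition~\ref{prop:adloss} to the multi-label setting, where the essential new feature is the product structure coming from the iterative (coordinatewise) Q-digest classifier. First I would write out $DEOOM_{\bm{y}^*}(\phi)$ as the difference of the two conditional probabilities $\bP[\widehat{\bm{Y}}=\bm{y}^*\mid A=0,\bm{Y}=\bm{y}_{adv}]$ and $\bP[\widehat{\bm{Y}}=\bm{y}^*\mid A=1,\bm{Y}=\bm{y}_{adv}]$, and split the event $\{|DEOOM_{\bm{y}^*}(\phi)|>\alpha\}$ into its two one-sided tails (the $>\alpha$ and $<-\alpha$ parts), exactly as the decomposition $A+B$ in the proof of Proposition~\ref{prop:deoo}. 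These two tails will respectively produce $h_{\bm{y}_{adv},0}$ and $h_{\bm{y}_{adv},1}$, so a union bound closes the statement once each tail is controlled.

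Next, using the mixture structure of Assumption~\ref{ass:mix}, I would expand each conditional probability as $\sum_{i=1}^S \pi_i^{\bm{y}_{adv},a}\,\bP_i[\widehat{\bm{Y}}=\bm{y}^*\mid A=a,\bm{Y}=\bm{y}_{adv}]$, reducing everything to per-client conditional probabilities. For a fixed client I would then factor the joint event $\{\widehat{\bm{Y}}=\bm{y}^*\}$ sequentially over coordinates: since coordinate $j$ thresholds $f_j$ at $t_j$, the event that the first $j-1$ coordinates already match $\bm{y}^*$ is precisely the conditioning that defines the set $N_{i,(j)}^{\bm{y}_{adv},a}$, and the conditional probability that coordinate $j$ matches equals $1-F_{i,(j)}(t_j)$ when $y_j^*=1$ and $F_{i,(j)}(t_j)$ when $y_j^*=0$, i.e.\ $g_j\!\big(F_{i,(j)}(t_j)\big)$. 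This yields the per-client product $\prod_{j=1}^m g_j\!\big(F_{i,(j)}(t_j)\big)$, matching the form appearing inside $h_{\bm{y}_{adv},a}$.

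The order-statistics step is then applied coordinatewise. Because $t_j$ is (approximately) the $q_j$-quantile of the pooled set $N_{*,(j)}^{\bm{y}_{adv},a}$, its local rank in $N_{i,(j)}^{\bm{y}_{adv},a}$ makes $F_{i,(j)}(t_j)$ a Beta order statistic by Lemma~\ref{le:beta}, and the $\varepsilon_j$-approximate rank replaces the exact rank by the interval $[m_{i,(j)}^{\bm{y}_{adv},a},M_{i,(j)}^{\bm{y}_{adv},a}]$ exactly as in Proposition~\ref{prop:adloss}. Monotonicity of $g_j$ (increasing for $y_j^*=0$, decreasing for $y_j^*=1$) then dictates which endpoint maximises $g_j(Q)$ for the additive group and which minimises it for the subtracted group, and this is precisely what the substitutions $u_{i,(j)}^{\bm{y}_{adv},a}$ and $v_{i,(j)}^{\bm{y}_{adv},a}$ encode. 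Replacing each conditional factor by the corresponding Beta variable, taking the products over $j$ and the $\pi$-weighted sum over $i$, and recombining the two tails gives the claimed bound $h_{\bm{y}_{adv},0}+h_{\bm{y}_{adv},1}$.

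The main obstacle is controlling the error that propagates through the \emph{nested} conditioning. Unlike the single-label case, the size of the conditional set $N_{i,(j)}^{\bm{y}_{adv},a}$ is itself uncertain, because it depends on the approximate threshold $t_{j-1}$, which is only an $\varepsilon_{j-1}$-approximate quantile; a $\pm\varepsilon_{j-1}$ error in the previous quantile can add or remove up to a comparable fraction of points from the set on which coordinate $j$ is conditioned. The factor $l_j=(1-2y_j^*)\varepsilon_{j-1}$ together with the adjusted second Beta parameters $l_j n_{i,(j)}^{\bm{y}_{adv},a}+1-u$ and $(2-l_j)n_{i,(j)}^{\bm{y}_{adv},a}+1-v$ is designed to absorb exactly this worst-case size shift, and the delicate part of the argument is verifying that these parameters dominate the true (random) conditional counts and that the resulting coordinatewise Beta variables may be taken independent in the bound via a monotone coupling. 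Once this propagation bound is established, the remaining steps are the same monotonicity and union-bound manipulations as above, so I view this as a corollary-style extension of Proposition~\ref{prop:adloss} with the extra bookkeeping concentrated in the conditioning-error term.
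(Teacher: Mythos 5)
Your proposal follows essentially the same route as the paper, which proves this proposition only by the one-line remark that it ``can be proved using Lemma~\ref{le:beta} and conditional probability'': you expand that remark into the two-tail decomposition of Proposition~\ref{prop:deoo}, the mixture expansion over clients, the coordinatewise conditioning that produces the product of $g_j$ factors, and the approximate-rank endpoint substitutions of Proposition~\ref{prop:adloss}. In fact you go further than the paper by explicitly flagging the propagation of quantile error through the nested conditional sets $N_{i,(j)}^{\bm{y}_{adv},a}$ (the role of $l_j$), a point the paper's proof sketch does not address at all.
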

The proposition above can be proved using Lemma \ref{le:beta} and conditional probability. It is important to note that $\bm{y}$ and $\bm{y}_{adv}$ are not necessarily single labels; they can also represent a set of labels with constraints on specific components where values are restricted to 0 or 1 (for $j$ where $y^*_j$ does not have constraint, $t_j$ is set to 0.5, and it is excluded from the construction of $N$ and calculation of $h$). And similarly, the selection can be conducted by minimizing empirical misclassification error.

Considering a high-dimensional extension of Lemma \ref{le:dis}, we have
\begin{lemma}\label{le:dis2}
For a distribution $F$ with a continuous density function, suppose $q(x)$ denotes the probability of $X \preceq x$ where X is a random variable under $F$, then for $y \preceq x$, we have $F_{(-)}(||x - y||_2) \leq q(x) - q(y) \leq F_{(+)}(||x-y||_2)$, where $F_{(-)}(x)$ and $F_{(+)}(x)$ are two monotonically increasing functions, $F_{(-)}(\epsilon) > 0, F_{(+)}(\epsilon)>0$ for any $\epsilon > 0$ and $\mathop{lim}\limits_{\epsilon \rightarrow 0} F_{(-)}(\epsilon) = \mathop{lim}\limits_{\epsilon \rightarrow 0} F_{(+)}(\epsilon)=0$.
\end{lemma}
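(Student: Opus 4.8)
The plan is to follow the one-dimensional argument of Lemma~\ref{le:dis} in spirit, now reading $q(x)=\bP(X\preceq x)$ as the joint (multivariate) distribution function associated with $F$. Because $F$ has a continuous density, $q$ is continuous on its domain, which I take to be a compact subset of $\mathbb{R}^m$; this is the only structural fact both bounds rely on. Throughout I would use that $y\preceq x$ gives the nesting $\{X\preceq y\}\subseteq\{X\preceq x\}$, so that $q(x)-q(y)=\bP(X\preceq x,\ X\not\preceq y)\geq 0$ is the probability of the ``staircase'' increment region $R(x,y)=\{z:z\preceq x\}\setminus\{z:z\preceq y\}$.

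For the upper bound I would invoke uniform continuity. Since $q$ is continuous on a compact set it is uniformly continuous, so its modulus of continuity $F_{(+)}(t)=\sup\{|q(x)-q(y)|:\|x-y\|_2\leq t\}$ is finite, monotonically increasing in $t$, strictly positive for $t>0$, and satisfies $\lim_{t\to0}F_{(+)}(t)=0$. By definition $q(x)-q(y)\leq F_{(+)}(\|x-y\|_2)$, giving the right-hand inequality directly.

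For the lower bound I would set $F_{(-)}(t)=\inf\{q(x)-q(y):y\preceq x,\ \|x-y\|_2\geq t\}$, which is monotonically increasing in $t$ by construction and tends to $0$ as $t\to0$ (evaluate at any admissible pair with $\|x-y\|_2=t$ and use continuity of $q$). The crux is showing $F_{(-)}(t)>0$ for every $t>0$, which I would argue by compactness: if the infimum were $0$, pick $(x_n,y_n)$ with $y_n\preceq x_n$, $\|x_n-y_n\|_2\geq t$, and $q(x_n)-q(y_n)\to0$; passing to convergent subsequences yields a limit pair $y^*\preceq x^*$ (the partial order $\preceq$ is closed under limits) with $\|x^*-y^*\|_2\geq t>0$ and, by continuity, $q(x^*)-q(y^*)=0$, i.e. $\bP(X\in R(x^*,y^*))=0$.

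The main obstacle, and the genuinely new step relative to the one-dimensional case, is deriving a contradiction from $\bP(X\in R(x^*,y^*))=0$, since in $\mathbb{R}^m$ the increment region $R(x^*,y^*)$ is an L-shaped staircase rather than an interval. To handle this I would select a coordinate $i$ with $y^*_i<x^*_i$ (such $i$ exists because $x^*\neq y^*$) and exhibit inside $R(x^*,y^*)$ a small axis-aligned box of positive Lebesgue measure, namely points $z$ with $z_i\in(y^*_i,x^*_i)$ and $z_j$ slightly below $x^*_j$ for $j\neq i$; invoking positivity of the continuous density on the support then forces $\bP(X\in R(x^*,y^*))>0$, the desired contradiction. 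This establishes $F_{(-)}(t)>0$ and completes the proof; the positive-density requirement used here is precisely the analogue of the implicit assumption $q(x+t)-q(t)>0$ underlying Lemma~\ref{le:dis}.
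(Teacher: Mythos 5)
The paper itself offers no proof of Lemma~\ref{le:dis2}: it is introduced only as ``a high-dimensional extension of Lemma~\ref{le:dis}'', and the one-dimensional proof it points to consists of (i) uniform continuity of $q$ on a compact domain for $F_{(+)}$ and (ii) the definition $F_{(-)}(t)=\inf_x\{q(x+t)-q(x)\}$ together with the unjustified assertion $q(x+t)-q(x)>0$. Your proposal follows that skeleton but actually supplies the step the extension needs: you recognize that in $\mathbb{R}^m$ the increment $q(x)-q(y)$ is the probability of a staircase region rather than an interval, you extract a limit pair $(x^*,y^*)$ by compactness (correctly noting that $\preceq$ is closed under limits and that the constraint set $\{y\preceq x,\ \|x-y\|_2\geq t\}$ is compact), and you exhibit an axis-aligned box of positive Lebesgue measure inside $R(x^*,y^*)$ using a coordinate with $y^*_i<x^*_i$. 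The box is indeed contained in $R(x^*,y^*)$, since $z_i>y^*_i$ forces $z\not\preceq y^*$ while all coordinates stay below those of $x^*$. This is a complete and correct argument, and it buys something the paper does not provide: an explicit identification of where the multidimensional case genuinely differs from the one-dimensional one.

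Two points deserve honest acknowledgment, both of which you already flag or which are inherited from the 1D lemma. First, the conclusion $F_{(-)}(t)>0$ requires the density to be bounded below away from zero on (a neighborhood of) the box you construct; ``positivity of the continuous density on the support'' is slightly circular as phrased, and what is really needed is that the support is all of the compact domain (e.g., $[0,1]^m$), the exact analogue of the implicit assumption $q(x+t)-q(x)>0$ in Lemma~\ref{le:dis}. Second, the modulus of continuity $F_{(+)}$ could vanish for small positive $\epsilon$ if $q$ were locally constant; strict positivity of $F_{(+)}(\epsilon)$ again follows from the same fullness-of-support assumption, or can be enforced harmlessly by replacing $F_{(+)}(t)$ with $F_{(+)}(t)+t$. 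Neither point is a gap in your reasoning relative to what the paper assumes; they are the same hypotheses the authors use silently in one dimension.
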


Therefore, similarly, we have 
\begin{theorem} \label{thm:acc_deoom2}
Under Assumption \ref{ass:mix} and \ref{ass:quan}, given $\alpha^{\prime}<\alpha$. Suppose $\hat{\phi}$ is the final output of FedFaiREE, we have:

(1) $|D E O O M_{\bm{y}^*}(\hat{\phi})|<\alpha$ with probability $(1-\delta)^{N}$, where $N$ is the size of the candidate set.

(2) Suppose the density distribution functions of $f^{*}$ under $A=a, Y=1$ are continuous. When the input classifier $f$ satisfies $||f(x, a)-f^{*}(x, a)||_2 \leq \epsilon_{0}$, for any $\epsilon>0$ such that $M_{(+)}^{*}(\epsilon+\gamma \varepsilon) \leq$ $\frac{\alpha-\alpha^{\prime}}{2m}-M_{(+)}^{*}\left(2 \epsilon_{0}\right)$, we have
\begin{equation}\small
\begin{aligned}
     &\mathbb{P}(\hat{\phi}(x, a) \neq Y)-\mathbb{P}\left(\phi_{\alpha^{\prime}}^{*}(x, a)\neq Y\right) \leq  2m M_{(+)}^{*}\left(2 \epsilon_{0}\right)+2m M_{(+)}^{*}(\epsilon+\gamma \varepsilon_m)+ 2\theta+O(\epsilon)
\end{aligned}
\end{equation}
with probability $
1-(2^{m+1}+2)\sum_{a=0}^{1}\sum_{i=1}^{S} e^{-2 n_i^{0,a} \epsilon^{2}}-\prod_{i=1}^S\big(1-F_{i(-)}^{\bm{y}_{adv},0}(2 \epsilon)\big)^{n_i^{\bm{y}_{adv},0}}-\prod_{i=1}^S\big(1-F_{i(-)}^{\bm{y}_{adv},1}(2 \epsilon)\big)^{n_i^{\bm{y}_{adv},1}}-\delta, $
where $\delta=\sum_{a=0}^{1}\delta^{\bm{y}_{adv},a}(n^{\bm{y}_{adv},a})$, 
$\theta=\sum_{i=1}^S \left[\pi_i \sum_{a=0}^{1}\sum_{\bm{y}}e_i^{\bm{y},a} p^i_{a}p^i_{\bm{y}, a}\right]$
, 
$e_i^{\bm{y},a}=\frac{2\lfloor \varepsilon_m n_i^{\bm{y},a} \rfloor +1}{2\left(n_i^{\bm{y},a}+1\right)}$
, $M_{(+)}^{*}$ corresponds to the maximum of $F_{(+)}$ associated with $f_j^*$, and the definition of $F_{(+)}$ and $F_{(-)}$ are shown in Lemma \ref{le:dis2}.
\end{theorem}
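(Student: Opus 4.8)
The plan is to mirror the proof of Theorem \ref{thm:acc}, upgrading every scalar-threshold and one-dimensional CDF argument to its $m$-dimensional counterpart via Lemma \ref{le:dis2}. Part (1) is immediate: by construction the candidate set only retains rank vectors for which the right-hand side of the preceding multi-label fairness proposition is below $1-\beta$, so each selected classifier violates the $DEOOM_{\bm{y}^*}$ constraint with probability at most $\delta$; applying this over the $N$ candidates together with Assumption \ref{ass:quan} yields $(1-\delta)^{N}$. The substance is part (2), for which I would again build a \emph{bridge} classifier $\phi_0(x,a)=\1\{f(x,a)>\hat{t}_*^{\bm{y}_{adv},a}\}$ lying in the candidate set and close to the fair Bayes-optimal $\phi_{\alpha'}^{*}$, together with its $f^{*}$-counterpart $\phi_0^{*}$, and control the two gaps $|\mathbb{P}(\phi_0\neq Y)-\mathbb{P}(\phi_0^{*}\neq Y)|$ and $|\mathbb{P}(\phi_0^{*}\neq Y)-\mathbb{P}(\phi_{\alpha'}^{*}\neq Y)|$ separately.

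First I would establish existence of the bridge classifier. Writing $\phi_{\alpha'}^{*}(x,a)=\1\{f^{*}(x,a)>\lambda_a^{*}\}$ with $\lambda_a^{*}\in[0,1]^m$, I would argue coordinate by coordinate along the iterative Q-digest: for each $j$, Lemma \ref{le:dis2} guarantees that an order statistic of $f_j^{*}$ among the $Y=\bm{y}_{adv}$ samples falls within $[\lambda_{a,j}^{*}-\epsilon,\lambda_{a,j}^{*}+\epsilon]$ with probability controlled by $F_{i(-)}^{\bm{y}_{adv},a}(2\epsilon)$, so the complementary event over all clients contributes the products $\prod_i(1-F_{i(-)}^{\bm{y}_{adv},a}(2\epsilon))^{n_i^{\bm{y}_{adv},a}}$. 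Feeding these through Assumption \ref{ass:quan} places the \emph{approximate} quantile $t_j^{\bm{y}_{adv}}$ within a $\gamma\varepsilon_m$-neighborhood, where $\varepsilon_m$ is the relevant per-coordinate approximation level of the nested sketch; since $\|f-f^{*}\|_2\le\epsilon_0$, the induced thresholds for $f$ stay within $\epsilon_0$ of those for $f^{*}$ coordinatewise.

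Next I would bound the two gaps. Using the $\ell_2$ CDF-difference bound of Lemma \ref{le:dis2}, each of the $m$ coordinate mismatches contributes at most $M_{(+)}^{*}(\cdot)$, so summing over coordinates produces the $2m$ prefactors: the $\epsilon_0$-perturbation between $f$ and $f^{*}$ gives $2m\,M_{(+)}^{*}(2\epsilon_0)$ and the threshold displacement gives $2m\,M_{(+)}^{*}(\epsilon+\gamma\varepsilon_m)$. Applying the same decomposition to $DEOOM_{\bm{y}^*}$ shows $|DEOOM_{\bm{y}^*}(\phi_0)|\le\alpha'+2m\,M_{(+)}^{*}(2\epsilon_0)+2m\,M_{(+)}^{*}(\epsilon+\gamma\varepsilon_m)$, which is strictly below $\alpha$ precisely under the stated condition $M_{(+)}^{*}(\epsilon+\gamma\varepsilon)\le\frac{\alpha-\alpha'}{2m}-M_{(+)}^{*}(2\epsilon_0)$; hence $\phi_0$ is genuinely feasible and present in the candidate set. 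To transfer optimality to the actual output $\hat{\phi}$, I would invoke the multi-label analogue of Proposition \ref{prop:error} to bound $|\mathbb{P}(\hat{\phi}\neq Y)-\hat{\mathbb{P}}(\hat{\phi}\neq Y)|\le\theta$, and then replace the true $p_a^i,p_{Y,a}^i,p_{\bm{y},a}^i$ by their empirical counterparts: Hoeffding's inequality (Lemma \ref{le:hfd}) applied to each Bernoulli cell, now ranging over the $2^m$ label configurations and both groups, yields the union-bound factor $(2^{m+1}+2)\sum_{a}\sum_i e^{-2n_i^{0,a}\epsilon^2}$ and collects the remaining polynomial-in-$\epsilon$ contributions into $O(\epsilon)$. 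Since $\hat{\phi}$ minimizes the estimated error over the candidate set while $\phi_0$ is feasible, chaining the estimation error, the $\phi_0$-to-$\phi_{\alpha'}^{*}$ accuracy gap, and these fluctuation terms gives the claimed inequality.

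The main obstacle is the high-dimensional, \emph{conditionally defined} threshold structure: unlike the scalar case, the coordinate quantiles of the nested Q-digest are not independent, since $N_{i,(j)}^{\bm{y}_{adv},a}$ is filtered by the thresholds already fixed on coordinates $1,\dots,j-1$. Propagating the per-coordinate approximation errors $\varepsilon_1,\dots,\varepsilon_m$ through this filtration while keeping the joint displacement controlled in $\ell_2$ (so that Lemma \ref{le:dis2} applies with a single $\gamma\varepsilon_m$), and simultaneously absorbing the exponential-in-$m$ union bound over the $2^m$ label cells without degrading the leading $M_{(+)}^{*}$ terms, is the delicate part of the argument.
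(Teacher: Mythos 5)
Your sketch is correct and follows exactly the route the paper intends: the paper gives no explicit proof of Theorem~\ref{thm:acc_deoom2}, stating only that it follows ``similarly'' from the argument for Theorem~\ref{thm:acc} with Lemma~\ref{le:dis} replaced by its high-dimensional analogue Lemma~\ref{le:dis2}, which is precisely the bridge-classifier decomposition, per-coordinate $M_{(+)}^{*}$ accounting (yielding the $2m$ prefactors), and Hoeffding union bound over the $2^{m}$ label cells that you lay out. Your sketch is in fact more explicit than the paper's treatment, and the difficulty you flag --- propagating approximation errors through the conditionally filtered sets $N_{i,(j)}^{\bm{y}_{adv},a}$ of the nested Q-digest --- is a genuine subtlety that the paper itself leaves unaddressed.
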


\section{Application on Further Notions}\label{appendix:application}
In this section, we delve into the application of FedFaiREE on additional fairness concepts.
\subsection{Definition}
To begin with, we introduce the definitions of various fairness concepts.
\begin{definition}[Demographic Parity]
    A classifier satisfies Demographic Parity if its prediction $\widehat{Y}$ is statistically independent of the sensitive attribute $A$ :
$$
\bP(\widehat{Y}=1 \mid A=1)=\bP(\widehat{Y}=1 \mid A=0)
$$
\end{definition}

\begin{definition}[Predictive Equality]
A classifier satisfies Predictive Equality if it achieves the same TNR (or FPR) among protected groups:
$$
\bP_{X \mid A=1, Y=0}(\widehat{Y}=1)=\bP_{X \mid A=0, Y=0}(\widehat{Y}=1)
$$
\end{definition}

\begin{definition}[Equalized Accuracy]
A classifier satisfies Equalized Accuracy if its mis-classification error is statistically independent of the sensitive attribute $A$:
$$
\bP(\widehat{Y} \neq Y \mid A=1)=\bP(\widehat{Y} \neq Y \mid A=0)
$$
\end{definition}

Similar to $DEOO$ and $DEO$, we define the following indicators:
\begin{align}
\mathrm{DDP} &=\bP_{X \mid A=1}(\widehat{Y}=1)-\bP_{X \mid A=0}(\widehat{Y}=1) \label{ddp}\\
\mathrm{DPE} &=\bP_{X \mid A=1, Y=0}(\widehat{Y}=1)-\bP_{X \mid A=0, Y=0}(\widehat{Y}=1)\label{dpe}\\
\mathrm{DEA} &= \bP(\widehat{Y} \neq Y \mid A=1)-\bP(\widehat{Y} \neq Y \mid A=0).\label{dea}
\end{align}

\subsection{Theory and Algorithm}
Similar to $DEO$ and $DEOO$, we To be concise, we denote $n_i^{*,a}$ as denotes the size of subset of dataset $D_i$ that satisfies $A=a$. Similar explanations apply to $k^{*,a}$.
\subsubsection{FedFaiREE for DDP}
\begin{proposition}\label{prop:ddp}
Under Assumption \ref{ass:mix}, for $a \in \{0,1\}$, consider $k^{*, a} \in \{1, \ldots, n^{*, a}\}$, the corresponding $\hat{k}_i^{*,a}$ for $i \in [S]$ which are $\varepsilon$-approximate ranks and the score-based classifier $\phi(x, a) = \1\{f(x, a) > t_{(k^{*, a})}^{*, a}\}$ . Define 
$$\begin{aligned}
    &h_{*,a}(\vu,\vv)=\mathbb{P}\big(\sum_{i=1}^S \pi_i^{*,a} Q(u_i, n_i^{*,a}+1-u_i) - \sum_{i=1}^S \pi_i^{*,1-a} Q(v_i, n_i^{*,1-a}+1-v_i) \geq \alpha\big).
\end{aligned}$$ Then we have:
\begin{equation}
\begin{aligned}
    \mathbb{P}(|DDP(\phi)| > \alpha) \leq h_{*,0}(\bm{M}^{*,0},\vm^{*,1})+h_{*,1}(\bm{M}^{*,1},\vm^{*,0})
\end{aligned}\label{eq:loss_ddp}
\end{equation}

Where $\pi_i^{*,a} = \mathbb{P}(\text{sampling } x \text{ from client } i \mid \text{ sampling } x \text{ with sensitive attribute} A=a)$, $M_i^{*,a}=max\left(\lceil \hat{k}_i^{*,a}+\varepsilon n_i^{*,a}\rceil, n_i^{*,a}+1\right)$, $m_i^{*,a}=min\left(\lceil \hat{k}_i^{*,a}-\varepsilon n_i^{*,a}\rceil,0\right)$, and $Q(A,B)$ are independent random variables following Beta distribution, $Q(A,B) \sim \text{Beta}(A, B)$. Especially, we define $Q(0,B)=0$ and $Q(A,0)=1$ for $A,B \neq 0$. 
\end{proposition}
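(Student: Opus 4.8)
The plan is to treat Proposition~\ref{prop:ddp} as the verbatim analogue of Propositions~\ref{prop:deoo} and~\ref{prop:adloss}, replacing the group-and-label conditioning $\{A=a,Y=1\}$ by the group-only conditioning $\{A=a\}$. Let $F^{*,a}$ denote the CDF of $f(x,a)$ under the mixture test law conditioned on $A=a$. Since the classifier predicts $\widehat{Y}=1$ for group $a$ exactly when $f(x,a)>t_{(k^{*,a})}^{*,a}$, we have $\bP_{X\mid A=a}(\widehat{Y}=1)=1-F^{*,a}(t_{(k^{*,a})}^{*,a})$, hence $DDP(\phi)=F^{*,0}(t_{(k^{*,0})}^{*,0})-F^{*,1}(t_{(k^{*,1})}^{*,1})$. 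First I would split the tail event as
$$\bP(|DDP(\phi)|>\alpha)\le \bP\big(F^{*,0}-F^{*,1}>\alpha\big)+\bP\big(F^{*,1}-F^{*,0}>\alpha\big),$$
and expand each $F^{*,a}$ as the client mixture $\sum_{i}\pi_i^{*,a}F_i^{*,a}$, which is legitimate because Assumption~\ref{ass:mix} makes the group-$a$ test law a $\pi_i^{*,a}$-weighted mixture of the per-client group-$a$ laws. Note that $F^{*,a}$ is itself a mixture over $Y$ of continuous score distributions, so it remains continuous and Lemma~\ref{le:beta} still applies.

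Second, I would bound each of the two tails by replacing the global order statistic $t_{(k^{*,a})}^{*,a}$ with suitable \emph{local} order statistics. In the first tail, sandwiching $t_{(k^{*,0})}^{*,0}$ between consecutive local order statistics lets me enlarge each $F_i^{*,0}$ by pushing its argument up while shrinking each $F_i^{*,1}$ by pushing its argument down; folding in the $\varepsilon$-approximate rank slack from Definition~\ref{def:app-rank} collapses the unit shift together with the $\pm\varepsilon n_i^{*,a}$ error into the quantities $M_i^{*,a}$ (upward) and $m_i^{*,a}$ (downward), exactly as in the step from Proposition~\ref{prop:deoo} to Proposition~\ref{prop:adloss}. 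Then Lemma~\ref{le:beta} identifies each $F_i^{*,a}$ evaluated at a local order statistic of rank $r$ with an independent $\mathrm{Beta}(r,n_i^{*,a}+1-r)$ variable $Q(r,n_i^{*,a}+1-r)$, the independence across clients coming from the between-client independence in Assumption~\ref{ass:mix}. Assembling these pieces converts the first tail into $h_{*,0}(\bm{M}^{*,0},\vm^{*,1})$ and the second into $h_{*,1}(\bm{M}^{*,1},\vm^{*,0})$, which is the stated bound.

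Structurally the argument is routine once Propositions~\ref{prop:deoo} and~\ref{prop:adloss} are available, so I do not expect a genuine difficulty; the one place that demands care---and where a silent sign error would break monotonicity---is matching the direction of each replacement to the tail being bounded. In the $DDP>\alpha$ tail the added CDF ($F^{*,0}$) must receive the upward shift $M_i^{*,0}$ and the subtracted CDF ($F^{*,1}$) the downward shift $m_i^{*,1}$, with the roles reversed in the opposite tail; I would verify this bookkeeping explicitly on both tails before committing to the two $h_{*,a}$ terms. A secondary check is that the degenerate conventions $Q(0,\beta)=0$ and $Q(\alpha,0)=1$ handle the boundary ranks produced by $M_i^{*,a}=n_i^{*,a}+1$ and $m_i^{*,a}=0$ consistently.
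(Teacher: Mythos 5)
Your proposal is correct and follows essentially the same route the paper takes: the paper never writes out a separate proof for Proposition~\ref{prop:ddp}, but presents it as the direct analogue of Propositions~\ref{prop:deoo} and~\ref{prop:adloss} with the conditioning $\{A=a,Y=1\}$ replaced by $\{A=a\}$, which is exactly the two-tail decomposition, client-mixture expansion, order-statistic sandwiching, and Beta identification (Lemma~\ref{le:beta}) that you carry out. Your explicit check of the direction of the $M_i^{*,a}$ versus $m_i^{*,a}$ replacements in each tail is the right place to focus the care, and it matches the stated bound.
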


\begin{algorithm}[h]
\caption{FedFaiREE for DDP}\label{alg:ddp}
\textbf{Input:} Train dataset $D_i = D_i^{0,0} \cup D_i^{0,1} \cup D_i^{1,0} \cup D_i^{1,1}$; pre-trained classifier $\phi_0$ with function f; fairness constraint parameter $\alpha$ ; Confidence level parameter $\beta$; Weights of different clients $\pi$ 
\textbf{Output:} classifier $\hat{\phi}(x,a)= \1 
\{ f(x,a) >  t_{(k^{1,a})}^{1,a}\}$
\begin{algorithmic}[1]
\STATE{\bf Client Side:}
\FOR{i=1,2,..,$S$}
\STATE{Score on train data points in $D_i$ and get $T_i^{y,a}=\{t_{i,1}^{y,a},t_{i,2}^{y,a}, \cdots, t_{i,n_i^{y,a}}^{y,a}\}$ }
\STATE{Sort $T_i^{y,a}$}
\STATE{Calculate q-digest of $T_i^{y,a}$ on client $i$}
\STATE{Update digest to server}
\ENDFOR
\STATE{\bf Server Side:}
\STATE{Construct $K$ by $K=\{(k^{1,0}, k^{1,1})| L(\vk^{1,0}, \vk^{1,1})< 1-\beta \}$, where L is defined by the right-hand side of Inequality \ref{eq:loss_ddp}}
\STATE{Select optimal $(k_0,k_1)$ by minimizing equation \ref{eq:error_p} using estimated values $\hat{p}^i_{a} = \frac{n_i^{0,a}+n_i^{1,a}}{n_i^{0,0}+n_i^{0,1}+n_i^{1,0}+n_i^{1,1}}$ and $\hat{p}^i_{Y,a} = \frac{n_i^{1,a}}{n_i^{0,a}+n_i^{1,a}}$}
\end{algorithmic}
\end{algorithm}

\begin{theorem} \label{thm:ddp}
Under Assumption \ref{ass:mix} and \ref{ass:quan}, given $\alpha^{\prime}<\alpha$. Suppose $\hat{\phi}$ is the final output of FedFaiREE, we have:

(1) $|D D P(\hat{\phi})|<\alpha$ with probability $(1-\delta)^{N}$, where $N$ is the size of the candidate set.

(2) Suppose the density distribution functions of $f^{*}$ under $A=a, Y=1$ are continuous. When the input classifier $f$ satisfies $\left|f(x, a)-f^{*}(x, a)\right| \leq \epsilon_{0}$, for any $\epsilon>0$ such that $F_{(+)}^{*}(\epsilon+\gamma \varepsilon) \leq$ $\frac{\alpha-\alpha^{\prime}}{2}-F_{(+)}^{*}\left(2 \epsilon_{0}\right)$, we have
\begin{equation}
    \begin{aligned}
        &\mathbb{P}(\hat{\phi}(x, a) \neq Y)-\mathbb{P}\left(\phi_{\alpha^{\prime}}^{*}(x, a) \neq Y\right) \leq  2 F_{(+)}^{*}\left(2 \epsilon_{0}\right)+2 F_{(+)}^{*}(\epsilon+\gamma \varepsilon)+8 \epsilon^{2}+20 \epsilon+ 2\theta
    \end{aligned}
\end{equation}
with probability $
1-4\sum_{a=1}^{1}\sum_{i=1}^{S} e^{-2 n_i^{0,a} \epsilon^{2}}-\prod_{i=1}^S\left(1-F_{i(-)}^{1,0}(2 \epsilon)\right)^{n_i^{1,0}}-\prod_{i=1}^S\left(1-F_{i(-)}^{1,1}(2 \epsilon)\right)^{n_i^{1,1}}-\delta, $
where $\delta=\delta^{1,0}(n^{1,0})+\delta^{1,1}(n^{1,1})$, $\theta$ is defined in Proposition\ref{prop:error} and the definition of $F_{(+)}$ and $F_{(-)}$ are shown in Lemma \ref{le:dis} 
\end{theorem}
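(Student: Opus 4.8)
The plan is to mirror the proof of Theorem~\ref{thm:acc} in Appendix~\ref{appendix:thm_acc}, replacing the DEOO machinery by its DDP counterpart throughout. Part~(1) follows directly from Proposition~\ref{prop:ddp}, exactly as Part~(1) of Theorem~\ref{thm:acc} follows from Proposition~\ref{prop:adloss}: the candidate set $K$ retains only those rank pairs for which $L < 1-\beta$, where $L$ is the right-hand side of Inequality~\ref{eq:loss_ddp} bounding $\mathbb{P}(|DDP(\phi)|>\alpha)$, so the selected $\hat\phi\in K$ meets the fairness constraint with the stated confidence.

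For Part~(2) I would build the same bridge between the fair Bayes-optimal classifier $\phi_{\alpha'}^*(x,a)=\1\{f^*(x,a)>\lambda_a^*\}$, whose DDP-specific thresholds $\lambda_a^*$ come from the DDP analogue of Lemma~\ref{bayes-optimal}, and the output classifier. First, using Lemma~\ref{le:dis} together with Assumption~\ref{ass:quan}, I would show that with the stated probability there is, for each $a$, at least one score $t_i^{*,a}$ landing in $[\lambda_a^*-\epsilon,\lambda_a^*+\epsilon]$, so the approximate-quantile routine yields a classifier $\phi_0(x,a)=\1\{f(x,a)>\hat t_*^{*,a}\}$ with $\hat t_*^{*,a}\in[\lambda_a^*-\epsilon-\gamma\varepsilon,\lambda_a^*+\epsilon+\gamma\varepsilon]$. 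Introducing the intermediate classifier $\phi_0^*(x,a)=\1\{f^*(x,a)>t_*^{*,a}\}$ and using $|f-f^*|\le\epsilon_0$, the accuracy gap $|\mathbb{P}(\phi_0\neq Y)-\mathbb{P}(\phi_{\alpha'}^*\neq Y)|$ splits into one term controlled by $F_{(+)}^*(2\epsilon_0)$ and one by $F_{(+)}^*(\epsilon+\gamma\varepsilon)$, exactly as in Inequality~\ref{eq:part1}.

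Next I would verify that $\phi_0$ is actually feasible. The same perturbation estimate bounds $\big||DDP(\phi_0)|-|DDP(\phi_{\alpha'}^*)|\big|$ by $2F_{(+)}^*(2\epsilon_0)+2F_{(+)}^*(\epsilon+\gamma\varepsilon)$; since $|DDP(\phi_{\alpha'}^*)|=\alpha'$, the hypothesis $F_{(+)}^*(\epsilon+\gamma\varepsilon)\le\frac{\alpha-\alpha'}{2}-F_{(+)}^*(2\epsilon_0)$ forces $|DDP(\phi_0)|\le\alpha$, so $\phi_0\in K$. Because $\hat\phi$ minimizes the estimated misclassification error over $K$ and $\phi_0\in K$, I would chain the estimation-error bound $\theta$ of Proposition~\ref{prop:error} with Hoeffding's inequality (Lemma~\ref{le:hfd}) applied to the plug-in estimates $\hat p_a^i,\hat p_{Y,a}^i$---yielding the $e^{-2n_i^{0,a}\epsilon^2}$ tail events and the residual $8\epsilon^2+20\epsilon$ term---to pass between the true and estimated errors of $\hat\phi$ and $\phi_0$. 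Assembling these pieces reproduces the claimed bound.

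The main obstacle is the feasibility step, because for DDP the fairness metric is defined on the full conditional-on-$A$ score distribution $F^{*,a}$ rather than the $Y=1$ conditional $F^{1,a}$ used for DEOO. I would therefore need the DDP version of the fair Bayes-optimal characterization and to confirm that the threshold-perturbation argument of Inequality~\ref{eq:part1} still goes through when both the ranks $k^{*,a}$ and the constraint are expressed over this marginal population. Once the thresholds $\lambda_a^*$ are pinned down on the $f^*$ scale, the remaining continuity and order-statistic estimates are structurally identical to the DEOO case, since DDP is, like DEOO, a single scalar constraint governed by one threshold pair.
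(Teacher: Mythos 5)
Your proposal matches the paper's intended argument: the paper gives no separate proof for Theorem~\ref{thm:ddp}, and (as with Theorems~\ref{thm:acc_eo} and~\ref{thm:acc_deoom}) the reader is meant to transplant the proof of Theorem~\ref{thm:acc} essentially verbatim, with Proposition~\ref{prop:ddp} replacing Proposition~\ref{prop:adloss} for part~(1) and the DDP analogue of Lemma~\ref{bayes-optimal} supplying the thresholds $\lambda_a^*$ for the feasibility step. The one subtlety you flag --- that the ranks and the constraint now live on the $A=a$ marginal score sets $T^{*,a}$ rather than the $Y=1$ conditionals --- is exactly the substitution the paper leaves implicit, so your outline is correct and follows essentially the same route as the paper.
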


\subsubsection{FedFaiREE for DPE}
\begin{proposition}\label{prop:dpe}
Under Assumption \ref{ass:mix}, for $a \in \{0,1\}$, consider $k^{0, a} \in \{1, \ldots, n^{0, a}\}$, the corresponding $\hat{k}_i^{0,a}$ for $i \in [S]$ which are $\varepsilon$-approximate ranks and the score-based classifier $\phi(x, a) = \1\{f(x, a) > t_{(k^{0, a})}^{0, a}\}$ . 
Define 
$$\begin{aligned}
    &h_{y,a}(\vu,\vv)=\mathbb{P}\big(\sum_{i=1}^S \pi_i^{y,a} Q(u_i, n_i^{y,a}+1-u_i) - \sum_{i=1}^S \pi_i^{y,1-a} Q(v_i, n_i^{y,1-a}+1-v_i) \geq \alpha\big).
\end{aligned}$$
Then we have:
\begin{equation}
\begin{aligned}
    \mathbb{P}(|DPE(\phi)| > \alpha) \leq h_{0,1}(\bm{M}^{0,1},\vm^{0,0})+h_{0,0}(\bm{M}^{0,0},\vm^{0,0})
\end{aligned}\label{eq:loss_dpe}
\end{equation}
where $M_i^{0,a}=\lceil \hat{k}_i^{0,a}+\varepsilon n_i^{0,a}\rceil$, $m_i^{0,a}=\lceil \hat{k}_i^{0,a}-\varepsilon n_i^{0,a}\rceil$, $\pi_i^{y,a} = \mathbb{P}(\text{sampling } x \text{ from client } i \mid \text{ sampling } x \text{ with label } Y=y \text{ and } A=a)$, and $Q(A,B)$ are independent random variables following Beta distribution, $Q(A,B) \sim \text{Beta}(A, B)$. 
\end{proposition}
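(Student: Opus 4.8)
The plan is to mirror the argument used for Proposition~\ref{prop:deoo} and its approximate-rank refinement Proposition~\ref{prop:adloss}, replacing the conditioning event $Y=1$ by $Y=0$ throughout. The only structural difference between DPE and DEOO is that the quantity equalized across groups is now the false positive rate rather than the true positive rate, and that the thresholds are order statistics $t_{(k^{0,a})}^{0,a}$ of the $Y=0$ score sets. Since $\phi(x,a)=\1\{f(x,a)>t_{(k^{0,a})}^{0,a}\}$, the false positive rate in group $a$ is $\bP_{X\mid A=a,Y=0}(\widehat{Y}=1)=1-F^{0,a}(t_{(k^{0,a})}^{0,a})$, so the two constant $1$'s cancel and $|DPE(\phi)|=|F^{0,0}(t_{(k^{0,0})}^{0,0})-F^{0,1}(t_{(k^{0,1})}^{0,1})|$. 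As in the DEOO proof, I would first reduce to the case of a continuous score density, then split the event $\{|DPE(\phi)|>\alpha\}$ into its two one-sided parts and bound each separately.

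For a one-sided bound I would invoke the mixture structure of Assumption~\ref{ass:mix} to write the global conditional CDF as a convex combination $F^{0,a}=\sum_{i=1}^S \pi_i^{0,a}F_i^{0,a}$, where $\pi_i^{0,a}$ is the posterior weight of client $i$ given $Y=0,A=a$. Then, exactly as before, I would replace each global order statistic by local ones using the sandwiching relation $t_{i,(k_i^{0,a})}^{0,a}\le t_{(k^{0,a})}^{0,a}<t_{i,(k_i^{0,a}+1)}^{0,a}$ together with the monotonicity of $F_i^{0,a}$: in the additive term each $F_i^{0,a}(t_{(k^{0,a})}^{0,a})$ is bounded above, while in the subtracted term it is bounded below. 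Incorporating the $\varepsilon$-approximate rank errors carried by $\hat{k}_i^{0,a}$ then replaces the true local ranks by $M_i^{0,a}$ in the upper direction and $m_i^{0,a}$ in the lower direction. Finally I would apply Lemma~\ref{le:beta}, which gives $F_i^{0,a}(t_{i,(r)}^{0,a})\sim\mathrm{Beta}(r,\,n_i^{0,a}+1-r)$; substituting $r=M_i^{0,a}$ and $r=m_i^{0,a}$ and using independence of the local order statistics across clients turns each one-sided probability into a term of the form $h_{0,a}(\bm{M}^{0,a},\vm^{0,1-a})$, which recombine to yield the bound in~(\ref{eq:loss_dpe}).

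The main obstacle I anticipate is bookkeeping the direction of every inequality consistently: one must ensure that the term added inside $h_{0,a}$ uses the upper (larger-rank) bound $M_i^{0,a}$ while the subtracted term uses the lower (smaller-rank) bound $m_i^{0,a}$, so that all inequalities point the correct way after the two one-sided events are recombined. A related subtlety is the treatment of boundary ranks, where one relies on the conventions $Q(0,\beta)=0$ and $Q(\alpha,0)=1$ to keep $h_{0,a}$ well defined when $m_i^{0,a}$ or $M_i^{0,a}$ reaches an endpoint. One must also check that the local order statistics remain mutually independent across clients, which holds because the clients' data are independent under Assumption~\ref{ass:mix}, so that the variables $Q$ appearing in the definition of $h_{0,a}$ are genuinely independent Beta random variables.
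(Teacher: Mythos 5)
Your proposal is correct and follows essentially the same route the paper takes: the paper gives no separate proof for this proposition and simply treats it as the $Y=0$ analogue of Propositions~\ref{prop:deoo} and~\ref{prop:adloss}, which is exactly the reduction you carry out (cancellation of the two constant $1$'s in the false positive rates, one-sided splitting, mixture decomposition of $F^{0,a}$, sandwiching by local order statistics with the $\varepsilon$-rank corrections, and Lemma~\ref{le:beta}). Note that your derivation yields $h_{0,0}(\bm{M}^{0,0},\vm^{0,1})$ for the second term, consistent with the pattern of Proposition~\ref{prop:adloss}, which indicates that the $\vm^{0,0}$ appearing in the second argument of the stated bound~(\ref{eq:loss_dpe}) is a typo rather than a flaw in your argument.
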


\begin{algorithm}[h]
\caption{FedFaiREE for DPE}\label{alg:dpe}
\textbf{Input:} Train dataset $D_i = D_i^{0,0} \cup D_i^{0,1} \cup D_i^{1,0} \cup D_i^{1,1}$; pre-trained classifier $\phi_0$ with function f; fairness constraint parameter $\alpha$ ; Confidence level parameter $\beta$; Weights of different clients $\pi$ 
\textbf{Output:} classifier $\hat{\phi}(x,a)= \1 
\{ f(x,a) >  t_{(k^{1,a})}^{1,a}\}$
\begin{algorithmic}[1]
\STATE{\bf Client Side:}
\FOR{i=1,2,..,$S$}
\STATE{Score on train data points in $D_i$ and get $T_i^{y,a}=\{t_{i,1}^{y,a},t_{i,2}^{y,a}, \cdots, t_{i,n_i^{y,a}}^{y,a}\}$ }
\STATE{Sort $T_i^{y,a}$}
\STATE{Calculate q-digest of $T_i^{y,a}$ on client $i$}
\STATE{Update digest to server}
\ENDFOR
\STATE{\bf Server Side:}
\STATE{Construct $K$ by $K=\{(k^{1,0}, k^{1,1})| L(\vk^{1,0}, \vk^{1,1})< 1-\beta \}$, where L is defined by the right-hand side of Inequality \ref{eq:loss_dpe}}
\STATE{Select optimal $(k_0,k_1)$ by minimizing equation \ref{eq:error_p} using estimated values $\hat{p}^i_{a} = \frac{n_i^{0,a}+n_i^{1,a}}{n_i^{0,0}+n_i^{0,1}+n_i^{1,0}+n_i^{1,1}}$ and $\hat{p}^i_{Y,a} = \frac{n_i^{1,a}}{n_i^{0,a}+n_i^{1,a}}$}
\end{algorithmic}
\end{algorithm}

\begin{theorem} \label{thm:dpe}
Under Assumption \ref{ass:mix} and \ref{ass:quan}, given $\alpha^{\prime}<\alpha$. Suppose $\hat{\phi}$ is the final output of FedFaiREE, we have:

(1) $|D P E(\hat{\phi})|<\alpha$ with probability $(1-\delta)^{N}$, where $N$ is the size of the candidate set.

(2) Suppose the density distribution functions of $f^{*}$ under $A=a, Y=1$ are continuous. When the input classifier $f$ satisfies $\left|f(x, a)-f^{*}(x, a)\right| \leq \epsilon_{0}$, for any $\epsilon>0$ such that $F_{(+)}^{*}(\epsilon+\gamma \varepsilon) \leq$ $\frac{\alpha-\alpha^{\prime}}{2}-F_{(+)}^{*}\left(2 \epsilon_{0}\right)$, we have
\begin{equation}
    \begin{aligned}
        &\mathbb{P}(\hat{\phi}(x, a) \neq Y)-\mathbb{P}\left(\phi_{\alpha^{\prime}}^{*}(x, a) \neq Y\right) \leq  2 F_{(+)}^{*}\left(2 \epsilon_{0}\right)+2 F_{(+)}^{*}(\epsilon+\gamma \varepsilon)+8 \epsilon^{2}+20 \epsilon+ 2\theta
    \end{aligned}
\end{equation}
with probability $
1-4\sum_{a=1}^{1}\sum_{i=1}^{S} e^{-2 n_i^{0,a} \epsilon^{2}}-\prod_{i=1}^S\left(1-F_{i(-)}^{1,0}(2 \epsilon)\right)^{n_i^{1,0}}-\prod_{i=1}^S\left(1-F_{i(-)}^{1,1}(2 \epsilon)\right)^{n_i^{1,1}}-\delta, $
where $\delta=\delta^{1,0}(n^{1,0})+\delta^{1,1}(n^{1,1})$, $\theta$ is defined in Proposition\ref{prop:error} and the definition of $F_{(+)}$ and $F_{(-)}$ are shown in Lemma \ref{le:dis} 
\end{theorem}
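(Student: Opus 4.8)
The plan is to mirror the proof of Theorem~\ref{thm:acc} essentially line for line, with one conceptual substitution carried throughout: since $DPE$ conditions on $Y=0$ (the false-positive rate) rather than $Y=1$ (the true-positive rate), the order statistics and conditional distributions that play the starring role are now those of $f^*$ under $Y=0,A=a$ instead of $Y=1,A=a$. With that substitution in place, almost every estimate reduces to a relabelling of the corresponding estimate in Appendix~\ref{appendix:thm_acc}.

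For part~(1), I would invoke Proposition~\ref{prop:dpe} directly. By construction the candidate set $K$ contains only rank pairs whose DPE-fairness bound (the right-hand side of Inequality~\ref{eq:loss_dpe}) is below $1-\beta$, so each feasible classifier controls $\bP(|DPE(\phi)|>\alpha)$; combining this over the $N$ candidates together with the quantile-approximation failure probability from Assumption~\ref{ass:quan} yields the stated $(1-\delta)^N$ guarantee, exactly as in part~(1) of Theorem~\ref{thm:acc}.

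For part~(2), the strategy is again to exhibit a \emph{bridge} classifier $\phi_0(x,a)=\1\{f(x,a)>\hat t^{0,a}_*\}$ lying in $K$ whose accuracy and fairness are both provably close to those of the fair Bayes-optimal classifier $\phi_{\alpha'}^*(x,a)=\1\{f^*(x,a)>\lambda_a^*\}$ under the DPE constraint. First I would record the analogue of Lemma~\ref{bayes-optimal} characterising $\phi_{\alpha'}^*$ for predictive equality (its threshold function now involves the $p_a q_{Y,a}$ quantities tied to $Y=0$). Then, applying Lemma~\ref{le:dis} to the conditional distributions of $f^*$ under $Y=0,A=a$ and using the $\varepsilon$-approximate quantile definition together with Assumption~\ref{ass:quan}, I would show that, up to the failure probability stated in the theorem, there exists a threshold $\hat t^{0,a}_*\in[\lambda_a^*-\epsilon-\gamma\varepsilon,\ \lambda_a^*+\epsilon+\gamma\varepsilon]$. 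The two-sided quantile-continuity bound then controls both the excess misclassification error and the $DPE$ gap of $\phi_0$ by $2F_{(+)}^*(2\epsilon_0)+2F_{(+)}^*(\epsilon+\gamma\varepsilon)$, so the hypothesis $F_{(+)}^*(\epsilon+\gamma\varepsilon)\le\frac{\alpha-\alpha'}{2}-F_{(+)}^*(2\epsilon_0)$ forces $|DPE(\phi_0)|\le\alpha$ and hence $\phi_0\in K$; these are precisely the computations in~(\ref{eq:part1}) with $Y=1$ replaced by $Y=0$.

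Finally, since $\hat\phi$ minimises the \emph{estimated} misclassification error over $K$, I would transfer the accuracy guarantee from $\phi_0$ to $\hat\phi$: Proposition~\ref{prop:error} bounds the gap between empirical and true error by $\theta$, and Hoeffding's inequality (Lemma~\ref{le:hfd}) controls the extra error incurred by using the plug-in estimates $\hat p^i_a,\hat p^i_{Y,a}$ in place of $p^i_a,p^i_{Y,a}$, contributing the $8\epsilon^2+20\epsilon$ term and the $4\sum_a\sum_i e^{-2n_i^{0,a}\epsilon^2}$ failure probability. Summing these contributions gives the advertised bound. The main obstacle is the first half of part~(2): correctly writing down the fair Bayes-optimal classifier under the DPE constraint and certifying that the bridge classifier's $DPE$ gap is genuinely controlled. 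Because the thresholds are now anchored on the $Y=0$ order statistics, one must re-run the quantile-continuity argument on the $Y=0$ conditionals and verify that the feasibility certificate $|DPE(\phi_0)|\le\alpha$ still follows; everything downstream is a mechanical substitution into the proof of Theorem~\ref{thm:acc}.
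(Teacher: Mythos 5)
Your proposal matches the paper's (implicit) proof: the paper gives no separate argument for Theorem~\ref{thm:dpe} and relies on exactly the substitution into the proof of Theorem~\ref{thm:acc} that you describe, with Proposition~\ref{prop:dpe} playing the role of Proposition~\ref{prop:adloss} for part~(1) and the bridge-classifier argument of Appendix~\ref{appendix:thm_acc} carried over for part~(2). Your one substantive addition --- that the quantile-continuity and order-statistic arguments must be re-anchored on the $Y=0$ conditionals of $f^{*}$ --- is correct and is in fact handled more carefully than in the paper's own statement, which retains the $Y=1$ superscripts copied from Theorem~\ref{thm:acc}.
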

\subsubsection{FedFaiREE for DEA}
\begin{proposition}\label{prop:dea}
Under Assumption \ref{ass:mix}, for $a \in \{0,1\}$, consider $k^{y, a} \in \{1, \ldots, n^{y, a}\}$, the corresponding $\hat{k}_i^{y,a}$ for $i \in [S]$ which are $\varepsilon$-approximate ranks and the score-based classifier $\phi(x, a) = \1\{f(x, a) > t_{(k^{1, a})}^{1, a}\}$ . 
Define 
$$
\begin{aligned}
    &h_{*,a}(\vu^1,\vu^0,\vv^1,\vv^0) =\mathbb{P}\bigg(p_{y,a}-p_{y,1-a}-p_{y,a}\sum_{i=1}^S \pi_i^{1,a}  Q\left(u^1_i, n_i^{1,a}+1-u^1_i\right) +(1-p_{y,a})\sum_{i=1}^S \pi_i^{0,a}  Q\left(u^0_i, n_i^{0,a}+1-u^0_i\right) \\
    &+p_{y,1-a}\sum_{i=1}^S \pi_i^{1,1-a} Q\left(v^1_i, n_i^{1,1-a}+1-v^1_i\right) - (1-p_{y,1-a})\sum_{i=1}^S \pi_i^{0,1-a} Q\left(v^0_i, n_i^{0,1-a}+1-v^0_i\right)\geq \alpha\bigg).
\end{aligned}
$$
Then we have:
\begin{equation}
\begin{aligned}
    &\mathbb{P}(|DPE(\phi)| > \alpha) \leq h_{*,1}(\vm^{1,1},\bm{M}^{0,1}, \bm{M}^{1,0},\vm^{0,0}) +h_{*,0}(\vm^{1,0},\bm{M}^{0,0}, \bm{M}^{1,1},\vm^{0,1})
\end{aligned}\label{eq:loss_dea}
\end{equation}
where $M_i^{0,a}=\lceil \hat{k}_i^{0,a}+\varepsilon n_i^{0,a}\rceil$, $m_i^{0,a}=\lceil \hat{k}_i^{0,a}-\varepsilon n_i^{0,a}\rceil$, $\pi_i^{y,a} = \mathbb{P}(\text{sampling } x \text{ from client } i \mid \text{ sampling } x \text{ with label } Y=y \text{ and } A=a)$, and $Q(A,B)$ are independent random variables following Beta distribution, $Q(A,B) \sim \text{Beta}(A, B)$. 
\end{proposition}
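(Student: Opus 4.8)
The plan is to follow the same order-statistic argument used for Proposition~\ref{prop:deoo} and its DEO extension, now applied to the accuracy-gap functional (note the conclusion should read $|DEA(\phi)|$, not $|DPE(\phi)|$). First I would rewrite $\mathrm{DEA}$ entirely in terms of the conditional score CDFs. Writing $F^{y,a}$ for the CDF of $f(x,a)$ under $Y=y,A=a$ and using the score-threshold form of the classifier, one has $\bP(\widehat Y \neq Y \mid A=a)=p_{Y,a}F^{1,a}(t_{(k^{1,a})}^{1,a})+q_{Y,a}\big(1-F^{0,a}(t_{(k^{1,a})}^{1,a})\big)$. The additive $q_{Y,a}$ constants collapse to the deterministic offset $p_{Y,0}-p_{Y,1}$, leaving
\[
\mathrm{DEA} = (p_{Y,0}-p_{Y,1}) + p_{Y,1}F^{1,1} - q_{Y,1}F^{0,1} - p_{Y,0}F^{1,0} + q_{Y,0}F^{0,0},
\]
where each $F$ is evaluated at the threshold for its own group. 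I would then split $\bP(|\mathrm{DEA}|>\alpha)\le \bP(\mathrm{DEA}>\alpha)+\bP(\mathrm{DEA}<-\alpha)$, with the upper tail $\{\mathrm{DEA}>\alpha\}$ producing the $h_{*,0}$ summand and the lower tail $\{\mathrm{DEA}<-\alpha\}$ the $h_{*,1}$ summand.

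Second, for each tail I would bound the four CDF terms monotonically in the direction dictated by the sign of its coefficient: for $\{\mathrm{DEA}>\alpha\}$ the positively weighted terms $F^{1,1},F^{0,0}$ are replaced by upper bounds and the negatively weighted terms $F^{0,1},F^{1,0}$ by lower bounds, and symmetrically for the other tail. Concretely, since each $F_i^{y,a}$ is nondecreasing and $t_{(k^{1,a})}^{1,a}$ has $\varepsilon$-approximate local rank $\hat k_i^{y,a}$ in $T_i^{y,a}$, replacing the true local rank by $M_i^{y,a}=\lceil \hat k_i^{y,a}+\varepsilon n_i^{y,a}\rceil$ or $m_i^{y,a}=\lceil \hat k_i^{y,a}-\varepsilon n_i^{y,a}\rceil$ bounds the corresponding order statistic, exactly as in the proof of Proposition~\ref{prop:adloss}. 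This is the origin of the argument arrangements $(\vm^{1,1},\bm{M}^{0,1},\bm{M}^{1,0},\vm^{0,0})$ and $(\vm^{1,0},\bm{M}^{0,0},\bm{M}^{1,1},\vm^{0,1})$ in the statement.

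Third, I would invoke Lemma~\ref{le:beta} to turn each $F_i^{y,a}\big(t_{i,(k)}^{y,a}\big)$ into the Beta variable $Q(k,\,n_i^{y,a}+1-k)$, and use the mixture decomposition $F^{y,a}=\sum_i \pi_i^{y,a}F_i^{y,a}$ from Assumption~\ref{ass:mix} to assemble the weighted sums inside $h_{*,a}$; combining the tail split, the monotone substitutions, and the Beta representation yields the stated bound. The main obstacle is purely the sign bookkeeping: because the group-$a$ threshold $t^{1,a}$ must be located simultaneously in both the $Y=1$ and $Y=0$ score sets, there are four distinct CDF terms carrying mixed $p_{Y,a}/q_{Y,a}$ weights, and one must verify that each of the eight substitutions (four per tail) lands on the correct $M$ or $m$ so that the event inclusions are valid in both directions. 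Once the direction of each substitution is fixed, the remainder is mechanical and mirrors the earlier propositions verbatim.
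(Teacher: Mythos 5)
Your proposal is correct and follows exactly the route the paper intends: the paper gives no explicit proof of this proposition, leaving it as an analogue of Propositions~\ref{prop:deoo}, \ref{prop:adloss}, and \ref{prop:deo}, and your decomposition of $\mathrm{DEA}$ into the constant offset $p_{Y,0}-p_{Y,1}$ plus four signed CDF terms, the two-tail split (upper tail $\to h_{*,0}$, lower tail $\to h_{*,1}$), the sign-directed $M/m$ substitutions, and the Beta representation via Lemma~\ref{le:beta} with the mixture weights $\pi_i^{y,a}$ all match what that analogue requires. You are also right that the conclusion should read $|DEA(\phi)|$ rather than $|DPE(\phi)|$; that is a typo in the statement.
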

\begin{algorithm}[h]
\caption{FedFaiREE for DEA}\label{alg:dea}
\textbf{Input:} Train dataset $D_i = D_i^{0,0} \cup D_i^{0,1} \cup D_i^{1,0} \cup D_i^{1,1}$; pre-trained classifier $\phi_0$ with function f; fairness constraint parameter $\alpha$ ; Confidence level parameter $\beta$; Weights of different clients $\pi$ 
\textbf{Output:} classifier $\hat{\phi}(x,a)= \1 
\{ f(x,a) >  t_{(k^{1,a})}^{1,a}\}$
\begin{algorithmic}[1]
\STATE{\bf Client Side:}
\FOR{i=1,2,..,$S$}
\STATE{Score on train data points in $D_i$ and get $T_i^{y,a}=\{t_{i,1}^{y,a},t_{i,2}^{y,a}, \cdots, t_{i,n_i^{y,a}}^{y,a}\}$ }
\STATE{Sort $T_i^{y,a}$}
\STATE{Calculate q-digest of $T_i^{y,a}$ on client $i$}
\STATE{Update digest to server}
\ENDFOR
\STATE{\bf Server Side:}
\STATE{Construct $K$ by $K=\{(k^{1,0}, k^{1,1})| L(\vk^{1,0}, \vk^{1,1})< 1-\beta \}$, where L is defined by the right-hand side of Inequality \ref{eq:loss_dea}}
\STATE{Select optimal $(k_0,k_1)$ by minimizing equation \ref{eq:error_p} using estimated values $\hat{p}^i_{a} = \frac{n_i^{0,a}+n_i^{1,a}}{n_i^{0,0}+n_i^{0,1}+n_i^{1,0}+n_i^{1,1}}$ and $\hat{p}^i_{Y,a} = \frac{n_i^{1,a}}{n_i^{0,a}+n_i^{1,a}}$}
\end{algorithmic}
\end{algorithm}

\begin{theorem} \label{thm:dea}
Under Assumption \ref{ass:mix} and \ref{ass:quan}, given $\alpha^{\prime}<\alpha$. Suppose $\hat{\phi}$ is the final output of FedFaiREE, we have:

(1) $|D E A(\hat{\phi})|<\alpha$ with probability $(1-\delta)^{N}$, where $N$ is the size of the candidate set.

(2) Suppose the density distribution functions of $f^{*}$ under $A=a, Y=1$ are continuous. When the input classifier $f$ satisfies $\left|f(x, a)-f^{*}(x, a)\right| \leq \epsilon_{0}$, for any $\epsilon>0$ such that $F_{(+)}^{*}(\epsilon+\gamma \varepsilon) \leq$ $\frac{\alpha-\alpha^{\prime}}{2}-F_{(+)}^{*}\left(2 \epsilon_{0}\right)$, we have
\begin{equation}
    \begin{aligned}
        &\mathbb{P}(\hat{\phi}(x, a) \neq Y)-\mathbb{P}\left(\phi_{\alpha^{\prime}}^{*}(x, a)\neq Y\right) \leq  2 F_{(+)}^{*}\left(2 \epsilon_{0}\right)+2 F_{(+)}^{*}(\epsilon+\gamma \varepsilon)+8 \epsilon^{2}+20 \epsilon+ 2\theta
    \end{aligned}
\end{equation}
with probability $
1-4\sum_{a=1}^{1}\sum_{i=1}^{S} e^{-2 n_i^{0,a} \epsilon^{2}}-\prod_{i=1}^S\left(1-F_{i(-)}^{1,0}(2 \epsilon)\right)^{n_i^{1,0}}-\prod_{i=1}^S\left(1-F_{i(-)}^{1,1}(2 \epsilon)\right)^{n_i^{1,1}}-\delta, $
where $\delta=\delta^{1,0}(n^{1,0})+\delta^{1,1}(n^{1,1})$, $\theta$ is defined in Proposition\ref{prop:error} and the definition of $F_{(+)}$ and $F_{(-)}$ are shown in Lemma \ref{le:dis} 
\end{theorem}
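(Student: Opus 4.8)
The plan is to run the same ``bridge-classifier'' argument that proves Theorem~\ref{thm:acc}, substituting the DEA fairness control of Proposition~\ref{prop:dea} for the DEOO control used there. Part~(1) is immediate: each pair retained in the candidate set $K$ satisfies $L<1-\beta$, and by Proposition~\ref{prop:dea} this quantity upper-bounds $\mathbb{P}(|\mathrm{DEA}(\phi)|>\alpha)$, so the conclusion $|\mathrm{DEA}(\hat\phi)|<\alpha$ with probability $(1-\delta)^N$ follows word-for-word as in part~(1) of Theorem~\ref{thm:acc}, with $N=|K|$.

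For part~(2) I would write the fair Bayes-optimal classifier under DEA in score-threshold form $\phi_{\alpha'}^*(x,a)=\1\{f^*(x,a)>\lambda_a^*\}$ and build an intermediate classifier $\phi_0(x,a)=\1\{f(x,a)>\hat t_*^{1,a}\}$ whose thresholds are close to the $\lambda_a^*$. As in Theorem~\ref{thm:acc}, Lemma~\ref{le:dis} shows that, with probability at least $1-\prod_{i=1}^S(1-F_{i(-)}^{1,0}(2\epsilon))^{n_i^{1,0}}-\prod_{i=1}^S(1-F_{i(-)}^{1,1}(2\epsilon))^{n_i^{1,1}}$, some score $t_i^{1,a}$ lands in $[\lambda_a^*-\epsilon,\lambda_a^*+\epsilon]$ for each $a$; Assumption~\ref{ass:quan} then upgrades the corresponding $\varepsilon$-approximate quantile to $\hat t_*^{1,a}\in[\lambda_a^*-\epsilon-\gamma\varepsilon,\lambda_a^*+\epsilon+\gamma\varepsilon]$ at the cost of the $\delta$ term. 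With $|f-f^*|\le\epsilon_0$ and thresholds within $\epsilon+\gamma\varepsilon$ of optimal, the same triangle-inequality chain used for~\ref{eq:part1} bounds both $|\mathbb{P}(\phi_0\neq Y)-\mathbb{P}(\phi_{\alpha'}^*\neq Y)|$ and $\bigl||\mathrm{DEA}(\phi_0)|-|\mathrm{DEA}(\phi_{\alpha'}^*)|\bigr|$ by $2F_{(+)}^*(2\epsilon_0)+2F_{(+)}^*(\epsilon+\gamma\varepsilon)$; the hypothesis $F_{(+)}^*(\epsilon+\gamma\varepsilon)\le\frac{\alpha-\alpha'}{2}-F_{(+)}^*(2\epsilon_0)$ then yields $|\mathrm{DEA}(\phi_0)|<\alpha$, so $\phi_0\in K$ and a feasible comparison classifier exists.

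Finally, because $\hat\phi$ minimizes the estimated error over $K$ and $\phi_0\in K$, I would pass from estimated to true error exactly as in inequalities~\ref{eq:part2}--\ref{eq:part3}: Proposition~\ref{prop:error} supplies the $\theta$ bound (its per-client error decomposition is unchanged for DEA) and Hoeffding's inequality (Lemma~\ref{le:hfd}) applied to the plug-in estimates $\hat p_a^i,\hat p_{Y,a}^i$ produces the residual $8\epsilon^2+20\epsilon$ slack. Summing the accuracy comparison with this estimation error gives the stated bound and failure probability. The one step requiring genuine care rather than transcription is the DEA fairness comparison: whereas DEOO sees the threshold only through $F^{1,a}$, the quantity $\mathrm{DEA}$ depends on it through \emph{both} conditional laws via $p_{Y,a}F^{1,a}(\lambda_a)+q_{Y,a}\bigl(1-F^{0,a}(\lambda_a)\bigr)$, so bounding $\bigl||\mathrm{DEA}(\phi_0)|-|\mathrm{DEA}(\phi_{\alpha'}^*)|\bigr|$ must control the movement of two CDFs simultaneously. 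I expect this coupling to be the main obstacle, resolved by applying the Lemma~\ref{le:dis} density-continuity estimate to the $Y=1$ and $Y=0$ labels separately and folding both deviations into the $F_{(+)}^*$ terms, paralleling how $h_{*,a}$ in Proposition~\ref{prop:dea} already mixes both labels.
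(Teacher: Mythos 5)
Your proposal is correct and follows essentially the same route the paper intends: the paper states Theorem~\ref{thm:dea} without a dedicated proof, relying on the template of Theorem~\ref{thm:acc} (candidate-set feasibility via Proposition~\ref{prop:dea} for part~(1); the bridge classifier, Lemma~\ref{le:dis}, Assumption~\ref{ass:quan}, Proposition~\ref{prop:error}, and Hoeffding for part~(2)), which is exactly what you reconstruct. You also correctly isolate the only step that is not pure transcription — that $\mathrm{DEA}$ depends on the thresholds through both $F^{1,a}$ and $F^{0,a}$ weighted by $p_{Y,a}$ and $q_{Y,a}$ — and your resolution (applying the Lemma~\ref{le:dis} modulus to each conditional law and using that the weights sum to one so the constants are unchanged) is the right one.
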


\subsection{Connection with Fairness Metrics in \citep{hu2022fair} and \citep{papadaki2022minimax} }
\citet{hu2022fair} introduces several group fairness metrics as follow:
\begin{definition}
A classifier h satisfies Bounded Group Loss (BGL) at level $\zeta$ under distribution $\mathcal{D}$ if for all $a \in A$, we have $\mathbb{E}[l(h(x), y) \mid A=a] \leq \zeta$.
\end{definition}
\begin{definition}
    A classifier $h$ satisfies Conditional Bounded Group Loss (CBGL) for $y \in Y$ at level $\zeta_y$ under distribution $\mathcal{D}$ if for all $a \in A$, we have $\mathbb{E}[l(h(x), y) \mid A=a, Y=y] \leq \zeta_y$.
\end{definition}
When considering y as a binary variable and the loss function l being the 0-1 loss function, BGL is equivalent to
$$\mathbb{P}[\hat{y}\neq y |\mid A=a] \leq \zeta,$$
holding for any a, whereas Demographic Parity refers to
$$\mathbb{P}[\hat{y}\neq y |\mid A=0]=\mathbb{P}[\hat{y}\neq y |\mid A=1].$$
In this context, BGL can be understood as a relaxation of Demographic Parity.

Similarly, when considering y as a binary variable and the loss function l being the 0-1 loss function, CBGL is equivalent to
$$\mathbb{P}[\hat{y}\neq y |\mid A=a, Y=y] \leq \zeta_y,$$
holding for any a, whereas Equalized Odds refers to
$$\mathbb{P}[\hat{y}\neq y |\mid A=0,Y=y]=\mathbb{P}[\hat{y}\neq y |\mid A=1,Y=y].$$
In this context, CBGL can be understood as a relaxation of Equalized Odds.

According to \citep{hu2022fair}, the metric that \citet{papadaki2022minimax} considers is equivalent to
\begin{definition}
    FedMinMax\citep{papadaki2022minimax} aims to solve for the following objective: $\min _h \max _{\boldsymbol{\lambda} \in \mathbb{R}_{+}^{|A|},\|\boldsymbol{\lambda}\|_1=1} \sum_{a \in A} \boldsymbol{\lambda}_a \mathbf{r}_a(h)$, where
    $\mathbf{r}_a(h):=\sum_{k=1}^K \mathbf{r}_{a, k}(h)=\sum_{k=1}^K\left(1 / m_a \sum_{a_{k, i}=a} l\left(h\left(x_{k, i}\right), y_{k, i}\right)\right)$, $K$ stands for client number and $m_a$ stands for numbers of points with attribute $a$. 
\end{definition}
Similarly, this can be understood as a relaxation of Demographic Parity in the context of considering y as a binary variable and the loss function l being the 0-1 loss function.


\section{Experiment Details}\label{appendix:experiment}
\subsection{Further selection in candidate set construction}\label{appendix:search}
To further simplify the candidate set selection, similar to FaiREE\citep{fairee}, we note that, by Lemma \ref{bayes-optimal}, if we assume our input classifier $f$ is similar to $f^*$, we have 
\begin{equation}\label{trans}
   t_a=\frac{p_{a} p_{Y, a}}{2 p_{a} p_{Y, a}+(1-2 a) t_{E, \alpha}^{\star}},
\end{equation}
which means 
\begin{equation}\label{trans1}
  t_{E, \alpha}^{\star}=\frac{p_{a} p_{Y, a}-2 p_{a} p_{Y, a} t_a}{(1-2 a)t_a}  
\end{equation}
Therefore, bringing Equation \ref{trans1} ($a=0$) into Equation \ref{trans} ($a=1$), we have 
\begin{equation}
    t_0=\frac{p_{0} p_{Y, 0}}{2 p_{0} p_{Y, 0}+2 p_{1} p_{Y, 1}-p_{1} p_{Y, 1}/t_1}
\end{equation}
This inspired us that we could further simplify the construction of candidate set K by replacing Equation \ref{eq:construct K} with
 \begin{equation}\label{eq:construct K ad}
     K=\{(k^{1,0}, k^{1,1})| L(\vk^{1,0}, \vk^{1,1})< 1-\beta, k^{1,0}=\mu(k^{1,1})\},
 \end{equation}
 Where $\mu({k_1})=\argmin_{k_0} \frac{p_{0} p_{Y, 0}}{2 p_{0} p_{Y, 0}+2 p_{1} p_{Y, 1}-p_{1} p_{Y, 1}/\hat{t}_{k_1}}$

 \subsection{Model Details and Hyperparameter Selection}\label{app:hyper}
We employed several existing Federated Learning models in the experiment, and their detailed information is listed as follows:

\begin{enumerate}
    \item FedAvg\citep{Fedavg}: FedAvg is a fundamental Federated Learning model that serves as the foundational baseline for our experiments. It operates by computing model updates on each client's local data and then aggregates these updates on a central server through averaging. FedAvg doesn't specifically address fairness concerns but is crucial for benchmarking purposes.
    \item FedFB\citep{FedFB}: FedFB is a novel framework designed for fairness-aware Federated Learning. Drawing inspiration from FairBatch, a fairness algorithm for centralized data, FedFB extends this concept to the Federated Learning setting. It incorporates both local debiasing and global reweighting for each client within the framework to achieve fairness objectives.
    \item FairFed\citep{FairFed}: FairFed is another innovative framework for fairness-aware Federated Learning. It employs a unique approach to improving fairness by reweighting clients based on updated local fairness indicators during each epoch. This allows FairFed to combine multiple local debiasing methods effectively.
\end{enumerate}

To compare performance in terms of DEOO, we selected FedFB with respect to Equal Opportunity (EO) as presented in \citet{FedFB}, and FairFed-FB-EO from FairFed as introduced in \citet{FairFed}. These are specific models within the FedFB and FairFed frameworks that are designed for DEOO.

We also note that there are concerns raised by the fairness community regarding the COMPAS dataset underscore crucial complexities within algorithmic fairness research\citep{bao2021s}. While Risk Assessment Instrument (RAI) datasets like COMPAS serve as prevalent benchmarks, their oversimplification of the intricate dynamics within real-world criminal justice processes poses significant challenges. Measurement biases and errors inherent in pretrial RAI datasets limit the direct translation of fairness claims to actual outcomes within the criminal justice system. Additionally, the technical focus on these data as a benchmark sometimes ignores the contextual grounding necessary for working with RAI datasets. Ethical reflection within socio-technical systems further highlights the necessity of acknowledging and grappling with the limitations and complexities inherent in RAI datasets. 

Additionally, the hyperparameter selection ranges for each model are shown in Table \ref{tab:hyperparameters}.
\begin{table*}[htbp]
  \centering
  \captionsetup{font=small, labelfont=bf}
  \caption{Hyperparameter Selection Ranges}
  \label{tab:hyperparameters}
  \renewcommand{\arraystretch}{1.5} 
  \setlength{\tabcolsep}{10pt} 
  \begin{tabular}{p{2cm}p{5cm}p{5cm}}
    \toprule
    \textbf{Model} & \textbf{Hyperparameter} & \textbf{Ranges} \\
    \midrule
    \multirow{9}{*}{General} & Learning rate & \{0.001, 0.005, 0.01\} \\
    & Global round & \{5, 10, 20, 30, 40, 50, 80\} \\
    & Local round & \{5, 10\} \\
    & Local batch size & \{16, 32, 64, 128\} \\
    & Hidden layer & \{5, 10, 50\}\\
    & Optimizer & \{Adam, Sgd\} \\
    & Fraction & \{1\} \\
    & Parameter for Dirichlet distribution &\{1\} for Adult, \{10\} for Compas \\
    & Number of Clients &\{100\} for Adult, \{10\} for Compas, \{50\} for ACSIncome  \\ 
    & Sensitive Group &Female for Adult and Compas, Non-white for ACSIncome\\
    \midrule
    \multirow{1}{*}{FedFaiREE} &Confidence level &\{95\%\} \\
    \midrule
    \multirow{2}{*}{Qdigest} &Accuracy &\{1/$2^7$\} for Adult and  ACSIncome, \{1/$2^{10}$\} for Compas \\
    &Compression factor & \{300\} for Adult and ACSIncome, \{150\} for Compas \\
    \midrule
    \multirow{1}{*}{FedFB} & Step size ($\alpha$) & \{0.005, 0.01, 0.05\} \\
    \midrule
    \multirow{2}{*}{FairFed} & Global step size ($\beta$) & \{0.005, 0.01, 0.05\} \\
    & Local debiasing step size ($\alpha$) & \{0.005, 0.01, 0.05\} \\
    \bottomrule
  \end{tabular}
\end{table*}

We further present a data split sample in Table \ref{split}, where random seed was set to be 0. 
\begin{table*}[h]
\centering
\caption{\textbf{Heterogeneous data distribution on the sensitive attribute.} The client index is sorted by number of Male.}\label{split}
\begin{tabular}{c|c|c|c|c|c}
\toprule
\multicolumn{3}{c}{Minimum ten clients} &\multicolumn{3}{c}{Maximum ten clients} \\
\cmidrule(r){1-3} \cmidrule(r){4-6}
Client id & Male & Female & Client id & Male & Female \\
\midrule
1 & 6 & 41 & 91 & 738 & 118  \\
2 & 6 & 117 & 92 & 863 & 49  \\
3 & 6 & 297 & 93 & 880 & 52  \\
4 & 13 & 35 & 94 & 956 & 147  \\
5 & 20 & 310 & 95 & 961 & 50  \\
6 & 22 & 120 & 96 & 1101 & 35  \\
7 & 24 & 234 & 97 & 1245 & 102  \\
8 & 30 & 70 & 98 & 1250 & 31  \\
9 & 32 & 124 & 99 & 1277 & 180 \\
10 & 33 & 26 & 100 & 1480 & 24  \\
\bottomrule
\end{tabular}
\end{table*}

\subsection{More detailed results}\label{app:more_detail}
In this subsection, we present a more detailed analysis of the experimental results from Section \ref{sec:experiment}. Table \ref{tab:result1} and Table \ref{tab:result2} respectively illustrate the variances in the results obtained from the Adult dataset and the Compas dataset.

\begin{table*}[htbp]
\centering
\caption{\textbf{Results with standard deviation on Adult.}}\label{tab:result1}
{
\begin{tabular}{cccccccccc}
\toprule
& & \multicolumn{4}{c}{\textbf{Adult}} \\
\cmidrule(r){3-6}
{Model} & {\textbf{FedFaiREE}} & {$\alpha$} & {$\overline{ACC}$} & {$\overline{|DEOO|}$} & {$|DEOO|_{95}$} \\
\midrule
\textbf{FedAvg} & {\ding{55}} & {/} & {0.844 (0.003)} & {0.131 (0.030)} & {0.178} \\
 & {\ding{51}} & {0.10} & {0.843 (0.003)} & {\textbf{0.038} (0.026)} & {\textbf{0.083}} \\
\midrule
\multirow{2}{*}{\textbf{FedFB}} & {\ding{55}} & {/} & {0.850 (0.003)} & {0.057 (0.034)} & {0.117} \\
 & {\ding{51}} & {0.10} & {0.850 (0.003)} & {\textbf{0.036} (0.025)} & {\textbf{0.083}}  \\
\midrule
\multirow{2}{*}{\textbf{FairFed}} & {\ding{55}} & {/} & {0.842 (0.003)} & {0.069 (0.034)} & {0.118} \\
 & {\ding{51}} & {0.10} & {0.841 (0.003)} & {\textbf{0.037} (0.026)} & {\textbf{0.081}} \\
\midrule
\end{tabular}}
\end{table*}

\begin{table*}[htbp]
\centering
\caption{\textbf{Results with standard deviation on Compas.}}\label{tab:result2}
{
\begin{tabular}{cccccccccc}
\toprule
& & \multicolumn{4}{c}{\textbf{Compas}} \\
\cmidrule(r){3-6}
{Model} & {\textbf{FedFaiREE}} & {$\alpha$} & {$\overline{ACC}$} & {$\overline{|DEOO|}$} & {$|DEOO|_{95}$} \\
\midrule
\multirow{2}{*}{\textbf{FedAvg}} & {\ding{55}} & {/} & {0.662 (0.011)} & {0.126 (0.056)} & {0.223} \\
 & {\ding{51}} & {0.15} & {0.659 (0.010)} & {\textbf{0.051} (0.044)} & {\textbf{0.137}} \\
\midrule
\multirow{2}{*}{\textbf{FedFB}} & {\ding{55}} & {/} & {0.642 (0.011)} & {0.107 (0.043)} & {0.174} \\
 & {\ding{51}} & {0.15} & {0.641 (0.010)} & {\textbf{0.062} (0.040)} & {\textbf{0.125}} \\
\midrule
\multirow{2}{*}{\textbf{FairFed}} & {\ding{55}} & {/} & {0.648 (0.012)} & {0.097 (0.047)} & {0.166} \\
 & {\ding{51}} & {0.15} & {0.645 (0.011)} & {\textbf{0.047} (0.036)} & {\textbf{0.114}} \\
\midrule
\end{tabular}}
\end{table*}

\begin{table*}[htbp]
\centering
\caption{\textbf{Results on Adult with Parameter for Dirichlet distribution=10.}}\label{tab:result3}
{
\begin{tabular}{cccccccccc}
\toprule
& & \multicolumn{4}{c}{\textbf{Adult}} \\
\cmidrule(r){3-6}
{Model} & {\textbf{FedFaiREE}} & {$\alpha$} & {$\overline{ACC}$} & {$\overline{|DEOO|}$} & {$|DEOO|_{95}$} \\
\midrule
\multirow{2}{*}{\textbf{FedAvg}} & {\ding{55}} & {/} & {0.844 (0.004)} & {0.127 (0.032)} & {0.184} \\
 & {\ding{51}} & {0.10} & {0.843 (0.003)} & {\textbf{0.029} (0.027)} & {\textbf{0.091}} \\
\midrule
\multirow{2}{*}{\textbf{FedFB}} & {\ding{55}} & {/} & {0.845 (0.003)} & {0.057 (0.034)} & {0.117} \\
 & {\ding{51}} & {0.10} & {0.845 (0.003)} & {\textbf{0.036} (0.025)} & {\textbf{0.083}}  \\
\midrule
\multirow{2}{*}{\textbf{FairFed}} & {\ding{55}} & {/} & {0.839 (0.004)} & {0.081 (0.033)} & {0.138} \\
 & {\ding{51}} & {0.10} & {0.838 (0.004)} & {\textbf{0.027} (0.025)} & {\textbf{0.073}} \\
\midrule
\end{tabular}}
\end{table*}
Table \ref{tab:result3} shows the result on adult with parameter for Dirichlet distribution=10. Moreover, we present an analysis of the impact of parameter variations on the experimental results. We consider two parameters——the fairness constraint, $\alpha$, and the confidence coefficient, $\beta$, separately. Figure \ref{fig:abcd1} and \ref{fig:abcd2} shows the result on Adult dataset and Compas dataset, respectively.



\begin{figure*}[htbp]
    \begin{minipage}[b]{0.48\textwidth}
        \centering
        \includegraphics[width=\textwidth]{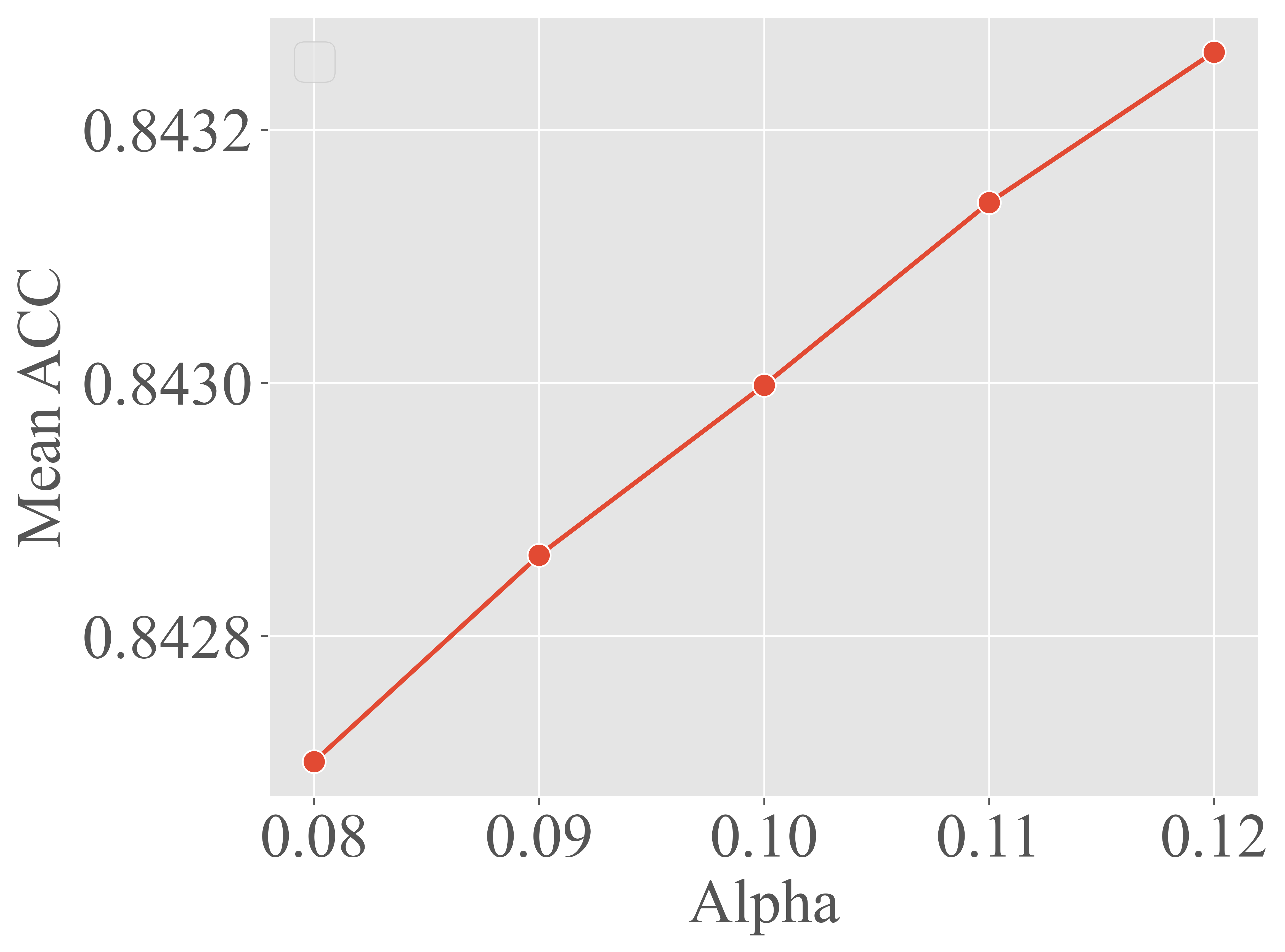}
        \caption*{\textbf{(a)}}
    \end{minipage}
    \hfill
    \begin{minipage}[b]{0.48\textwidth}
        \centering
        \includegraphics[width=\textwidth]{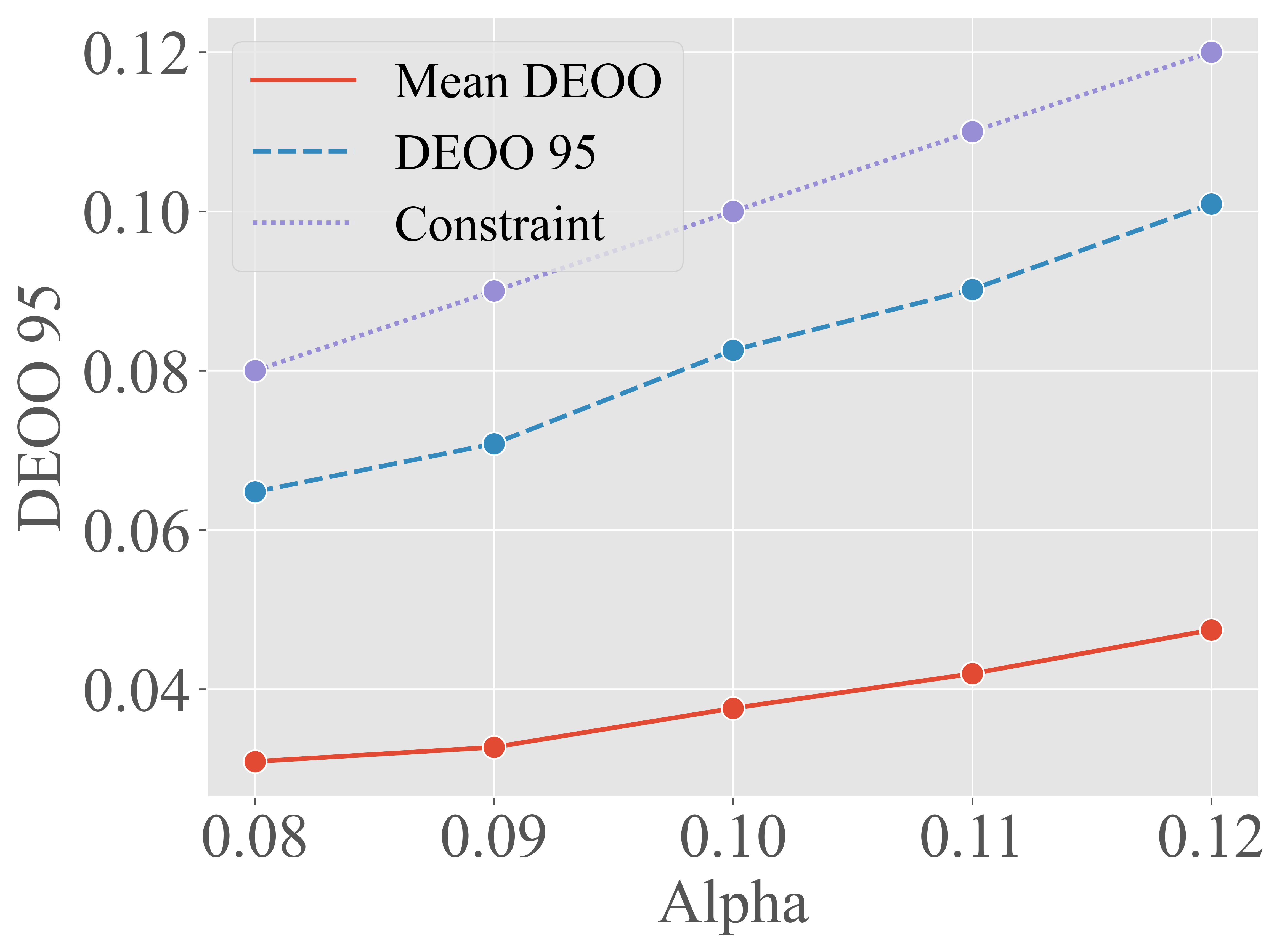}
        \caption*{\textbf{(b)}}
    \end{minipage}


    \begin{minipage}[b]{0.48\textwidth}
        \centering
        \includegraphics[width=\textwidth]{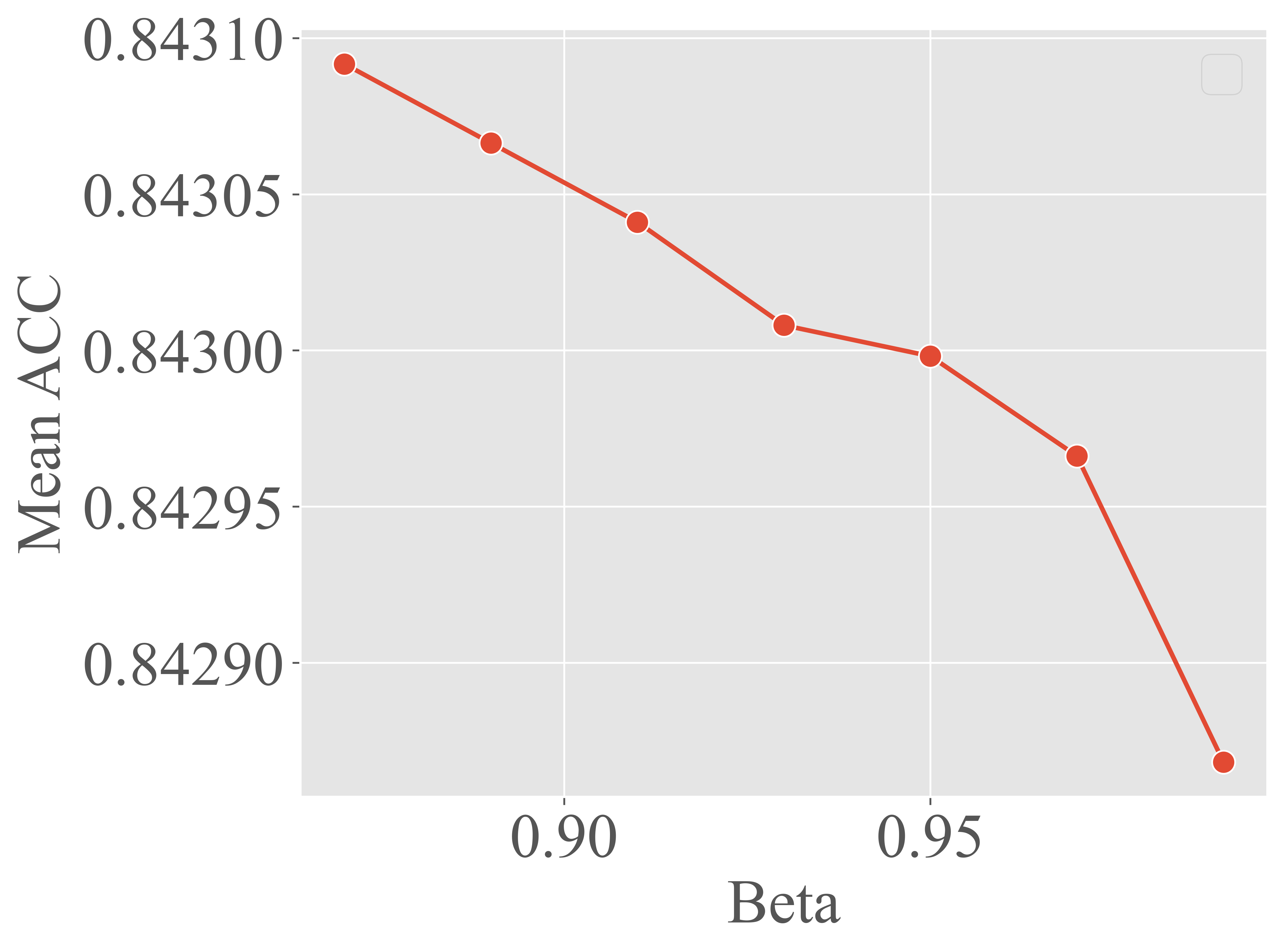}
        \caption*{\textbf{(c)}}
    \end{minipage}
    \hfill
    \begin{minipage}[b]{0.48\textwidth}
        \centering
        \includegraphics[width=\textwidth]{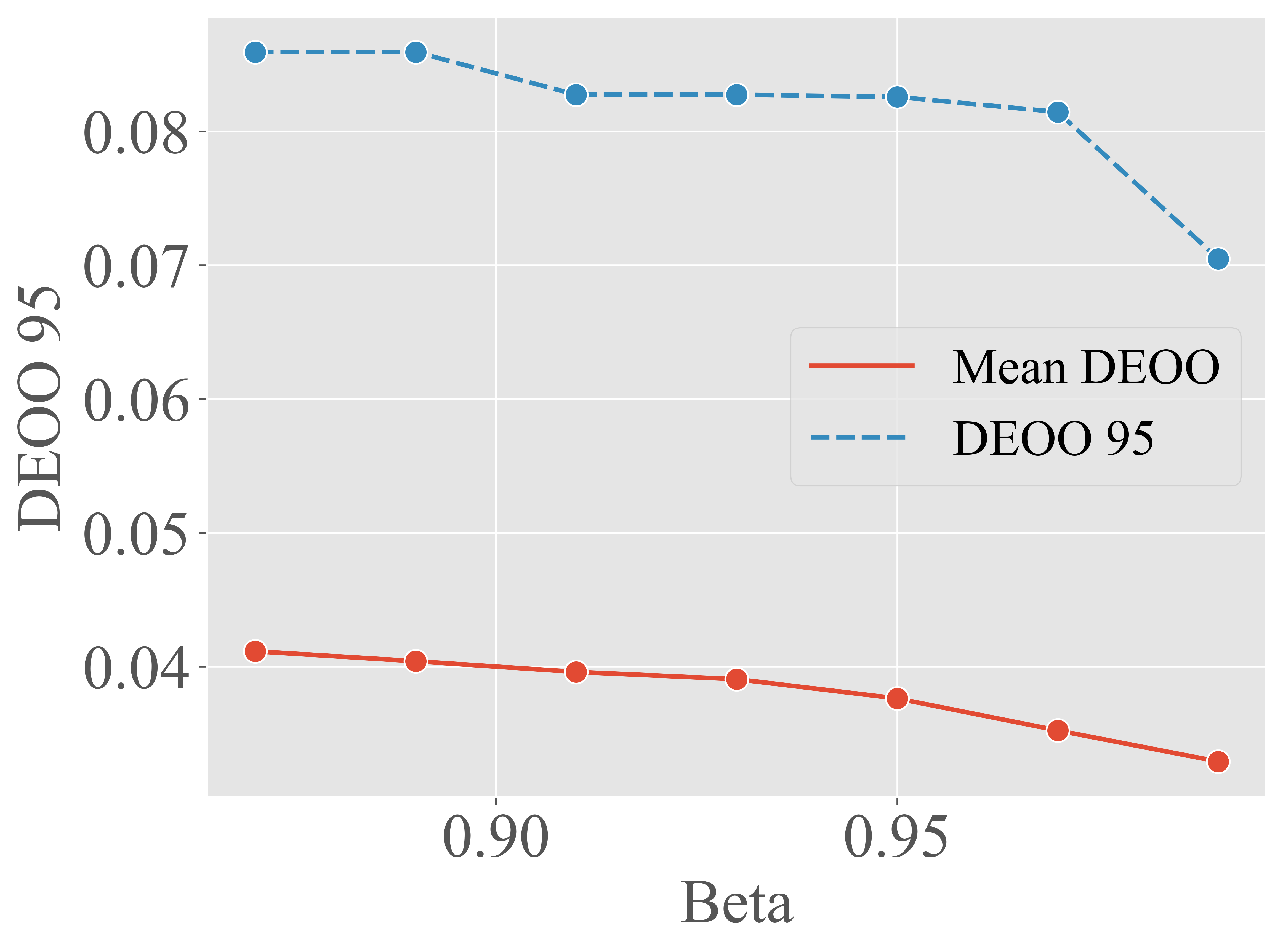}
        \caption*{\textbf{(d)}}
    \end{minipage}
    
    \caption{\textbf{The changes of accuracy, $\overline{|DEOO|}$ and $|DEOO|_{95}$ with respect to $\alpha$ and $\beta$ on Adult.} The other parameters of the experiment are consistent with those in Table \ref{tab:result}.}
    \label{fig:abcd1}
\end{figure*}

\begin{figure*}[htbp]
    \begin{minipage}[b]{0.48\textwidth}
        \centering
        \includegraphics[width=\textwidth]{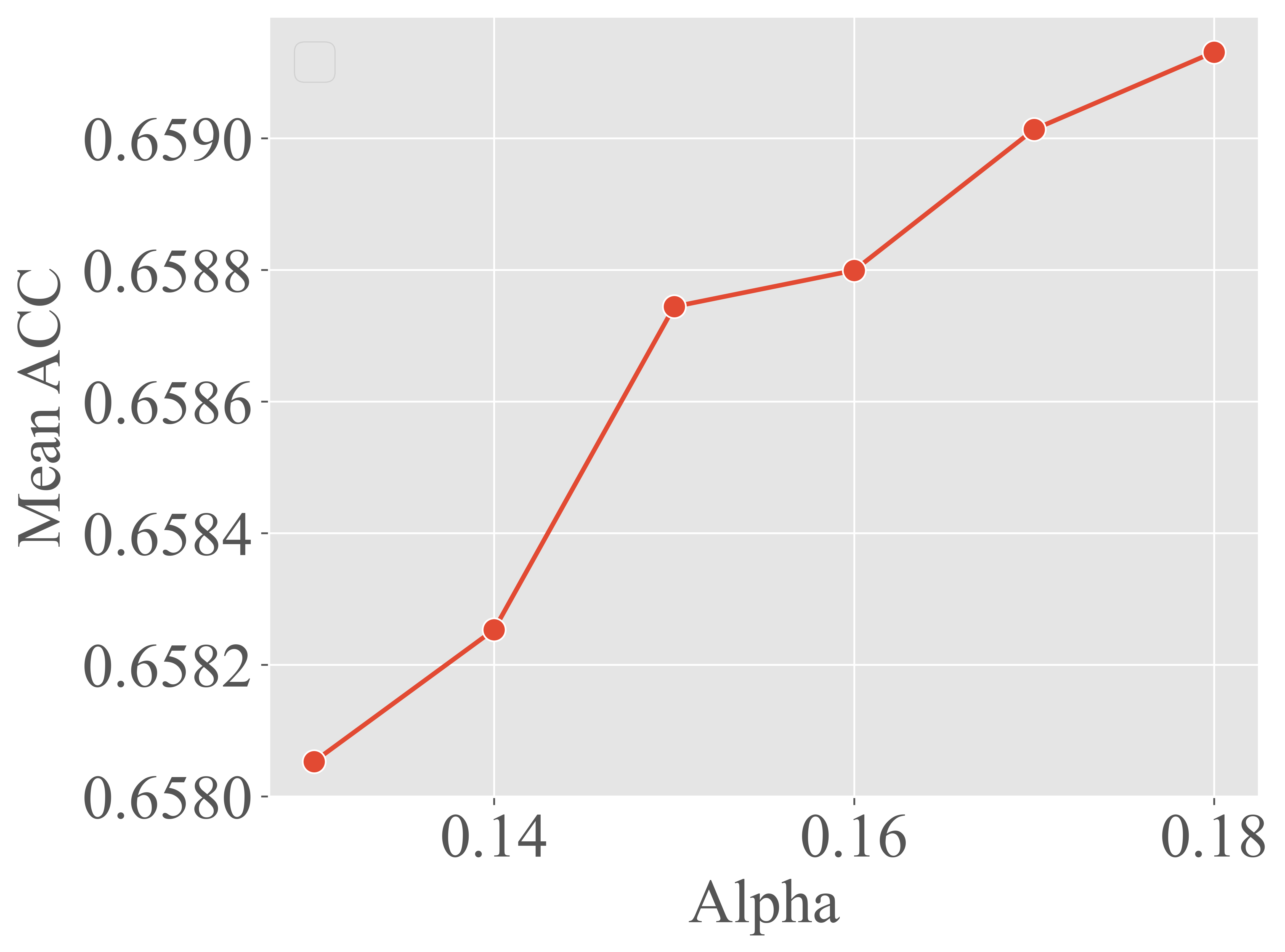}
        \caption*{\textbf{(a)}}
    \end{minipage}
    \hfill
    \begin{minipage}[b]{0.48\textwidth}
        \centering
        \includegraphics[width=\textwidth]{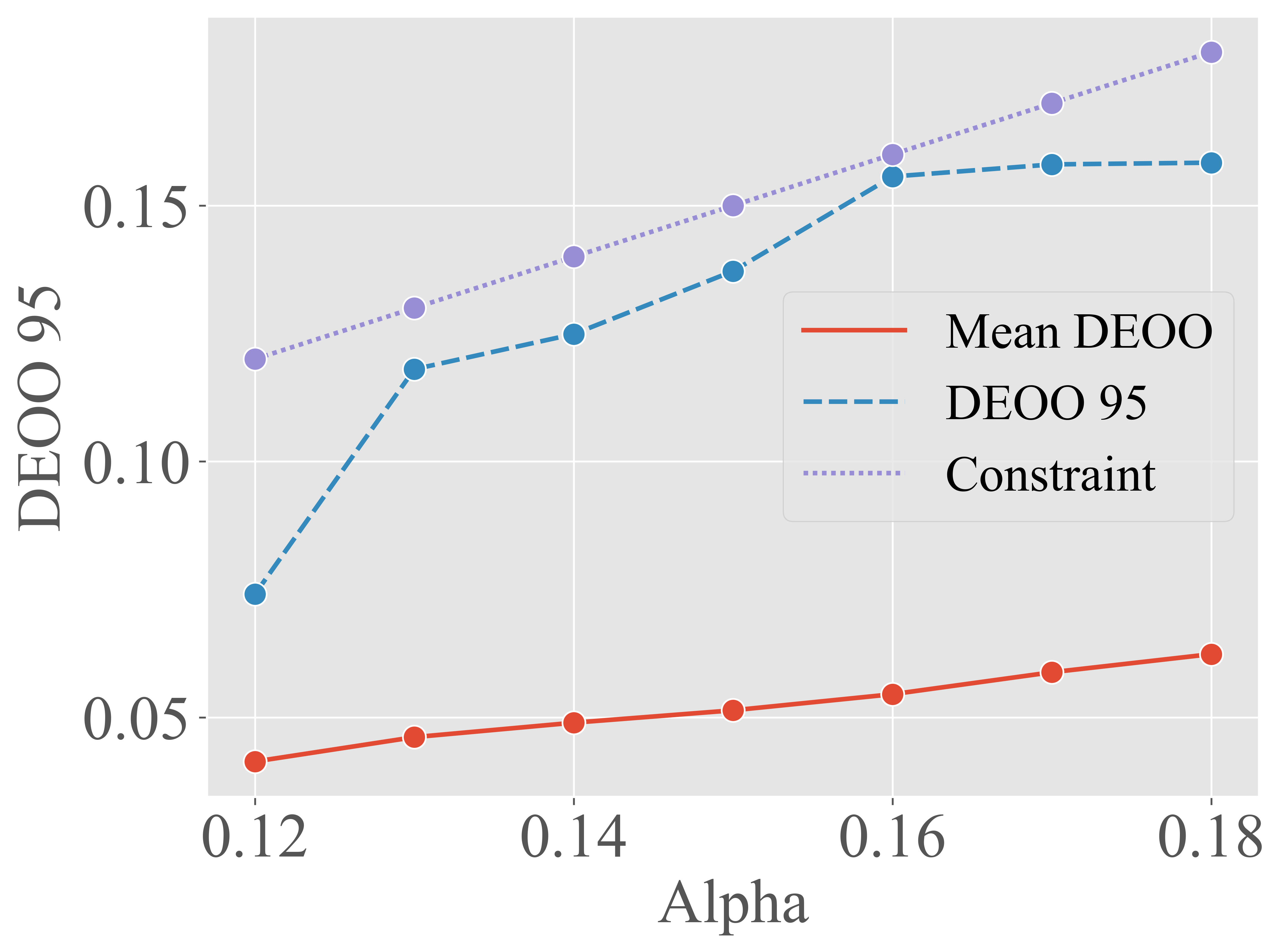}
        \caption*{\textbf{(b)}}
    \end{minipage}


    \begin{minipage}[b]{0.48\textwidth}
        \centering
        \includegraphics[width=\textwidth]{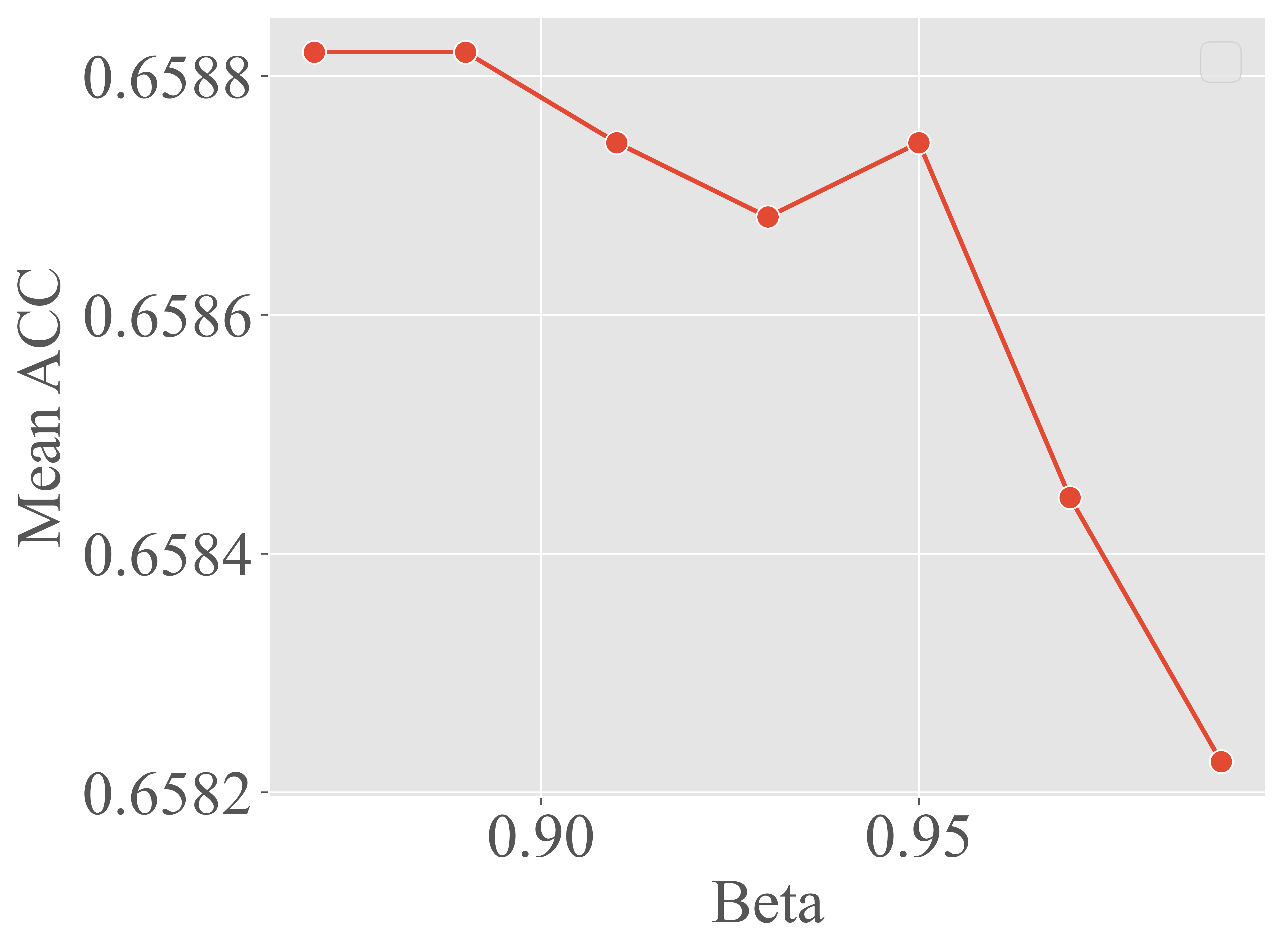}
        \caption*{\textbf{(c)}}
    \end{minipage}
    \hfill
    \begin{minipage}[b]{0.48\textwidth}
        \centering
        \includegraphics[width=\textwidth]{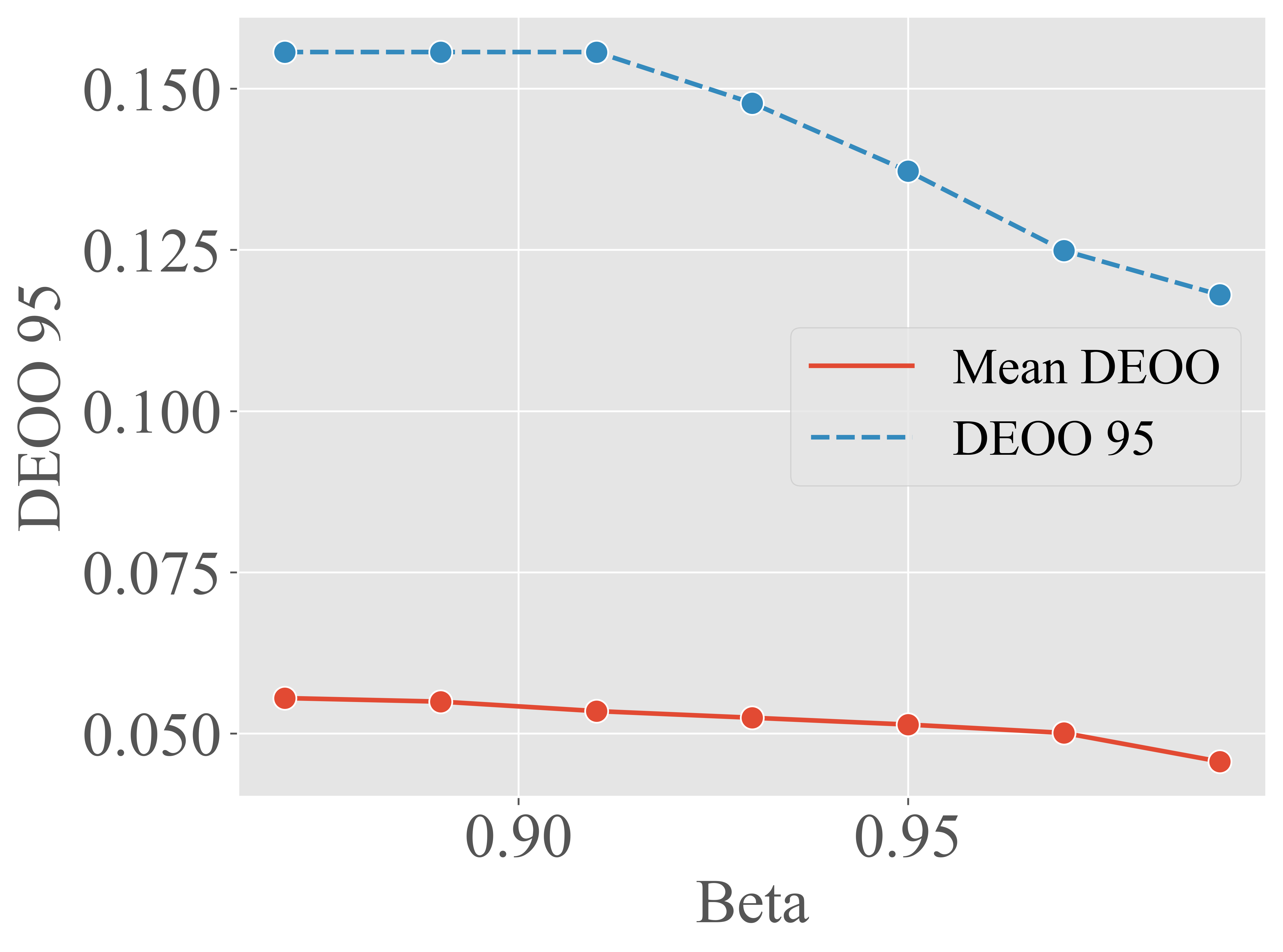}
        \caption*{\textbf{(d)}}
    \end{minipage}
    
    \caption{\textbf{The changes of accuracy, $\overline{|DEOO|}$ and $|DEOO|_{95}$ with respect to $\alpha$ and $\beta$ on Compas.} The other parameters of the experiment are consistent with those in Table \ref{tab:result}.}
    \label{fig:abcd2}
\end{figure*}

\subsection{Further results on DEO}
In this subsection, we conducted experiments using FedFaiREE for DEO, which is a specific algorithm under the FedFaiREE framework designed for DEO as mentioned in Section \ref{sec:deo}. The results are presented in Tables \ref{tab:result_deo1} and \ref{tab:result_deo2}. It's worth noting that FedFaiREE for DEO exhibited favorable performance similar to FedFaiREE for DEOO, showing significant improvements in both DEOO and DPE indicators while maintaining relatively high accuracy.
\begin{table*}[htbp]
\centering
\small
\caption{\textbf{Results of FedFaiREE for DEO on Adult dataset. }We conducted 100 experimental repetitions for each model on both datasets and compared the accuracy and fairness indicators of different models. The ``FedFaiREE" and ``$\alpha$" columns indicate whether FedFaiREE was used or not.``$\overline{ACC}$", ``$\overline{|DEOO|}$" and ``$\overline{|DPE|}$" represent the averages of accuracy, DEOO (defined in Equation \ref{deoo}) and DPE (defined in Equation \ref{dpe}), respectively. ``$|DEOO|_{95}$" and ``$|DPE|_{95}$" represent the 95\% quantile of DEOO and DPE since we set the confidence level of FedFaiREE to 95\% in our experiments. }\label{tab:result_deo1}
\begin{tabular}{cccccccccccccc}
\toprule
& & \multicolumn{6}{c}{\textbf{Adult}} \\
\cmidrule(r){3-8} 
{Model} & {FedFaiREE} & {$\alpha$} & {$\overline{ACC}$} & {$\overline{|DEOO|}$} & {$|DEOO|_{95}$} & {$\overline{|DPE|}$} & {$|DPE|_{95}$}\\
\midrule
\multirow{2}{*}{\textbf{FedAvg}} & {No} & {/} & {0.844 (0.003)} & {0.131 (0.030)} & {0.178} & {0.088 (0.005)} & {0.097} \\
 & {Yes} & {0.10} & {0.843 (0.003)} & {\textbf{0.037} (0.025)} & {\textbf{0.082}} & {\textbf{0.064} (0.007)} & {\textbf{0.075}}\\
\midrule
\multirow{2}{*}{\textbf{FedFB}} & {No} & {/} & {0.850 (0.003)} & {0.057 (0.034)} & {0.117} & {0.066 (0.007)} & {0.077} \\
 & {Yes} & {0.10} & {0.850 (0.003)} & {\textbf{0.036} (0.025)} & {\textbf{0.083}}  & {\textbf{0.061} (0.006)} & {\textbf{0.070}}\\
\midrule
\multirow{2}{*}{\textbf{FairFed}} & {No} & {/} & {0.842 (0.003)} & {0.069 (0.034)} & {0.118} & {0.072 (0.006)} & {0.083}\\
 & {Yes} & {0.10} & {0.841 (0.003)} & {\textbf{0.037} (0.026)} & {\textbf{0.081}}  & {\textbf{0.063} (0.006)} & {\textbf{0.071}} \\
\midrule
\end{tabular}
\end{table*}

\begin{table*}[htbp]
\centering
\small
\caption{\textbf{Results of FedFaiREE for DEO on Compas dataset. }}\label{tab:result_deo2}
\begin{tabular}{cccccccccccccc}
\toprule
& & \multicolumn{6}{c}{\textbf{Compas}} \\
\cmidrule(r){3-8} 
{Model} & {FedFaiREE} & {$\alpha$} & {$\overline{ACC}$} & {$\overline{|DEOO|}$} & {$|DEOO|_{95}$} & {$\overline{|DPE|}$} & {$|DPE|_{95}$}\\
\midrule
\multirow{2}{*}{\textbf{FedAvg}} & {\ding{55}} & {/} & {0.662 (0.011)} & {0.126 (0.056)} & {0.223} & {0.083 (0.032)} & {0.136} \\
 & {\ding{51}} & {0.15} & {0.652 (0.036)} & {\textbf{0.049} (0.045)} & {\textbf{0.137}} & {\textbf{0.028} (0.024)} & {\textbf{0.072}}\\
\midrule
\multirow{2}{*}{\textbf{FedFB}} & {\ding{55}} & {/} & {0.642 (0.011)} & {0.107 (0.043)} & {0.174} & {0.066 (0.028)} & {0.112} \\
 & {\ding{51}} & {0.15} & {0.642 (0.010)} & {\textbf{0.062} (0.040)} & {\textbf{0.125}}  & {\textbf{0.036} (0.024)} & {\textbf{0.081}}\\
\midrule
\multirow{2}{*}{\textbf{FairFed}} & {\ding{55}} & {/} & {0.648 (0.011)} & {0.097 (0.047)} & {0.166} & {0.087 (0.036)} & {0.148}\\
 & {\ding{51}} & {0.15} & {0.642 (0.029)} & {\textbf{0.047} (0.036)} & {\textbf{0.114}}  & {\textbf{0.037} (0.028)} & {\textbf{0.085}} \\
\midrule
\end{tabular}
\end{table*}

\section{Comparison to FaiREE \citep{fairee} and other related works}\label{sec:compare}
Regarding the differences between FedFaiREE and FaiREE, several pivotal distinctions become evident. Primarily, FedFaiREE demonstrates superior adaptability for practical applications. Notably, it incorporates mechanisms to handle label shift scenarios, ensuring model robustness within such distributions, as elucidated in Section 5.1. Furthermore, it's worth noting that FedFaiREE extends considerations to encompass multiple sensitive groups and multiple labels, aligning more closely with practical real-world application scenarios, as discussed in Appendix D.

Another critical difference lies in the setting: FaiREE operates in a centralized environment, assuming homogeneous data across all clients. In contrast, FedFaiREE is expressly tailored for decentralized settings, acknowledging client heterogeneity and effectively addressing the challenges stemming from diverse data distributions and sizes across clients. This tailored approach significantly enhances its adaptability and robustness across various scenarios.

Lastly, while FaiREE relies on specific centralized quantile estimation methods, FedFaiREE adopts approximate quantiles. This adaptation not only facilitates adaptation to distributed data but also fortifies the method's robustness and adaptability.

\subsection{Comparison to other related works}
Differences between FedFaiREE and other fair federated learning methods lie in their approach to addressing fairness concerns. Many methods, akin to this paper, extend the principles of centralized machine learning to decentralized settings, such as FedFB\citep{FedFB}, FedMinMax\citep{papadaki2022minimax}, PFFL\citep{hu2022fair}, and others. These methods primarily focus on introducing fairness penalties in the objective functions and incorporate client reweighting schemes and terms (in objective functions) reweighting schemes that consider global or local fairness. The key divergence between our approach and these methods is that the latter typically converge and provide fairness guarantees only in large-sample scenarios, lacking assurances for fairness in small-sample situations, especially under distribution-free assumptions. Empirical results from Table 1 in this paper demonstrate that compared to FedFaiREE, methods like FedFB, FairFed are not as effective in controlling fairness in small-sample scenarios. Furthermore, as these methods are predominantly in-processing techniques, while FedFaiREE falls under post-processing methods, there is a potential for further integration to achieve improved fairness guarantees as shown in our experiments. Moreover, another significant characteristic of FedFaiREE is its capability to adjust the trade-off between fairness and accuracy according to specific fairness constraints. This control capacity has been demonstrated in numerous experiments, showcasing an ability that other methods lack.

\end{document}